\newcommand{\ARXIV}[2]{#1} 
\title{
\LARGE \bf
EZ-SP: Fast and Lightweight Superpoint-Based 3D Segmentation
}
\author{
  Louis Geist\textsuperscript{1} \quad
  Loic Landrieu\textsuperscript{1} \quad
  Damien Robert\textsuperscript{2} \\[4pt]
  \textsuperscript{1} LIGM, ENPC, IP Paris, Univ Gustave Eiffel, CNRS, Marne-la-Vallée, France \\ 
  \textsuperscript{2} DM$3$L, University of Zurich, Zurich, Switzerland
}
\begin{document}

\maketitle
\thispagestyle{empty}
\pagestyle{empty}

\begin{abstract}
Superpoint-based pipelines provide an efficient alternative to point- or voxel-based 3D semantic segmentation, but are often bottlenecked by their CPU-bound partition step.  
We propose a \textbf{learnable, fully GPU} partitioning algorithm that generates geometrically and semantically coherent superpoints \OVERSEGSPEEDUPVSPCP faster than prior methods.  
Our module is compact (under 60k parameters), trains in under 20 minutes with a differentiable surrogate loss, and requires no handcrafted features.  
Combined with a lightweight superpoint classifier, the full pipeline fits in $<\!2$\,MB of VRAM, scales to multi-million-point scenes, and supports real-time inference.  
With \THROUGHPUTVSPTVTHREE faster inference and \SIZEVSPTVTHREE fewer parameters, \MODELNAME~matches the accuracy of point-based SOTA models across three domains: indoor scans (S3DIS), autonomous driving (KITTI-360), and aerial LiDAR (DALES). 
Code and pretrained models are accessible at \GITHUB{}.
\end{abstract}

\section{Introduction}

Accurate 3D semantic segmentation is critical for robotic perception, enabling tasks such as autonomous driving~\cite{loiseau2022online}, navigation~\cite{xu2015real,busch2025one}, and mapping~\cite{pfaff2007towards}.  
However, balancing accuracy with computational efficiency remains a major challenge.  
Real-world point clouds often contain millions of points and must be processed under strict latency constraints---for instance, rotating automotive LiDAR  typically acquires 1.3M points per second.  
Yet, state-of-the-art (SOTA) models routinely exceed ten million parameters, rely on costly test-time augmentation to reach their performance, and can only process small scenes at a time.  
This gap limits deployment in real-time robotic systems, as well as in AR/VR~\cite{makhataeva2020augmented} and large-scale smart city applications~\cite{beigi2017real}.

A common strategy to reduce complexity is to group points into \emph{superpoints}~\cite{landrieu2018spg,papon2013voxel}: spatially contiguous, geometrically homogeneous regions.  
Reasoning on superpoints rather than individual points dramatically reduces memory and computation, while maintaining SOTA or near-SOTA accuracy~\cite{hui2021superpoint,robert2023efficient}. 
However, the partition stage remains a critical bottleneck: often CPU-bound, slow to tune (each parameter sweep may take hours), and requires complex optimization over handcrafted geometric and radiometric features.
This overhead in runtime and engineering effort has hampered the broader adoption of superpoint-based methods.  

\begin{figure}[t]
\centering
\begin{tikzpicture}
\begin{semilogxaxis}[%
    width=.88\linewidth,
    scale only axis,
    ytick={60,65,70,75},
    yticklabels={60,,70,75},
    xtick={1000,10000,100000,1000000, 10000000},
    xmin=1000,
    xmax=2000000,
    ymin=60,
    ymax=80,
    axis x line*=bottom,
    axis y line*=left,
    grid={major},
    xminorgrids=true,
    minor grid style={gray!15,line width=.25pt},
    legend pos=south east,
    ylabel={mIoU 6-Fold},
    xlabel={Throughput in pts/s},
    ylabel style={xshift=-0cm,yshift=-0.5cm},
    clip marker paths=false,
    clip mode=individual,
    ]
]


\node[draw=none,right,fill=EZSPCOLOR, circle, minimum size = 30*sqrt(0.39), anchor=center,thin, scale=0.2] (ezsp) at (axis cs:1787432,76.1) {};
\node[draw=none, text=EZSPCOLOR!85!black, below left=0mm and -2mm of ezsp] {\small{\bf \underline{\MODELNAME}} (ours)};

\node[draw=none,right,fill=SPTCOLOR, circle, minimum size = 30*sqrt(0.21), anchor=center,thin, scale=0.2] (spt) at (axis cs:338461,76.0) {};
\node[draw=none, text=SPTCOLOR!85!black, left=0mm of spt] {\small{\bf SPT}};


\node[draw=none,right,fill=SPGCOLOR, circle, minimum size = 30*sqrt(0.280)-0 pt, anchor=center,thin, scale=0.2] (spg) at (axis cs:46676,62.1) {};
\node[draw=none, text=SPGCOLOR!85!black, right= 0mm of spg] {\small{\bf SPG}};


\node[draw=none,right,fill=POINTTRANSCOLOR, circle, minimum size = 30*sqrt(46.2)-0pt, anchor=center,line width=1pt, scale=0.2] (pointtrans) at (axis cs:24670,77.7) {};
\node[draw=none, text=black, fill=none] at (pointtrans.center) {\small{\bf PTv3}};

\node[draw=none,right,fill=KPCONVCOLOR, circle, minimum size = 30*sqrt(14.1)-0pt, anchor=center,line width=1pt, scale=0.2] (kpconv) at (axis cs:197456.,69.1) {};
\node[draw=none, text=KPCONVCOLOR!85!black, right= 0mm of kpconv] {\small{\bf KPConv}};

\node[draw=none,right,fill=MINKOCOLOR, circle, minimum size = 30*sqrt(37.9)-0pt, anchor=center,line width=1pt, scale=0.2] (minko) at (axis cs:230006,65.9) {};
\node[draw=none, text=MINKOCOLOR!85!black, below= 0mm of minko] {\small{\bf MinkowskiNet}};

\node[draw=none,right,fill=POINTNEXTCOLOR, circle, minimum size = 30*sqrt(0.8)-0pt, anchor=center,line width=1pt, text=white, scale=0.2] (ptna) at (axis cs:10931 ,68) {};
\node[draw=none,right,fill=POINTNEXTCOLOR, circle, minimum size = 30*sqrt(3.8)-1pt, anchor=center,line width=1pt, text=white, scale=0.2] (ptnb) at (axis cs:10689 ,71.5) {};
\node[draw=none,right,fill=POINTNEXTCOLOR, circle, minimum size = 30*sqrt(41.6)-0pt, anchor=center,line width=1pt, text=white, scale=0.2] (ptnd) at (axis cs:4131 ,74.9) {};

\node[draw=none, text=POINTNEXTCOLOR, right= 0mm of ptna] {\small{\bf PointNeXt}};

\node[black, left=0mm of ptna] {\scriptsize{\bf {S}}};
\node[white] at (ptnb.center) {\scriptsize{\bf B}};
\node[white] at (ptnd.center) {\scriptsize{\bf XL}};

\draw [very thick, POINTNEXTCOLOR] (ptna) -- (ptnb)  -- (ptnd);

\node[draw=none,right,fill=STRATTRANSCOLOR, circle, minimum size = 30*sqrt(8)-1pt, anchor=center,line width=1pt, scale=0.2] (strat) at (axis cs:10352,74.9) {};
\node[draw=none, text=STRATTRANSCOLOR!85!black, right= 0mm of strat] {\small{\bf Stratified Trans.}};

\node[draw=none,right,fill=SPNETCOLOR, circle, minimum size = 30*sqrt(0.33)-1pt, thin, anchor=center,line width=1pt, scale=0.2] (spnet) at (axis cs:74153,68.7) {};
\node[draw=none, text=SPNETCOLOR!85!black, above= 0mm of spnet] {\small{\bf SPNet}};

\draw [dashed] (axis cs:1300000,60) -- (axis cs:1300000,74.5);
\node [draw=none, anchor=east] at (axis cs:1300000,61.3) {\small \faCar}; 

\draw [fill=black!0!white, draw=black!30] (axis cs:1000, 60.0) rectangle (axis cs:22000,67.3);

\node[draw=black,right, circle, fill=white, minimum size = 30*sqrt(50)-1pt, anchor=south,line width=1pt, scale=0.2] at (axis cs:2500,62.3) (n1){};
\node[draw=black,right, circle, fill=black!50!white, minimum size = 30*sqrt(5)-1pt,anchor=south,line width=1pt, scale=0.2] at (axis cs:2500,62.3) (n2) {};
\node[draw=black,right, circle, fill=white, minimum size = 30*sqrt(0.5)-1pt,  anchor=south, thin, scale=0.2] at (axis cs:2500,62.3) (n4) {};

\draw [-] (n1.north) -- ++ (1cm,0cm);
\draw [-] (n2.north) -- ++ (1cm,0cm);
\draw [-] (n1.north) -- ++ (1cm,0cm);
\draw [-] (n4.north) -- ++ (1cm,0cm);

\node[draw=none,anchor=north, right=of n1.north] (x) {\footnotesize 5$\times 10^7$};
\node[draw=none,anchor=west, right=of n2.north] (x) {\footnotesize 5$\times 10^6$};
\node[draw=none,anchor=west, right=of n4.north] (x) {\footnotesize 5$\times 10^5$};
\node[draw=none,anchor=center, below=of n4, yshift=1cm] (x) { \footnotesize \;\;\;\shortstack{model size \\ \# param.}};

\draw [thick, <->] (pointtrans) --node[midway, above]{\footnotesize \THROUGHPUTVSPTVTHREE} (pointtrans -| ezsp);

\draw [thick, <->] (spt) --node[midway, above]{\footnotesize \THROUGHPUTVSSPT} (spt -| ezsp);
     
\end{semilogxaxis}
\end{tikzpicture}
\vspace{-0mm}
\caption{
\textbf{Inference Speed \vs Performance \vs Model Size.}  
Comparison of end-to-end pipelines (preprocessing to inference) on S3DIS.  
\MODELNAME~achieves near-SOTA accuracy with only $400$k parameters, while being orders of magnitude faster than point-based networks, and the only method to match the acquisition rate of automotive LiDAR (\faCar).  
}
\label{fig:teaser}
\end{figure}
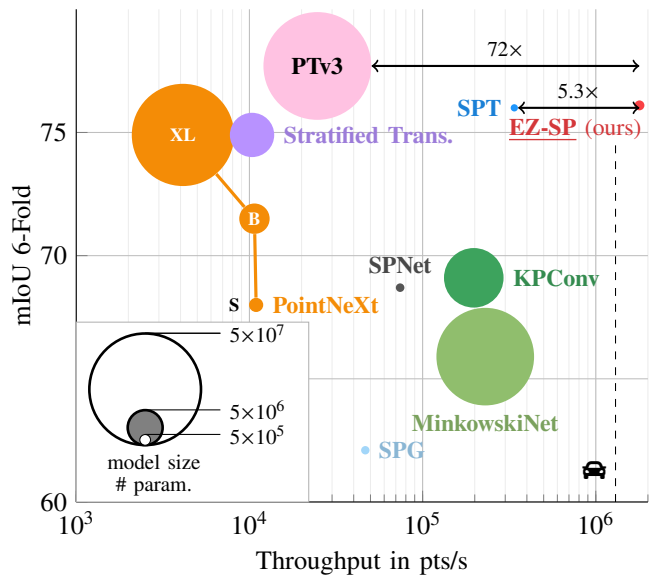

To overcome these limitations, we introduce \MODELNAME~ (easy-superpoints): a lightweight, \emph{learnable, fully GPU} pipeline to partition and segment raw point clouds into superpoints \emph{on the fly}.  
A $60$k-parameter backbone produces low-dimensional embeddings optimized to detect \emph{semantic transitions} directly from raw point clouds and without handcrafted features.  
We then use a massively parallel clustering algorithm to produce semantically homogeneous, geometrically simple superpoints---\PARTITIONALGOSPEEDUPVSPCP faster than the widely used Parallel Cut Pursuit~\cite{raguet2019parallel,landrieu2017cut}.  
These superpoints are then processed by a $330$k-parameter superpoint segmentation network, yielding dense labels at a fraction of the cost of point- or voxel-based approaches.  
As shown in \cref{fig:teaser}, our method matches or surpasses SOTA accuracy while delivering over \THROUGHPUTVSPTVTHREE faster end-to-end inference than PointTransformer-v3~\cite{wu2024point}.  

\para{Key advantages and contributions.} Our approach is:
\begin{compactitem}
    \item \textbf{Fast}: partitioning a raw point cloud is over \OVERSEGSPEEDUPVSPCP faster than the fastest graph-based approaches; end-to-end inference is over \THROUGHPUTVSPTVTHREE faster than recent SOTA models.  
    \item \textbf{Lightweight}: the entire model fits in $<\!2$\,MB VRAM and trains from scratch in $3$\,h on a single A6000.
    \item \textbf{Versatile}: The same configuration generalizes across indoor (S3DIS), mobile (KITTI-360), and aerial mapping (DALES) with minimal hyperparameter tuning. Our model runs in real time on small scans ($>$1.3M pt$/$s) and scales to city-scale point clouds in a single pass.
    \item \textbf{Scalable}: GPU-only segmentation of tens of millions of points in a single pass on a single consumer-grade GPU.  
\end{compactitem}

\section{Related Work}
This section presents an overview of 3D deep learning with a focus on superpoint-based approaches.

\begin{figure*}[t]
\centering

\resizebox{\linewidth}{!}{

\def\xinput{0}
\def\xpointencoder{2.75}
\def\xedge{5}
\def\xlossedge{7.75}
\def\xcluster{7.75}
\def\xsp{9.75}
\def\xspt{12.75}
\def\xpredict{15}
\def\ymain{0}
\def\ylegend{-2}
\def\yloss{1.6}

\def\winput{3.5}
\def\wnetwork{1.0}
\definecolor{MODELCOLOR}{RGB}{207, 159, 255}
\definecolor{LOSSCOLOR}{RGB}{255, 100, 100}
\tikzset{
  stagebox/.style={
    draw=violet, thick,
    fill=violet!10,
    rounded corners=2pt,
    drop shadow={shadow scale=1, opacity=.25}
  }
}

\begin{tikzpicture}
\path[use as bounding box] (0.7, -2.5) rectangle (17, 2.1);%
\node [draw=none] (input) at (\xinput, \ymain) {
\resizebox{\winput cm}{!}{
\tikzset{
  point/.style={
    circle,                     
    ball color=blue!70!black,   
    shading angle=210,           
    inner sep= \pointsize pt,              
    postaction={draw=white!80!black,line width=.5pt,opacity=.5} 
  }
}

\tikzset{
  edge/.style={
    line width=0.7pt,
    draw=black!70,
    line cap=round,
    postaction={
      draw=white,
      opacity=0.4,          
      line width=1pt,
      line cap=round        
    }
  }
}

\tikzset{
  interedge/.style={
  }
}

\newcommand{\bbox}[2]{%
}

\newcommand{\bboxc}[7]{%
}

\newcommand{\pcnode}[7]{%
  \pgfmathsetmacro{\r}{(#2-\xtable)/(-\xtable+1)}              
  \pgfmathsetmacro{\g}{#3}              
  \pgfmathsetmacro{\b}{(#4+1)/2}        
  \node[point,
        shade,shading=ball,
        ball color={rgb,1:red,\r;green,\g;blue,\b}
       ] (#1) at (#2,#3,#4) {};
}

\def\npoint{6} 
\def\pointsize{3.7}
\pgfmathsetmacro{\step}{1/(\npoint)}
\def\xtable{-1.5}
\pgfmathsetmacro{\gapx}{\step}          
\pgfmathsetmacro{\tblZ}{0.0+\step}        
\pgfmathtruncatemacro{\nTblLeg}{\npoint + int(round(\tblZ*\npoint))}

\tikzset{
 xedge/.style={
shorten <= -\pointsize pt
}}

\tikzset{
 zedge/.style={
shorten >=-0.5*\pointsize pt
}}

\tikzset{
 yedge/.style={
shorten <=-0.5*\pointsize pt
}}

\def\clrLeg {red,0.36;green,0.18;blue,0.85}  
\def\clrFlat{red,0.05;green,0.70;blue,0.95}  
\def\clrVert{red,0.25;green,0.90;blue,0.35}  

  \tdplotsetmaincoords{70}{150}

  \begin{tikzpicture}[tdplot_main_coords,scale=4]

\foreach \L/\dx/\dy in {0/0/0,1/1/0,2/0/1,3/1/1}{
  \pgfmathsetmacro{\cx}{\xtable+\gapx+\dx}    
  \pgfmathsetmacro{\cy}{\dy}            
  \foreach \k in {1,...,\nTblLeg}{
    \pgfmathsetmacro{\z}{\tblZ - \k/\npoint}
    \pcnode{tleg_\L_\k}{\cx}{\cy}{\z}{table}{leg}{\L+1}
  }
    \bboxc{\cx-\step/2}{\cx+\step/2}
      {\cy-\step/2}{\cy+\step/2}
      {-1}{\tblZ}{\L+1}
}

\foreach \L in {0,...,3}{
  \foreach \k in {1,...,\nTblLeg}{
    \ifnum\k<\nTblLeg
      \pgfmathtruncatemacro{\kp}{\k+1}
      \draw[edge, zedge] (tleg_\L_\k) -- (tleg_\L_\kp);
    \fi
  }
}

\foreach \i in {0,...,\npoint}{
  \foreach \j in {0,...,\npoint}{
    \pgfmathsetmacro{\x}{\xtable+\gapx + \i/\npoint}
    \pgfmathsetmacro{\y}{\j/\npoint}
    \pcnode{tbl_\i_\j}{\x}{\y}{\tblZ}{table}{flat}{5}
  }
}

 \bbox{\xtable+\gapx-\step/2}{\xtable+\gapx+1+\step/2} 
     {-\step/2}{1+\step/2}                            
     {\tblZ-\step/4}{\tblZ+\step/4}                   
     {5}   

\draw[edge, zedge, interedge] (tbl_0_0)         -- (tleg_0_1);
\draw[edge, zedge, interedge] (tbl_\npoint_0)   -- (tleg_1_1);
\draw[edge, zedge, interedge] (tbl_0_\npoint)   -- (tleg_2_1);
\draw[edge, zedge, interedge] (tbl_\npoint_\npoint) -- (tleg_3_1);

\foreach \i in {0,...,2}{
  \foreach \j in {0,...,2}{
    \pgfmathsetmacro{\x}{\xtable+\gapx + \i/\npoint}
    \pgfmathsetmacro{\y}{\j/\npoint}
    \pcnode{tbl_\i_\j}{\x}{\y}{\tblZ}{table}{flat}{5}
  }
}

\foreach \i in {0,...,\npoint}{
  \foreach \j in {0,...,\npoint}{
    \ifnum\i<\npoint
      \pgfmathtruncatemacro{\ip}{\i+1}
      \draw[edge, xedge] (tbl_\i_\j) -- (tbl_\ip_\j);
    \fi
    \ifnum\j<\npoint
      \pgfmathtruncatemacro{\jp}{\j+1}
      \draw[edge, yedge] (tbl_\i_\j) -- (tbl_\i_\jp);
    \fi
  }
}

\foreach \l/\cx/\cy in {0/0/0,1/1/0,2/0/1,3/1/1}{
  \foreach \k in {1,...,\npoint}{
    \pgfmathsetmacro{\z}{-\k/\npoint}

    \pcnode{leg_\l_\k}{\cx}{\cy}{\z}{chair}{leg}{6+\l}
  }
  \bboxc{\cx-\step/2}{\cx+\step/2}   
     {\cy-\step/2}{\cy+\step/2}   
     {-1}{0.0}                  
     {6+\l} 
}

\foreach \L in {0,...,3}{               
  \foreach \k in {1,...,\npoint}{
    \ifnum\k<\npoint
      \pgfmathtruncatemacro{\kp}{\k+1}
      \draw[edge, zedge] (leg_\L_\k) -- (leg_\L_\kp);
    \fi
  }
}

\foreach \j in {1,...,\npoint}{
\pgfmathsetmacro{\jp}{\j+1}
\draw[edge, shorten >= 2*\pointsize pt, interedge] (leg_2_\j) -- (tleg_3_\jp);
\draw[edge, shorten >= 2*\pointsize pt, interedge]  (leg_0_\j) --  (tleg_1_\jp);
}

\foreach \i in {0,...,\npoint}{
  \foreach \j in {0,...,\npoint}{
    \pgfmathsetmacro{\x}{\i/\npoint}
    \pgfmathsetmacro{\y}{\j/\npoint}
    \pcnode{seat_\i_\j}{\x}{\y}{0}{chair}{flat}{10}
  }
}
\bbox{-\step/2}{1+\step/2}    
     {-\step/2}{1+\step/2}    
     {-\step/4}{\step/4}      
     {10}

\foreach \i in {0,...,\npoint}{
  \foreach \j in {0,...,\npoint}{
    \ifnum\i<\npoint
      \pgfmathtruncatemacro{\ip}{\i+1}
      \draw[edge, xedge] (seat_\i_\j) -- (seat_\ip_\j);
    \fi
    \ifnum\j<\npoint
      \pgfmathtruncatemacro{\jp}{\j+1}
      \draw[edge, yedge] (seat_\i_\j) -- (seat_\i_\jp);
    \fi
  }
}

\foreach \j in {0,...,\npoint}{
  \foreach \k in {1,...,\npoint}{
    \pgfmathsetmacro{\y}{\j/\npoint}
    \pgfmathsetmacro{\z}{\k/\npoint}
    \pcnode{back_\j_\k}{1}{\y}{\z}{chair}{vertflat}{11}
  }
}
\bbox{1}{1+\step}             
     {+\step/2}{1+\step/2}                   
     {2*\step/2}{1+\step/2}           
     {11}

\foreach \j in {0,...,\npoint}{
  \foreach \k in {1,...,\npoint}{
    \ifnum\j<\npoint
      \pgfmathtruncatemacro{\jp}{\j+1}
      \draw[edge, yedge] (back_\j_\k) -- (back_\jp_\k);
    \fi
    \ifnum\k<\npoint
      \pgfmathtruncatemacro{\kp}{\k+1}
      \draw[edge, zedge] (back_\j_\kp) -- (back_\j_\k);
    \fi
  }
}

\foreach \j in {0,...,\npoint}{
  \draw[edge, zedge, interedge] (seat_\npoint_\j) -- (back_\j_1);
}

\draw[edge, zedge, interedge] (seat_0_0)               -- (leg_0_1);
\draw[edge, zedge, interedge] (seat_\npoint_0)         -- (leg_1_1);
\draw[edge, zedge, interedge] (seat_0_\npoint)         -- (leg_2_1);
\draw[edge, zedge, interedge] (seat_\npoint_\npoint)   -- (leg_3_1);

\foreach \i in {0,...,2}{
  \foreach \j in {0,...,2}{
    \pgfmathsetmacro{\x}{\i/\npoint}
    \pgfmathsetmacro{\y}{\j/\npoint}
    \pcnode{seat_\i_\j}{\x}{\y}{0}{chair}{flat}{10}
  }
}
\pgfmathtruncatemacro{\kZero}{\npoint + int(round(\tblZ*\npoint)) - \npoint} 
\foreach \j in {0,...,\npoint}{
  \pgfmathsetmacro{\y}{\j/\npoint}
}

\draw[edge, interedge]  (seat_0_\npoint) --  (tleg_3_1);
\draw[edge, interedge]  (seat_0_0) --  (tleg_1_1);

  \end{tikzpicture}}
};

\node at (\xinput+0.5,\ylegend) {%
\begin{minipage}{\winput cm}
\centering 
\scriptsize
\begin{tabular}{r@{\quad}l}
  \begin{tikzpicture}[baseline=-.5ex]
  \shade[left color=red, right color=blue, middle color=green, shading angle=45] (0,0) circle (0.15cm);
  \end{tikzpicture}%
  &
  point 3D position
  \\
  \tikz[baseline=-.5ex]{\draw [thick, draw=black!50] (0,0) -- (0.25,0);}
    &
  point adjacency
  \end{tabular}
  \end{minipage}
};
\node [draw=MODELCOLOR, fill=MODELCOLOR!20, very thick, trapezium, text width=1cm, minimum height= \wnetwork cm, rotate=-90,trapezium angle=70,text centered, inner sep=1pt] (pointencoder) at (\xpointencoder,\ymain)
{\rotatebox{+90}{\scriptsize \shortstack{Point\\encoder\\ $60$k param.}}};
\node [draw=none] (edge) at (\xedge, \ymain) { \resizebox{\winput cm}{!}{
\tikzset{
  point/.style={
    circle,                     
    ball color=blue!70!black,   
    shading angle=210,           
    inner sep= \pointsize pt,              
    postaction={draw=white!80!black,line width=.5pt,opacity=.5} 
  }
}

\tikzset{
  edge/.style={
    line width=0.7pt,
    draw=black!70,
    line cap=round,
    postaction={
      draw=white,
      opacity=0.4,          
      line width=1pt,
      line cap=round        
    }
  }
}

\tikzset{
  interedge/.style={
  draw=red,
  line width=2pt,
  }
}

\tikzset{
  bboxface/.style={
    line width=0.3pt,
    fill opacity=0.15,
    draw opacity=0.6
  }
}
\newcommand{\bbox}[7]{
}

\newcommand{\bboxc}[7]{%
}

\def\clrLeg {red,0.36;green,0.18;blue,0.85}  
\def\clrFlat{red,0.05;green,0.70;blue,0.95}  
\def\clrVert{red,0.25;green,0.90;blue,0.35}  

\def\noise{0.20}

\newcommand{\pcnode}[7]{%
  \ifstrequal{#6}{leg}{
      \def\rb{0.36}\def\gb{0.18}\def\bb{0.85}%
    }{%
      \ifstrequal{#6}{flat}{
          \def\rb{0.05}\def\gb{0.70}\def\bb{0.95}%
        }{
          \def\rb{0.25}\def\gb{0.90}\def\bb{0.35}%
        }%
    }%
  \pgfmathsetmacro{\dr}{\noise*(rand-0.5)}
  \pgfmathsetmacro{\r}{max(min(\rb+\dr,1),0)}
  
  \pgfmathsetmacro{\dg}{\noise*(rand-0.5)}
  \pgfmathsetmacro{\g}{max(min(\gb+\dg,1),0)}
  
  \pgfmathsetmacro{\db}{\noise*(rand-0.5)}
  \pgfmathsetmacro{\b}{max(min(\bb+\db,1),0)}
  \node[
    point,
    shade, shading=ball,
    ball color={rgb,1:red,\r;green,\g;blue,\b}
  ] (#1) at (#2,#3,#4) {};
}

\def\npoint{6} 
\def\pointsize{3.7}
\pgfmathsetmacro{\step}{1/(\npoint)}
\def\xtable{-1.5}
\pgfmathsetmacro{\gapx}{\step}          
\pgfmathsetmacro{\tblZ}{0.0+\step}        
\pgfmathtruncatemacro{\nTblLeg}{\npoint + int(round(\tblZ*\npoint))}

\tikzset{
 xedge/.style={
shorten <= -\pointsize pt
}}

\tikzset{
 zedge/.style={
shorten >=-0.5*\pointsize pt
}}

\tikzset{
 yedge/.style={
shorten <=-0.5*\pointsize pt
}}

\def\clrLeg {red,0.36;green,0.18;blue,0.85}  
\def\clrFlat{red,0.05;green,0.70;blue,0.95}  
\def\clrVert{red,0.25;green,0.90;blue,0.35}  

  \tdplotsetmaincoords{70}{150}

  \begin{tikzpicture}[tdplot_main_coords,scale=4]

\foreach \L/\dx/\dy in {0/0/0,1/1/0,2/0/1,3/1/1}{
  \pgfmathsetmacro{\cx}{\xtable+\gapx+\dx}    
  \pgfmathsetmacro{\cy}{\dy}            
  \foreach \k in {1,...,\nTblLeg}{
    \pgfmathsetmacro{\z}{\tblZ - \k/\npoint}
    \pcnode{tleg_\L_\k}{\cx}{\cy}{\z}{table}{leg}{\L+1}
  }
    \bboxc{\cx-\step/2}{\cx+\step/2}
      {\cy-\step/2}{\cy+\step/2}
      {-1}{\tblZ}{\L+1}
}

\foreach \L in {0,...,3}{
  \foreach \k in {1,...,\nTblLeg}{
    \ifnum\k<\nTblLeg
      \pgfmathtruncatemacro{\kp}{\k+1}
      \draw[edge, zedge] (tleg_\L_\k) -- (tleg_\L_\kp);
    \fi
  }
}

\foreach \i in {0,...,\npoint}{
  \foreach \j in {0,...,\npoint}{
    \pgfmathsetmacro{\x}{\xtable+\gapx + \i/\npoint}
    \pgfmathsetmacro{\y}{\j/\npoint}
    \pcnode{tbl_\i_\j}{\x}{\y}{\tblZ}{table}{flat}{5}
  }
}

 \bbox{\xtable+\gapx-\step/2}{\xtable+\gapx+1+\step/2} 
     {-\step/2}{1+\step/2}                            
     {\tblZ-\step/4}{\tblZ+\step/4}                   
     {5}   

\draw[edge, zedge, interedge] (tbl_0_0)         -- (tleg_0_1);
\draw[edge, zedge, interedge] (tbl_\npoint_0)   -- (tleg_1_1);
\draw[edge, zedge, interedge] (tbl_0_\npoint)   -- (tleg_2_1);
\draw[edge, zedge, interedge] (tbl_\npoint_\npoint) -- (tleg_3_1);

\foreach \i in {0,...,2}{
  \foreach \j in {0,...,2}{
    \pgfmathsetmacro{\x}{\xtable+\gapx + \i/\npoint}
    \pgfmathsetmacro{\y}{\j/\npoint}
    \pcnode{tbl_\i_\j}{\x}{\y}{\tblZ}{table}{flat}{5}
  }
}

\foreach \i in {0,...,\npoint}{
  \foreach \j in {0,...,\npoint}{
    \ifnum\i<\npoint
      \pgfmathtruncatemacro{\ip}{\i+1}
      \draw[edge, xedge] (tbl_\i_\j) -- (tbl_\ip_\j);
    \fi
    \ifnum\j<\npoint
      \pgfmathtruncatemacro{\jp}{\j+1}
      \draw[edge, yedge] (tbl_\i_\j) -- (tbl_\i_\jp);
    \fi
  }
}

\foreach \l/\cx/\cy in {0/0/0,1/1/0,2/0/1,3/1/1}{
  \foreach \k in {1,...,\npoint}{
    \pgfmathsetmacro{\z}{-\k/\npoint}

    \pcnode{leg_\l_\k}{\cx}{\cy}{\z}{chair}{leg}{6+\l}
  }
  \bboxc{\cx-\step/2}{\cx+\step/2}   
     {\cy-\step/2}{\cy+\step/2}   
     {-1}{0.0}                  
     {6+\l} 
}

\foreach \L in {0,...,3}{               
  \foreach \k in {1,...,\npoint}{
    \ifnum\k<\npoint
      \pgfmathtruncatemacro{\kp}{\k+1}
      \draw[edge, zedge] (leg_\L_\k) -- (leg_\L_\kp);
    \fi
  }
}

\foreach \j in {1,...,\npoint}{
\pgfmathsetmacro{\jp}{\j+1}
\draw[edge, shorten >= 2*\pointsize pt, interedge] (leg_2_\j) -- (tleg_3_\jp);
\draw[edge, shorten >= 2*\pointsize pt, interedge]  (leg_0_\j) --  (tleg_1_\jp);
}

\foreach \i in {0,...,\npoint}{
  \foreach \j in {0,...,\npoint}{
    \pgfmathsetmacro{\x}{\i/\npoint}
    \pgfmathsetmacro{\y}{\j/\npoint}
    \pcnode{seat_\i_\j}{\x}{\y}{0}{chair}{flat}{10}
  }
}
\bbox{-\step/2}{1+\step/2}    
     {-\step/2}{1+\step/2}    
     {-\step/4}{\step/4}      
     {10}

\foreach \i in {0,...,\npoint}{
  \foreach \j in {0,...,\npoint}{
    \ifnum\i<\npoint
      \pgfmathtruncatemacro{\ip}{\i+1}
      \draw[edge, xedge] (seat_\i_\j) -- (seat_\ip_\j);
    \fi
    \ifnum\j<\npoint
      \pgfmathtruncatemacro{\jp}{\j+1}
      \draw[edge, yedge] (seat_\i_\j) -- (seat_\i_\jp);
    \fi
  }
}

\foreach \j in {0,...,\npoint}{
  \foreach \k in {1,...,\npoint}{
    \pgfmathsetmacro{\y}{\j/\npoint}
    \pgfmathsetmacro{\z}{\k/\npoint}
    \pcnode{back_\j_\k}{1}{\y}{\z}{chair}{vertflat}{11}
  }
}
\bbox{1}{1+\step}             
     {+\step/2}{1+\step/2}                   
     {2*\step/2}{1+\step/2}           
     {11}

\foreach \j in {0,...,\npoint}{
  \foreach \k in {1,...,\npoint}{
    \ifnum\j<\npoint
      \pgfmathtruncatemacro{\jp}{\j+1}
      \draw[edge, yedge] (back_\j_\k) -- (back_\jp_\k);
    \fi
    \ifnum\k<\npoint
      \pgfmathtruncatemacro{\kp}{\k+1}
      \draw[edge, zedge] (back_\j_\kp) -- (back_\j_\k);
    \fi
  }
}

\foreach \j in {0,...,\npoint}{
  \draw[edge, zedge, interedge] (seat_\npoint_\j) -- (back_\j_1);
}

\draw[edge, zedge, interedge] (seat_0_0)               -- (leg_0_1);
\draw[edge, zedge, interedge] (seat_\npoint_0)         -- (leg_1_1);
\draw[edge, zedge, interedge] (seat_0_\npoint)         -- (leg_2_1);
\draw[edge, zedge, interedge] (seat_\npoint_\npoint)   -- (leg_3_1);

\foreach \i in {0,...,2}{
  \foreach \j in {0,...,2}{
    \pgfmathsetmacro{\x}{\i/\npoint}
    \pgfmathsetmacro{\y}{\j/\npoint}
    \pcnode{seat_\i_\j}{\x}{\y}{0}{chair}{flat}{10}
  }
}
\pgfmathtruncatemacro{\kZero}{\npoint + int(round(\tblZ*\npoint)) - \npoint} 
\foreach \j in {0,...,\npoint}{
  \pgfmathsetmacro{\y}{\j/\npoint}
}

\draw[edge, interedge]  (seat_0_\npoint) --  (tleg_3_1);
\draw[edge, interedge]  (seat_0_0) --  (tleg_1_1);

  \end{tikzpicture}}};

\node [ellipse, draw=LOSSCOLOR, fill=LOSSCOLOR!20, inner sep=2pt] (lossedge) at (\xlossedge, \yloss) {\scriptsize Contrastive loss};

\node at (\xedge+0.5,\ylegend) {%
\begin{minipage}{\winput cm}
\centering
\scriptsize
\begin{tabular}{r@{\quad}l}
  \begin{tikzpicture}[baseline=-.5ex]
  \shade[left color=clrLeg, right color=clrFlat, middle color=clrVert, shading angle=45] (0,0) circle (0.15cm);
  \end{tikzpicture}%
  &
  low-dim point embedding
  \\
  \tikz[baseline=-.5ex]{\draw [thick, draw=black!50, red] (0,0) -- (0.25,0);}
    &
  predicted inter-edge
  \end{tabular}
  \end{minipage}
};
 \node [draw=blue!50, fill=blue!20, very thick, trapezium, text width=1cm, minimum height= \wnetwork cm, rotate=-90,trapezium angle=0,text centered, inner sep=1pt] (clustering) at (\xcluster,\ymain)
{\rotatebox{+90}{\scriptsize \shortstack{Clustering}}};
\node [draw=none] (super) at (\xsp, \ymain) { \resizebox{\winput cm}{!}{\tikzset{
  point/.style={
    circle,                     
    ball color=blue!70!black,   
    shading angle=210,           
    inner sep= \pointsize pt,              
    postaction={draw=white!80!black,line width=.5pt,opacity=.5} 
  }
}

\tikzset{
  edge/.style={
  opacity=0.0,
    line width=0.7pt,
    draw=black!70,
    line cap=round,
    postaction={
      draw=white,
      opacity=0.0,          
      line width=1pt,
      line cap=round        
    }
  }
}
\definecolor{pcA}{RGB}{0, 0, 0}   
\definecolor{pcB}{RGB}{55 ,126,184}   
\definecolor{pcC}{RGB}{77 ,175, 74}   
\definecolor{pcD}{RGB}{152, 78,163}   
\definecolor{pcE}{RGB}{255,255, 51}   
\definecolor{pcF}{RGB}{255,127,  0}   
\definecolor{pcG}{RGB}{166, 86, 40}   
\definecolor{pcH}{RGB}{247,129,191}   
\definecolor{pcI}{RGB}{153,153,153}   
\definecolor{pcJ}{RGB}{102,194,165}   
\definecolor{pcK}{RGB}{228, 26, 28}   
\definecolor{pcL}{RGB}{50 , 50, 50}   
\definecolor{pcM}{RGB}{252,141, 98}   

\newcommand{\pcpick}[1]{%
  \ifcase#1 pcA\or pcB\or pcC\or pcD\or pcE\or pcF%
           \or pcG\or pcH\or pcI\or pcJ\or pcK\or pcL\or pcM\fi}

\tikzset{
  bboxface/.style={
    line width=0.3pt,
    fill opacity=0.15,
    draw opacity=0.6
  }
}

\newcommand{\bbox}[7]{
  \pgfmathtruncatemacro{\pcidx}{#7}%
  \edef\pcColor{\pcpick{\pcidx}}%

  \filldraw[bboxface, fill=\pcColor, draw=\pcColor]
    (#2,#3,#5) -- (#2,#4,#5) -- (#2,#4,#6) -- (#2,#3,#6) -- cycle;

  \filldraw[bboxface, fill=\pcColor, draw=\pcColor]
    (#1,#4,#5) -- (#2,#4,#5) -- (#2,#4,#6) -- (#1,#4,#6) -- cycle;

  \filldraw[bboxface, fill=\pcColor, draw=\pcColor]
    (#1,#3,#6) -- (#2,#3,#6) -- (#2,#4,#6) -- (#1,#4,#6) -- cycle;
}

\newcommand{\bboxc}[7]{%
  \pgfmathsetmacro{\padxy}{\step/4}%
  \pgfmathsetmacro{\padz}{\step/4}%
  \bbox{#1+\padxy}{#2-\padxy}%
       {#3+\padxy}{#4-\padxy}%
       {#5-\padz}{#6-2*\padz}%
       {#7}%
}


\tikzset{
  interedge/.style={
  }
}

\newcommand{\pcnode}[7]{%
  \pgfmathtruncatemacro{\pcidx}{#7}
  \edef\pcColor{\pcpick{\pcidx}}%
  \node[
    point,
    shade, shading=ball,
    ball color=\pcColor
  ] (#1) at (#2,#3,#4) {};
}

\def\npoint{6} 
\def\pointsize{3.7}
\pgfmathsetmacro{\step}{1/(\npoint)}
\def\xtable{-1.5}
\pgfmathsetmacro{\gapx}{\step}          
\pgfmathsetmacro{\tblZ}{0.0+\step}        
\pgfmathtruncatemacro{\nTblLeg}{\npoint + int(round(\tblZ*\npoint))}

\tikzset{
 xedge/.style={
shorten <= -\pointsize pt
}}

\tikzset{
 zedge/.style={
shorten >=-0.5*\pointsize pt
}}

\tikzset{
 yedge/.style={
shorten <=-0.5*\pointsize pt
}}

\def\clrLeg {red,0.36;green,0.18;blue,0.85}  
\def\clrFlat{red,0.05;green,0.70;blue,0.95}  
\def\clrVert{red,0.25;green,0.90;blue,0.35}  

  \tdplotsetmaincoords{70}{150}

  \begin{tikzpicture}[tdplot_main_coords,scale=4]

\foreach \L/\dx/\dy in {0/0/0,1/1/0,2/0/1,3/1/1}{
  \pgfmathsetmacro{\cx}{\xtable+\gapx+\dx}    
  \pgfmathsetmacro{\cy}{\dy}            
  \foreach \k in {1,...,\nTblLeg}{
    \pgfmathsetmacro{\z}{\tblZ - \k/\npoint}
    \pcnode{tleg_\L_\k}{\cx}{\cy}{\z}{table}{leg}{\L+1}
  }
    \bboxc{\cx-\step/2}{\cx+\step/2}
      {\cy-\step/2}{\cy+\step/2}
      {-1}{\tblZ}{\L+1}
}

\foreach \L in {0,...,3}{
  \foreach \k in {1,...,\nTblLeg}{
    \ifnum\k<\nTblLeg
      \pgfmathtruncatemacro{\kp}{\k+1}
      \draw[edge, zedge] (tleg_\L_\k) -- (tleg_\L_\kp);
    \fi
  }
}

\foreach \i in {0,...,\npoint}{
  \foreach \j in {0,...,\npoint}{
    \pgfmathsetmacro{\x}{\xtable+\gapx + \i/\npoint}
    \pgfmathsetmacro{\y}{\j/\npoint}
    \pcnode{tbl_\i_\j}{\x}{\y}{\tblZ}{table}{flat}{5}
  }
}

 \bbox{\xtable+\gapx-\step/2}{\xtable+\gapx+1+\step/2} 
     {-\step/2}{1+\step/2}                            
     {\tblZ-\step/4}{\tblZ+\step/4}                   
     {5}   

\draw[edge, zedge, interedge] (tbl_0_0)         -- (tleg_0_1);
\draw[edge, zedge, interedge] (tbl_\npoint_0)   -- (tleg_1_1);
\draw[edge, zedge, interedge] (tbl_0_\npoint)   -- (tleg_2_1);
\draw[edge, zedge, interedge] (tbl_\npoint_\npoint) -- (tleg_3_1);

\foreach \i in {0,...,2}{
  \foreach \j in {0,...,2}{
    \pgfmathsetmacro{\x}{\xtable+\gapx + \i/\npoint}
    \pgfmathsetmacro{\y}{\j/\npoint}
    \pcnode{tbl_\i_\j}{\x}{\y}{\tblZ}{table}{flat}{5}
  }
}

\foreach \i in {0,...,\npoint}{
  \foreach \j in {0,...,\npoint}{
    \ifnum\i<\npoint
      \pgfmathtruncatemacro{\ip}{\i+1}
      \draw[edge, xedge] (tbl_\i_\j) -- (tbl_\ip_\j);
    \fi
    \ifnum\j<\npoint
      \pgfmathtruncatemacro{\jp}{\j+1}
      \draw[edge, yedge] (tbl_\i_\j) -- (tbl_\i_\jp);
    \fi
  }
}

\foreach \l/\cx/\cy in {0/0/0,1/1/0,2/0/1,3/1/1}{
  \foreach \k in {1,...,\npoint}{
    \pgfmathsetmacro{\z}{-\k/\npoint}

    \pcnode{leg_\l_\k}{\cx}{\cy}{\z}{chair}{leg}{6+\l}
  }
  \bboxc{\cx-\step/2}{\cx+\step/2}   
     {\cy-\step/2}{\cy+\step/2}   
     {-1}{0.0}                  
     {6+\l} 
}

\foreach \L in {0,...,3}{               
  \foreach \k in {1,...,\npoint}{
    \ifnum\k<\npoint
      \pgfmathtruncatemacro{\kp}{\k+1}
      \draw[edge, zedge] (leg_\L_\k) -- (leg_\L_\kp);
    \fi
  }
}

\foreach \j in {1,...,\npoint}{
\pgfmathsetmacro{\jp}{\j+1}
\draw[edge, shorten >= 2*\pointsize pt, interedge] (leg_2_\j) -- (tleg_3_\jp);
\draw[edge, shorten >= 2*\pointsize pt, interedge]  (leg_0_\j) --  (tleg_1_\jp);
}

\foreach \i in {0,...,\npoint}{
  \foreach \j in {0,...,\npoint}{
    \pgfmathsetmacro{\x}{\i/\npoint}
    \pgfmathsetmacro{\y}{\j/\npoint}
    \pcnode{seat_\i_\j}{\x}{\y}{0}{chair}{flat}{10}
  }
}
\bbox{-\step/2}{1+\step/2}    
     {-\step/2}{1+\step/2}    
     {-\step/4}{\step/4}      
     {10}

\foreach \i in {0,...,\npoint}{
  \foreach \j in {0,...,\npoint}{
    \ifnum\i<\npoint
      \pgfmathtruncatemacro{\ip}{\i+1}
      \draw[edge, xedge] (seat_\i_\j) -- (seat_\ip_\j);
    \fi
    \ifnum\j<\npoint
      \pgfmathtruncatemacro{\jp}{\j+1}
      \draw[edge, yedge] (seat_\i_\j) -- (seat_\i_\jp);
    \fi
  }
}

\foreach \j in {0,...,\npoint}{
  \foreach \k in {1,...,\npoint}{
    \pgfmathsetmacro{\y}{\j/\npoint}
    \pgfmathsetmacro{\z}{\k/\npoint}
    \pcnode{back_\j_\k}{1}{\y}{\z}{chair}{vertflat}{11}
  }
}
\bbox{1}{1+\step}             
     {+\step/2}{1+\step/2}                   
     {2*\step/2}{1+\step/2}           
     {11}

\foreach \j in {0,...,\npoint}{
  \foreach \k in {1,...,\npoint}{
    \ifnum\j<\npoint
      \pgfmathtruncatemacro{\jp}{\j+1}
      \draw[edge, yedge] (back_\j_\k) -- (back_\jp_\k);
    \fi
    \ifnum\k<\npoint
      \pgfmathtruncatemacro{\kp}{\k+1}
      \draw[edge, zedge] (back_\j_\kp) -- (back_\j_\k);
    \fi
  }
}

\foreach \j in {0,...,\npoint}{
  \draw[edge, zedge, interedge] (seat_\npoint_\j) -- (back_\j_1);
}

\draw[edge, zedge, interedge] (seat_0_0)               -- (leg_0_1);
\draw[edge, zedge, interedge] (seat_\npoint_0)         -- (leg_1_1);
\draw[edge, zedge, interedge] (seat_0_\npoint)         -- (leg_2_1);
\draw[edge, zedge, interedge] (seat_\npoint_\npoint)   -- (leg_3_1);

\foreach \i in {0,...,2}{
  \foreach \j in {0,...,2}{
    \pgfmathsetmacro{\x}{\i/\npoint}
    \pgfmathsetmacro{\y}{\j/\npoint}
    \pcnode{seat_\i_\j}{\x}{\y}{0}{chair}{flat}{10}
  }
}
\pgfmathtruncatemacro{\kZero}{\npoint + int(round(\tblZ*\npoint)) - \npoint} 
\foreach \j in {0,...,\npoint}{
  \pgfmathsetmacro{\y}{\j/\npoint}
}

\draw[edge, interedge]  (seat_0_\npoint) --  (tleg_3_1);
\draw[edge, interedge]  (seat_0_0) --  (tleg_1_1);

  \end{tikzpicture}}};

\node at (\xsp-0.25,\ylegend) {%
  \begin{minipage}{\winput cm}
    \centering
    \scriptsize
    \begin{tabular}{r@{\quad}l}
      \def\pointsize{2}
\def\npoint{4} 
\pgfmathsetmacro{\step}{1/(\npoint)}

\tikzset{
  point/.style={
    circle,                     
    ball color=blue!70!black,   
    shading angle=210,           
    inner sep= \pointsize pt,
    postaction={draw=white!80!black,line width=.25pt,opacity=.5} 
  }
}

\definecolor{pcA}{RGB}{0, 0, 0}   
\definecolor{pcB}{RGB}{55 ,126,184}   
\definecolor{pcC}{RGB}{77 ,175, 74}   
\definecolor{pcD}{RGB}{152, 78,163}   
\definecolor{pcE}{RGB}{255,255, 51}   
\definecolor{pcF}{RGB}{255,127,  0}   
\definecolor{pcG}{RGB}{166, 86, 40}   
\definecolor{pcH}{RGB}{247,129,191}   
\definecolor{pcI}{RGB}{153,153,153}   
\definecolor{pcJ}{RGB}{102,194,165}   
\definecolor{pcK}{RGB}{228, 26, 28}   
\definecolor{pcL}{RGB}{50 , 50, 50}   
\definecolor{pcM}{RGB}{252,141, 98}   

\newcommand{\pcpick}[1]{%
  \ifcase#1 pcA\or pcB\or pcC\or pcD\or pcE\or pcF%
           \or pcG\or pcH\or pcI\or pcJ\or pcK\or pcL\or pcM\fi}

\tikzset{
  bboxface/.style={
    line width=0.3pt,
    fill opacity=0.15,
    draw opacity=0.6
  }
}

\newcommand{\bbox}[7]{
  \pgfmathtruncatemacro{\pcidx}{#7}%
  \edef\pcColor{\pcpick{\pcidx}}%

  \filldraw[bboxface, fill=\pcColor, draw=\pcColor]
    (#2,#3,#5) -- (#2,#4,#5) -- (#2,#4,#6) -- (#2,#3,#6) -- cycle;

  \filldraw[bboxface, fill=\pcColor, draw=\pcColor]
    (#1,#4,#5) -- (#2,#4,#5) -- (#2,#4,#6) -- (#1,#4,#6) -- cycle;

  \filldraw[bboxface, fill=\pcColor, draw=\pcColor]
    (#1,#3,#6) -- (#2,#3,#6) -- (#2,#4,#6) -- (#1,#4,#6) -- cycle;
}

\newcommand{\bboxc}[7]{%
  \pgfmathsetmacro{\padxy}{\step/4}%
  \pgfmathsetmacro{\padz}{\step/4}%
  \bbox{#1+\padxy}{#2-\padxy}%
       {#3+\padxy}{#4-\padxy}%
       {#5-\padz}{#6-2*\padz}%
       {#7}%
}


\tikzset{
  interedge/.style={
  }
}

\newcommand{\pcnode}[7]{%
  \pgfmathtruncatemacro{\pcidx}{#7}
  \edef\pcColor{\pcpick{\pcidx}}%
  \node[
    point,
    shade, shading=ball,
    ball color=\pcColor
  ] (#1) at (#2,#3,#4) {};
}

  \tdplotsetmaincoords{70}{150}

  \begin{tikzpicture}[tdplot_main_coords,scale=1,baseline=-.5ex]
        \foreach \x/\idx in {0/3, 0.3/3, 0.6/3}{
          \node[point, ball color=\pcpick{\idx}] at (\x,0) {};
        }
        \bboxc{-0.15}{0.75}{-0.15}{0.15}{-0.05}{0.25}{3}
  \end{tikzpicture} &
      superpoint
    \end{tabular}
  \end{minipage}%
};
\node [draw=MODELCOLOR, fill=MODELCOLOR!20, very thick, trapezium, text width=1cm, minimum height= \wnetwork cm, rotate=-90,trapezium angle=70,text centered, inner sep=1pt] (spointencoder) at (\xspt,\ymain)
{\rotatebox{+90}{\scriptsize \shortstack{Superpoint \\ classifier \\ $330$k param.}}};
\node [draw=none] (prediction) at (\xpredict, \ymain) { \resizebox{\winput cm}{!}{\tikzset{
  point/.style={
    circle,                     
    ball color=blue!70!black,   
    shading angle=210,           
    inner sep= \pointsize pt,              
    postaction={draw=white!80!black,line width=.5pt,opacity=.5} 
  }
}

\tikzset{
  edge/.style={
    opacity=0.0,
    line width=0.0pt,
    draw=black!70,
    line cap=round,
    postaction={
      draw=white,
      opacity=0.0,          
      line width=0pt,
      line cap=round        
    }
  }
}

\tikzset{
  interedge/.style={
  }
}

\newcommand{\bbox}[7]{%
}

\newcommand{\bboxc}[7]{%
}

\newcommand{\pcnode}[7]{
  \ifstrequal{#5}{chair}{%
    \def\pcColor{red!80!orange}
  }{%
    \def\pcColor{blue!70!cyan}
  }%
  \node[
    point,
    shade, shading=ball,
    ball color=\pcColor
  ] (#1) at (#2,#3,#4) {};
}

\def\npoint{6} 
\def\pointsize{3.7}
\pgfmathsetmacro{\step}{1/(\npoint)}
\def\xtable{-1.5}
\pgfmathsetmacro{\gapx}{\step}          
\pgfmathsetmacro{\tblZ}{0.0+\step}        
\pgfmathtruncatemacro{\nTblLeg}{\npoint + int(round(\tblZ*\npoint))}

\tikzset{
 xedge/.style={
shorten <= -\pointsize pt
}}

\tikzset{
 zedge/.style={
shorten >=-0.5*\pointsize pt
}}

\tikzset{
 yedge/.style={
shorten <=-0.5*\pointsize pt
}}

\def\clrLeg {red,0.36;green,0.18;blue,0.85}  
\def\clrFlat{red,0.05;green,0.70;blue,0.95}  
\def\clrVert{red,0.25;green,0.90;blue,0.35}  

  \tdplotsetmaincoords{70}{150}

  \begin{tikzpicture}[tdplot_main_coords,scale=4]

\foreach \L/\dx/\dy in {0/0/0,1/1/0,2/0/1,3/1/1}{
  \pgfmathsetmacro{\cx}{\xtable+\gapx+\dx}    
  \pgfmathsetmacro{\cy}{\dy}            
  \foreach \k in {1,...,\nTblLeg}{
    \pgfmathsetmacro{\z}{\tblZ - \k/\npoint}
    \pcnode{tleg_\L_\k}{\cx}{\cy}{\z}{table}{leg}{\L+1}
  }
    \bboxc{\cx-\step/2}{\cx+\step/2}
      {\cy-\step/2}{\cy+\step/2}
      {-1}{\tblZ}{\L+1}
}

\foreach \L in {0,...,3}{
  \foreach \k in {1,...,\nTblLeg}{
    \ifnum\k<\nTblLeg
      \pgfmathtruncatemacro{\kp}{\k+1}
      \draw[edge, zedge] (tleg_\L_\k) -- (tleg_\L_\kp);
    \fi
  }
}

\foreach \i in {0,...,\npoint}{
  \foreach \j in {0,...,\npoint}{
    \pgfmathsetmacro{\x}{\xtable+\gapx + \i/\npoint}
    \pgfmathsetmacro{\y}{\j/\npoint}
    \pcnode{tbl_\i_\j}{\x}{\y}{\tblZ}{table}{flat}{5}
  }
}

 \bbox{\xtable+\gapx-\step/2}{\xtable+\gapx+1+\step/2} 
     {-\step/2}{1+\step/2}                            
     {\tblZ-\step/4}{\tblZ+\step/4}                   
     {5}   

\draw[edge, zedge, interedge] (tbl_0_0)         -- (tleg_0_1);
\draw[edge, zedge, interedge] (tbl_\npoint_0)   -- (tleg_1_1);
\draw[edge, zedge, interedge] (tbl_0_\npoint)   -- (tleg_2_1);
\draw[edge, zedge, interedge] (tbl_\npoint_\npoint) -- (tleg_3_1);

\foreach \i in {0,...,2}{
  \foreach \j in {0,...,2}{
    \pgfmathsetmacro{\x}{\xtable+\gapx + \i/\npoint}
    \pgfmathsetmacro{\y}{\j/\npoint}
    \pcnode{tbl_\i_\j}{\x}{\y}{\tblZ}{table}{flat}{5}
  }
}

\foreach \i in {0,...,\npoint}{
  \foreach \j in {0,...,\npoint}{
    \ifnum\i<\npoint
      \pgfmathtruncatemacro{\ip}{\i+1}
      \draw[edge, xedge] (tbl_\i_\j) -- (tbl_\ip_\j);
    \fi
    \ifnum\j<\npoint
      \pgfmathtruncatemacro{\jp}{\j+1}
      \draw[edge, yedge] (tbl_\i_\j) -- (tbl_\i_\jp);
    \fi
  }
}

\foreach \l/\cx/\cy in {0/0/0,1/1/0,2/0/1,3/1/1}{
  \foreach \k in {1,...,\npoint}{
    \pgfmathsetmacro{\z}{-\k/\npoint}

    \pcnode{leg_\l_\k}{\cx}{\cy}{\z}{chair}{leg}{6+\l}
  }
  \bboxc{\cx-\step/2}{\cx+\step/2}   
     {\cy-\step/2}{\cy+\step/2}   
     {-1}{0.0}                  
     {6+\l} 
}

\foreach \L in {0,...,3}{               
  \foreach \k in {1,...,\npoint}{
    \ifnum\k<\npoint
      \pgfmathtruncatemacro{\kp}{\k+1}
      \draw[edge, zedge] (leg_\L_\k) -- (leg_\L_\kp);
    \fi
  }
}

\foreach \j in {1,...,\npoint}{
\pgfmathsetmacro{\jp}{\j+1}
\draw[edge, shorten >= 2*\pointsize pt, interedge] (leg_2_\j) -- (tleg_3_\jp);
\draw[edge, shorten >= 2*\pointsize pt, interedge]  (leg_0_\j) --  (tleg_1_\jp);
}

\foreach \i in {0,...,\npoint}{
  \foreach \j in {0,...,\npoint}{
    \pgfmathsetmacro{\x}{\i/\npoint}
    \pgfmathsetmacro{\y}{\j/\npoint}
    \pcnode{seat_\i_\j}{\x}{\y}{0}{chair}{flat}{10}
  }
}
\bbox{-\step/2}{1+\step/2}    
     {-\step/2}{1+\step/2}    
     {-\step/4}{\step/4}      
     {10}

\foreach \i in {0,...,\npoint}{
  \foreach \j in {0,...,\npoint}{
    \ifnum\i<\npoint
      \pgfmathtruncatemacro{\ip}{\i+1}
      \draw[edge, xedge] (seat_\i_\j) -- (seat_\ip_\j);
    \fi
    \ifnum\j<\npoint
      \pgfmathtruncatemacro{\jp}{\j+1}
      \draw[edge, yedge] (seat_\i_\j) -- (seat_\i_\jp);
    \fi
  }
}

\foreach \j in {0,...,\npoint}{
  \foreach \k in {1,...,\npoint}{
    \pgfmathsetmacro{\y}{\j/\npoint}
    \pgfmathsetmacro{\z}{\k/\npoint}
    \pcnode{back_\j_\k}{1}{\y}{\z}{chair}{vertflat}{11}
  }
}
\bbox{1}{1+\step}             
     {+\step/2}{1+\step/2}                   
     {2*\step/2}{1+\step/2}           
     {11}

\foreach \j in {0,...,\npoint}{
  \foreach \k in {1,...,\npoint}{
    \ifnum\j<\npoint
      \pgfmathtruncatemacro{\jp}{\j+1}
      \draw[edge, yedge] (back_\j_\k) -- (back_\jp_\k);
    \fi
    \ifnum\k<\npoint
      \pgfmathtruncatemacro{\kp}{\k+1}
      \draw[edge, zedge] (back_\j_\kp) -- (back_\j_\k);
    \fi
  }
}

\foreach \j in {0,...,\npoint}{
  \draw[edge, zedge, interedge] (seat_\npoint_\j) -- (back_\j_1);
}

\draw[edge, zedge, interedge] (seat_0_0)               -- (leg_0_1);
\draw[edge, zedge, interedge] (seat_\npoint_0)         -- (leg_1_1);
\draw[edge, zedge, interedge] (seat_0_\npoint)         -- (leg_2_1);
\draw[edge, zedge, interedge] (seat_\npoint_\npoint)   -- (leg_3_1);

\foreach \i in {0,...,2}{
  \foreach \j in {0,...,2}{
    \pgfmathsetmacro{\x}{\i/\npoint}
    \pgfmathsetmacro{\y}{\j/\npoint}
    \pcnode{seat_\i_\j}{\x}{\y}{0}{chair}{flat}{10}
  }
}
\pgfmathtruncatemacro{\kZero}{\npoint + int(round(\tblZ*\npoint)) - \npoint} 
\foreach \j in {0,...,\npoint}{
  \pgfmathsetmacro{\y}{\j/\npoint}
}

\draw[edge, interedge]  (seat_0_\npoint) --  (tleg_3_1);
\draw[edge, interedge]  (seat_0_0) --  (tleg_1_1);

  \end{tikzpicture}}};

\node [ellipse, draw=LOSSCOLOR, fill=LOSSCOLOR!20, inner sep=2pt] (losspred) at (\xpredict, \yloss) {\scriptsize Semantic loss};

\node at (\xpredict+0.5,\ylegend) {%
\begin{minipage}{\winput cm}
\tikzset{
  point/.style={
    circle,
    ball color=blue!70!black,
    shading angle=210,
    inner sep= 2 pt,
    postaction={draw=white!80!black,line width=.25pt,opacity=.5} 
  }
}
\centering
\scriptsize
\begin{tabular}{l@{\;\;}r@{\quad}l}
\multirow{2}{*}{prediction:}
&
  \begin{tikzpicture}[baseline=-.5ex]
  \node[point, ball color=red] at (0, 0) {};
  \end{tikzpicture}%
  &
  chair
  \\
&\begin{tikzpicture}[baseline=-.5ex]
  \node[point, ball color=blue] at (0, 0) {};
  \end{tikzpicture}%
  &
  table
  \end{tabular}
  \end{minipage}
};
\draw [thick, ->] 
            (input) -- (pointencoder.south)
            (pointencoder.north) -- ++ (0.25,0);
\draw [thick, ->] 
            (edge) -- (clustering);
\draw [thick, ->]                  
            (clustering.north)  -- ++ (0.25,0)              
            (super) -- (spointencoder.south) (spointencoder.north) -- ++ (0.25,0);

\draw [thick, ->] (edge.north) ++ (0,-0.5) |- (lossedge);

\draw [thick, ->] 
(spointencoder.south) ++ (-0.25, +0.5) coordinate (tmp4)
(edge.north) ++ (0,-0.1) -| (tmp4) -- ++  (0.25,0);



\draw [thick, ->] (prediction.north) ++ (0,-0.5) -- (losspred);


\end{tikzpicture}

}

\vspace{-2mm}
\caption{\textbf{\MODELNAME.}
A $60$k-parameter backbone embeds every point of the input scene into a low-dimensional space where adjacent points from different semantic classes (inter-edges) are pushed apart.  
A GPU-accelerated algorithm then clusters neighbouring points with similar embeddings, producing semantically homogeneous superpoints.  
Finally, a lightweight ($330$k-parameter) superpoint-level network assigns a label to each superpoint, which is broadcast back to its points for dense segmentation.}
\label{fig:pipeline}
\end{figure*}

\para{3D Semantic Segmentation.}
Most existing approaches operate directly on raw points or on fine voxel grids, leveraging convolutional architectures~\cite{choy20194d,thomas2019kpconv,graham20183d} or, more recently, transformers~\cite{hui2021pptnet,Guo_2021,zhao2021point,wu2022point,wu2024point}.  
These methods achieve state-of-the-art accuracy, but at substantial computational cost.  
Models typically contain tens of millions of parameters and require significant memory and processing time, limiting their applicability to large-scale point clouds and real-time robotic systems.

\para{Superpoint-Based Semantic Segmentation.}
Partitioning a scene into \emph{superpoints}---spatially contiguous, geometrically homogeneous point clusters---drastically reduces computational load by shifting semantic reasoning from points to regions. The \emph{Superpoint Graph} (SPG)~\cite{landrieu2018spg} pioneered this approach, achieving competitive results with far fewer resources. The \emph{Superpoint Transformer} (SPT)~\cite{robert2023efficient} advanced the idea with hierarchical partitions and sparse self-attention, reaching SOTA accuracy with only $\sim$200k parameters. Extensions include instance~\cite{liang2021instance,sun2023superpoint}, 
panoptic~\cite{robert2024scalable}, weakly supervised~\cite{liu2021one}, and self-supervised~\cite{zhang2023growsp, nunes2022segcontrastd, liu2023udsunsupervised} segmentation. However, all rely on CPU-based, handcrafted partitions, which limit scalability and accuracy.

\para{Classical Superpoint Partitioning.}
Early 3D over-segmentation methods often used supervoxels, \eg, VCCS~\cite{papon2013voxel} via voxel-based $k$-means. 
Such methods require \emph{predefining the number of clusters} and are sensitive to initialization, preventing flexibility to scene size and local complexity. 
Saliency-guided~\cite{gao2017saliency} and density-adaptive~\cite{lin2018toward} variants improved boundaries, while graph-based optimization~\cite{benshabat2017pclv, guinard2017weakly, liang2021instance, han2020occuseg, thyagharajan2022segment} further improved adaptability. 
Nevertheless, these methods remain CPU-bound and largely depend on handcrafted features and hyperparameters. 
Sparsification methods~\cite{jin2024multiway, yew2022regtr,zhu2024spgroup3d} lower the number of points but do not produce valid partitions for segmentation.
Image-based methods~\cite{yang2023sam3d} typically project SLIC~\cite{achanta2012slic} or SAM~\cite{kirillov2023segmentanything} superpixel partitions onto 3D points, but require co-registered 2D images, and rely on 2D textures rather than 3D geometry.

\para{Learning Superpoint Partitions.}
Learning partitions is challenging due to the non-differentiability of standard clustering. GPU-friendly $k$-means~\cite{cheng2020cascaded, hui2021superpoint, zhang2023growsp, lu20253dlst} and region-offset~\cite{kang2023region} approaches enable differentiability but inherit $k$-means’ limitations and often require heavy encoders. GraphCut~\cite{hui2022learning} introduces a graph-based heuristic on learned embeddings for instance segmentation, while SAM-Graph~\cite{guo2024sam} transfers priors from large image models to generate superpoints for detection.  
Supervised SuperPoints (SSP)~\cite{landrieu2019point} directly train embeddings to highlight boundaries but still rely on CPU-based Cut Pursuit~\cite{landrieu2017cut}. Our method replaces the CPU-based solver with a massively parallel approximate algorithm, uses an edge-based surrogate loss, and employs a \emph{small} ($330$k-parameter) encoder. In practice, removing the CPU bottleneck and avoiding fixed cluster counts enable fast, scalable training and inference.
\section{Method}

We consider a point cloud $\Cloud$ composed of points with $F$ features: spatial coordinates and potential radiometric attributes (color, intensity). Each point $p$ has a semantic label $\class(p) \in [1,C]$, with $C$ classes. Our goal is to efficiently predict the semantic labels of all points. We first learn low-dimensional embeddings tailored for detecting semantic transitions (\cref{subsec:transition}), then propose an efficient GPU-based superpoint partitioning method (\cref{subsec:partition}), and finally show how our approach can be interfaced with a superpoint classification model for fast end-to-end semantic segmentation (\cref{subsec:classification}). 
See \cref{fig:pipeline} for a visual overview.

\subsection{Detecting Semantic Transition}
\label{subsec:transition}

We train a lightweight network to compute point embeddings optimized for detecting semantic transitions.

\para{Motivation.} 
Directly embedding points in a semantic space is typically challenging and requires large networks with extensive receptive fields. Instead, we exploit the simpler insight that semantic boundaries usually correspond to sharp contrasts in geometry or radiometry, such as the geometric discontinuity between a chair and the floor, or the change of color between doors and walls. Detecting these \emph{semantic transitions} is therefore a much simpler problem than semantic segmentation, does not require global information, and should be achievable with a simpler model.

\para{Point Embedding.} 
We define the embedding function
$
    \phi^{\text{point}}: \mathbb{R}^{\vert\Cloud\vert \times F} \mapsto \mathbb{R}^{\vert\Cloud\vert \times M},
$
which associates each point $p$ with an $M$-dimensional embedding vector. We denote by $\Feat = \phi^{\text{point}}(\Cloud)$ the embedding matrix.

\para{Semantic Transition Prediction.}
We aim to learn embeddings that remain homogeneous within objects while being sharply contrasted across semantic boundaries.  
Following Robert \etal~\cite{robert2024scalable}, we formulate transition detection as a binary edge classification task.  
We first define the pairwise affinity between two points $p$ and $q$ as:
\begin{align}\label{eq:softmax}
    a_{p,q} = \exp\left(-{\lVert \feat_p - \feat_q\rVert}/{\tau}\right)~,    
\end{align}
with $\tau>0$ a temperature parameter. 
We construct an undirected graph $(\Cloud,\Edges)$ connecting points with their $k$ nearest neighbours.  
We define \emph{intra-edges}, $\Edges_\text{intra}=\{(p,q)\in\Edges\mid \class(p)=\class(q)\}$, and \emph{inter-edges}, $\Edges_\text{inter}=\{(p,q)\in\Edges \mid  \class(p)\neq\class(q)\}$.
We encourage $a_{p,q}\!\approx\!1$ for intra-edges and $a_{p,q}\!\approx\!0$ for inter-edges with a \emph{contrastive loss}:
\begin{align} \label{eq:loss}
    \cL &= 
    \sum_{(p,q)\in\Edges_\text{intra}}\!\!-\log(a_{p,q})
    +
    \sum_{(p,q)\in\Edges_\text{inter}}\!\!-\log(1 - a_{p,q})~.
\end{align}
To improve diversity in the learned embeddings, we apply adaptive sampling by randomly dropping intra-edges until they constitute at most $\rho_\text{intra}$ of the sampled edges.


\subsection{Fast Superpoint Partitioning}
\label{subsec:partition}
We now present a GPU-accelerated method to efficiently compute a superpoint partition from the learned embeddings.

\para{Motivation.} 
Clustering-based approaches such as $K$-means require a fixed cluster count, and thus struggle with variable scene size and complexity.  
We adopt a graph-based partition approach by minimizing the following contour-regularized energy~\cite{landrieu2017cut}:
\begin{gather}
\label{eq:gmpp}
\argmin_{\FeatY \in \mathbb{R}^{\vert \Cloud \vert \times M}}\Omega(\FeatY; \Feat, \Edges)\;\text{with} \\
\Omega(\FeatY; \Feat,\Edges)=
\sum_{p\in\Cloud}\!
    \|\Feat_p - \FeatY_p\|^2
+\!
\regul \!\!\! \sum_{(p,q)\in\Edges}\!\!\!
    \edgeweights_{p,q}
    \|\FeatY_p - \FeatY_q\|_0~, \notag
\end{gather}
\noindent where $\| x \|_0=0$ if $x=0$ or otherwise $1$; $\edgeweights_{p,q}>0$ are edge weights; $\regul > 0$ the regularization strength. Minimizing this energy produces a piecewise-constant approximation $\FeatY$ of the embeddings $\Feat$, whose components form our superpoints. As $\Feat$ is trained to be homogeneous within objects and contrasted at their interface, the superpoints should be semantically coherent. Existing solvers for \cref{eq:gmpp} are typically CPU-bound~\cite{raguet2019parallel}, which can become computational bottlenecks.

\para{Combinatorial Clustering.}
We recast the non-continuous, non-convex optimization problem of \cref{eq:gmpp} as a combinatorial problem which we can efficiently approximate with a parallel greedy bottom-up merging strategy. 
Let $\Partition$ denote a partition of $\Cloud$ into superpoints: each superpoint $P \in \Partition$ defines a connected component of the graph $(\Cloud,\Edges)$ and $\cup_{P\in\Partition}=\Cloud$. We define the adjacency  between superpoints as follows:
\begin{align}
    \SEdges = \{(P,Q)\in\Partition^2\mid\exists(p,q)\in\Edges, p\in P, q\in Q\}~.
\end{align} 
We associate each superpoint $P$ with its mean embedding: 
\begin{align}
    \Feat_P = \frac{1}{|P|}\sum_{p \in P} \feat_p~.
\end{align}
We then define the point embedding matrix $\Feat^\Partition$ in $\mathbb{R}^{\vert \Cloud \vert \times M}$ which associates each point $p$ with the value $\Feat_P $ of the superpoint $P$ it belongs to:
\begin{align}
   \Feat^\Partition_p = \Feat_P\;\text{for}\;P\;\text{such that}\;p\in P~.
\end{align}
This allows us to restate \cref{eq:gmpp} as a combinatorial problem of minimizing $\Omega(\Partition) = \Omega(\Feat^\Partition; \Feat, \Edges)$ with respect to $\cP$. To do so, we use the following proposition:

\begin{prop}
Merging adjacent superpoints $(P,Q) \in \SEdges$ decreases $\Omega(\Partition)$ by the following \emph{edge merge gain:}
\begin{align}\label{eq:delta}
    \Delta(P,Q) = -\frac{|P||Q|}{|P|+|Q|}\|\Feat_P - \Feat_Q\|^2 + \regul\!\!\!\!\!\sum_{(p,q)\in(P \times Q)\cap\Edges}\!\!\!\!\!\!\!\!\!\!\!\edgeweights_{p,q}~.
\end{align}
\end{prop}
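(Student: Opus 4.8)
The plan is to compute $\Omega(\Partition')$, where $\Partition'$ is obtained from $\Partition$ by replacing the two parts $P$ and $Q$ by their union $R := P \cup Q$ (all other parts unchanged), and to check that $\Omega(\Partition') = \Omega(\Partition) - \Delta(P,Q)$. Substituting $\FeatY = \Feat^\Partition$ into \cref{eq:gmpp}, the energy splits as a \emph{data term} $\sum_{P \in \Partition} \sum_{p \in P} \|\feat_p - \Feat_P\|^2$ plus a \emph{contour term} equal to $\regul$ times the total weight $\edgeweights_{p,q}$ of the edges $(p,q) \in \Edges$ whose endpoints lie in distinct parts, since $\|\Feat^\Partition_p - \Feat^\Partition_q\|_0$ is $0$ when $p$ and $q$ share a superpoint and $1$ otherwise. (This last step uses that distinct superpoints carry distinct mean embeddings; one either assumes this genericity or takes the displayed partition-level expression as the definition of $\Omega(\Partition)$ — the only point where the statement is slightly informal.) Crucially, both terms change only through the blocks indexed by $P$, $Q$ and $R$, so the whole comparison is local.

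For the data term I would invoke the parallel-axis (bias--variance) identity $\sum_{p \in P} \|\feat_p - \Feat_R\|^2 = \sum_{p \in P} \|\feat_p - \Feat_P\|^2 + |P|\,\|\Feat_P - \Feat_R\|^2$, which holds because $\Feat_P$ is the mean of $\{\feat_p\}_{p \in P}$, together with the analogous identity over $Q$; adding them shows the data term \emph{increases} under the merge by $|P|\,\|\Feat_P - \Feat_R\|^2 + |Q|\,\|\Feat_Q - \Feat_R\|^2$. Since $\Feat_R = (|P|\Feat_P + |Q|\Feat_Q)/(|P|+|Q|)$, one has $\Feat_P - \Feat_R = \frac{|Q|}{|P|+|Q|}(\Feat_P - \Feat_Q)$ and $\Feat_Q - \Feat_R = \frac{|P|}{|P|+|Q|}(\Feat_Q - \Feat_P)$, and the numerators combine via $|P|\,|Q|^2 + |Q|\,|P|^2 = |P|\,|Q|\,(|P|+|Q|)$, collapsing the increase to exactly $\frac{|P||Q|}{|P|+|Q|}\|\Feat_P - \Feat_Q\|^2$. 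I expect this one-line simplification to be the only place needing genuine computation, and hence the main (modest) obstacle.

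For the contour term I would classify each edge of $\Edges$ by the parts containing its two endpoints: an edge internal to $P$ (or to $Q$) stays internal to $R$, and an edge joining $P$ or $Q$ to a third part $S \notin \{P,Q\}$ still joins $R$ to $S$ — in both cases its status, hence its contribution, is unchanged. Only the edges of $(P \times Q) \cap \Edges$ change, passing from cross-part to internal, so the contour term \emph{decreases} by exactly $\regul \sum_{(p,q) \in (P \times Q) \cap \Edges} \edgeweights_{p,q}$. Adding the two contributions gives $\Omega(\Partition') - \Omega(\Partition) = \frac{|P||Q|}{|P|+|Q|}\|\Feat_P - \Feat_Q\|^2 - \regul \sum_{(p,q) \in (P \times Q) \cap \Edges} \edgeweights_{p,q} = -\Delta(P,Q)$, i.e., merging $P$ and $Q$ decreases $\Omega(\Partition)$ by $\Delta(P,Q)$, as claimed.
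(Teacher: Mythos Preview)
Your proof is correct and follows essentially the same route as the paper: both split $\Omega(\Partition)$ into a data term and a contour term, observe that only the $P$, $Q$ (and $R=P\cup Q$) blocks change, and reduce the data-term difference to $\frac{|P||Q|}{|P|+|Q|}\|\Feat_P-\Feat_Q\|^2$. The only cosmetic difference is that the paper expands each within-cluster sum of squares as $\sum_{p\in U}\|\feat_p\|^2 - |U|\|\Feat_U\|^2$ and then subtracts, whereas you invoke the parallel-axis identity directly; your parenthetical about the $\|\cdot\|_0$ term requiring distinct superpoints to have distinct means is a genuine subtlety that the paper silently assumes.
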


\begin{figure}[t]
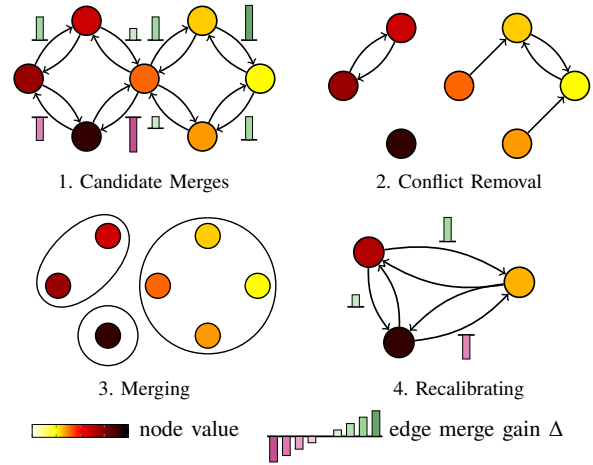

    \centering
    \def\valA{0.3}
\def\valB{0.4}
\def\valC{0.1}
\def\valD{0.6}
\def\valE{0.65}
\def\valF{0.75}
\def\valG{0.7}

\resizebox{\linewidth}{!}{
\begin{tabular}{c@{\quad}c}
\captionsetup[subfigure]{labelformat=simple,labelsep=period,
                         labelfont=normalfont,textfont=normalfont}
\renewcommand\thesubfigure{\arabic{subfigure}}
\begin{subfigure}{0.5\linewidth}
\resizebox{\linewidth}{!}{
\begin{tikzpicture}[scale=1]
    \input{figures/graph}

    \draw [thick, bend left=20, ->] (n1) to node[midway, above left=-1mm and -1mm]
  {\BarGlyph{+2}} (n2);
    \draw [thick, bend left=-20, <-] (n1) to node[midway, below left= -1mm and -1mm] {\BarGlyph{-2}} (n3);
    \draw [thick, bend left=20, ->] (n2) to node[midway, above right= -1mm and -1mm] {\BarGlyph{+1}} (n4);
    \draw [thick, bend left=-20, <-] (n3) to node[midway, below right = -1mm and -1mm] {\BarGlyph{-3}} (n4);
    \draw [thick, bend left=-20, <-] (n4) to node[midway, below left= -1mm and -1mm] {\BarGlyph{+1}} (n5);
    \draw [thick, bend left=20, ->] (n4) to node[midway, above left= -1mm and -1mm] {\BarGlyph{+2}} (n7);
    \draw [thick, bend left=-20, <-] (n6) to node[midway, above right= -1mm and -1mm] {\BarGlyph{+3}} (n7);
    \draw [thick, bend left=20, ->] (n6) to node[midway, below right= -1mm and -1mm] {\BarGlyph{+2}} (n5);

    \draw [thick, bend left=-20, <-] (n1) to (n2);
    \draw [thick, bend left=+20, ->] (n1) to (n3);
    \draw [thick, bend left=-20, <-] (n2) to (n4);
    \draw [thick, bend left=+20, ->] (n3) to (n4);
    \draw [thick, bend left=+20, ->] (n4) to (n5);
    \draw [thick, bend left=-20, <-] (n4) to (n7);
    \draw [thick, bend left=+20, ->] (n6) to (n7);
    \draw [thick, bend left=-20, <-] (n6) to (n5);
\end{tikzpicture}
}
\caption{Candidate Merges}
\end{subfigure}
     &  
\captionsetup[subfigure]{labelformat=simple,labelsep=period,
 labelfont=normalfont,textfont=normalfont}
\renewcommand\thesubfigure{\arabic{subfigure}}
 \begin{subfigure}{0.5\linewidth}
 \resizebox{\linewidth}{!}{
\begin{tikzpicture}[scale=1]
    \input{figures/graph}

    \draw [->, thick, bend left=20] (n1) to (n2);
    \draw [->, thick, bend left=20] (n2) to (n1);
    \draw[->, thick, bend left=20]  (n6) to (n7);
    \draw[->, thick, bend left=20]  (n7) to (n6);
    \draw [thick, ->] (n4) -- (n7);   
    \draw [thick, <-] (n6) -- (n5);
    
\end{tikzpicture}
}
\caption{Conflict Removal}
\end{subfigure}   
\\[-2mm]
\captionsetup[subfigure]{labelformat=simple,labelsep=period,
                         labelfont=normalfont,textfont=normalfont}
\renewcommand\thesubfigure{\arabic{subfigure}}
 \begin{subfigure}{0.5\linewidth}
 \resizebox{\linewidth}{!}{
\begin{tikzpicture}[scale=1]

    \input{figures/graph}

    \draw [thick] ($(n1.south west) + (-0.1,-0.1)$) edge[out=-45,in=-45] ($(n2.north east) + (+0.1,+0.1)$);

    \draw [thick] ($(n1.south west) + (-0.1,-0.1)$) edge[out=135,in=135] ($(n2.north east) + (+0.1,+0.1)$);

    \draw[thick] (n3.center) circle (0.6);

    \draw [thick] ($(n4.west) + (-0.1,0)$) edge[out=-90,in=180] ($(n5.south) + (+0.0,-0.1)$);
    \draw [thick] ($(n5.south) + (+0.0,-0.1)$) edge[out=0,in=-90] ($(n6.east) + (+0.1,-0)$); 
    \draw [thick] ($(n6.east) + (+0.1,-0)$) edge[out=90,in=0] ($(n7.north) + (+0.0,+0.1)$);
    \draw [thick] ($(n7.north) + (+0.0,+0.1)$) edge[out=180,in=90] ($(n4.west) + (-0.1,0)$);
\end{tikzpicture}
}
\caption{Merging}
\end{subfigure}   
&
  \captionsetup[subfigure]{labelformat=simple,labelsep=period,
                         labelfont=normalfont,textfont=normalfont}
\renewcommand\thesubfigure{\arabic{subfigure}}
 \begin{subfigure}{0.5\linewidth}
 \resizebox{\linewidth}{!}{
\begin{tikzpicture}[scale=1]

\def\valA{0.35}
\def\valB{0.1}
\def\valC{0.675}

    \node [thick, draw=black, circle, draw=black,  heat fill=\valA, , minimum size=5mm] (n1) at (0.5,1.5) {};
    \node [thick, draw=black, circle, draw=black,  heat fill=\valB,, minimum size=5mm] (n2) at (1,0) {};
    \node [thick, draw=black, circle, draw=black,  heat fill=\valC, , minimum size=5mm] (n3) at (3,1) {};
    
    \node  [draw=white, circle] (n0) at (0,0) {};
    \node [draw=white, circle] (n0) at (4,2) {};

    \draw [thick, bend right=+20, ->] (n1) to node[midway, left=0mm ]{\BarGlyph{+1}} (n2);
    \draw [thick, bend left=+20, ->] (n1) to node[midway, above=0mm] {\BarGlyph{+2}} (n3);
    \draw [thick, bend right=+20, ->] (n2) to node[midway, below=0mm] {\BarGlyph{-2}} (n3);

    \draw [thick, bend right=-20, <-] (n1) to (n2);
    \draw [thick, bend left=-20, <-] (n1) to (n3);
    \draw [thick, bend right=-20, <-] (n2) to (n3);
\end{tikzpicture}
}
\caption{Recalibrating}
\end{subfigure} 
\\

\multicolumn{2}{c}{
\begin{tabular}{r@{\,}l r@{\;}l}
\multirow{1}{*}[2.5mm]{\def\colormapheight{1.5cm}\rotatebox{90}{\begin{tikzpicture}
    \begin{axis}[
        hide axis,
        scale only axis,
        height=5pt,
        width=50pt,
        xlabel={in m},
        colormap name=heat,
        colorbar,
           colorbar,
            colorbar style={
            width=.2cm,
            height=\colormapheight,
            ytick={0,1,2,3,4},
            yticklabels={,,,,},
            yticklabel style={font=\tiny},
            major tick length=1.5pt, 
            line width=.05mm,
            grid style={draw=none} 
        },
        point meta min=0,
        point meta max=4
    ]
    \addplot [draw=none] coordinates {(0,0)};
    \end{axis}
 
\end{tikzpicture}}} & node value
&
  \begin{tikzpicture}[baseline,scale=0.2]
  \draw[thick] (0,0) -- (9,0);

  \foreach \i/\h in {
     0/-4,
     1/-3,
     2/-2,
     3/-1,
     4/0,
     5/1,
     6/2,
     7/3,
     8/4
  }{
    \pgfmathsetmacro{\v}{+\h*3/4/9+0.5}%
    \draw[vanimo fill=\v, draw=black, thin] (\i+0.2,0) rectangle (\i+0.8,\h/2);
  }
\end{tikzpicture}

& edge merge gain $\Delta$
\end{tabular}
}

\end{tabular}
}
    \vspace{-1mm}
    \caption{\textbf{Parallel Combinatorial Partition.} 
Our algorithm greedily approximates a graph signal with piecewise-constant components. 
Conflicting merges (nodes with multiple outgoing edges) are removed, enabling an efficient parallel implementation on GPUs.}
    \label{fig:merge}
\end{figure}

\begin{figure*}[t]
\centering
\small{
\resizebox{\textwidth}{!}{
\begin{tabular}{l@{}c@{}c@{}c@{}c}

\raiseandrotate{0.9}{\small S3DIS} 
&
\includegraphics[width=0.22\linewidth, height=0.15\linewidth] {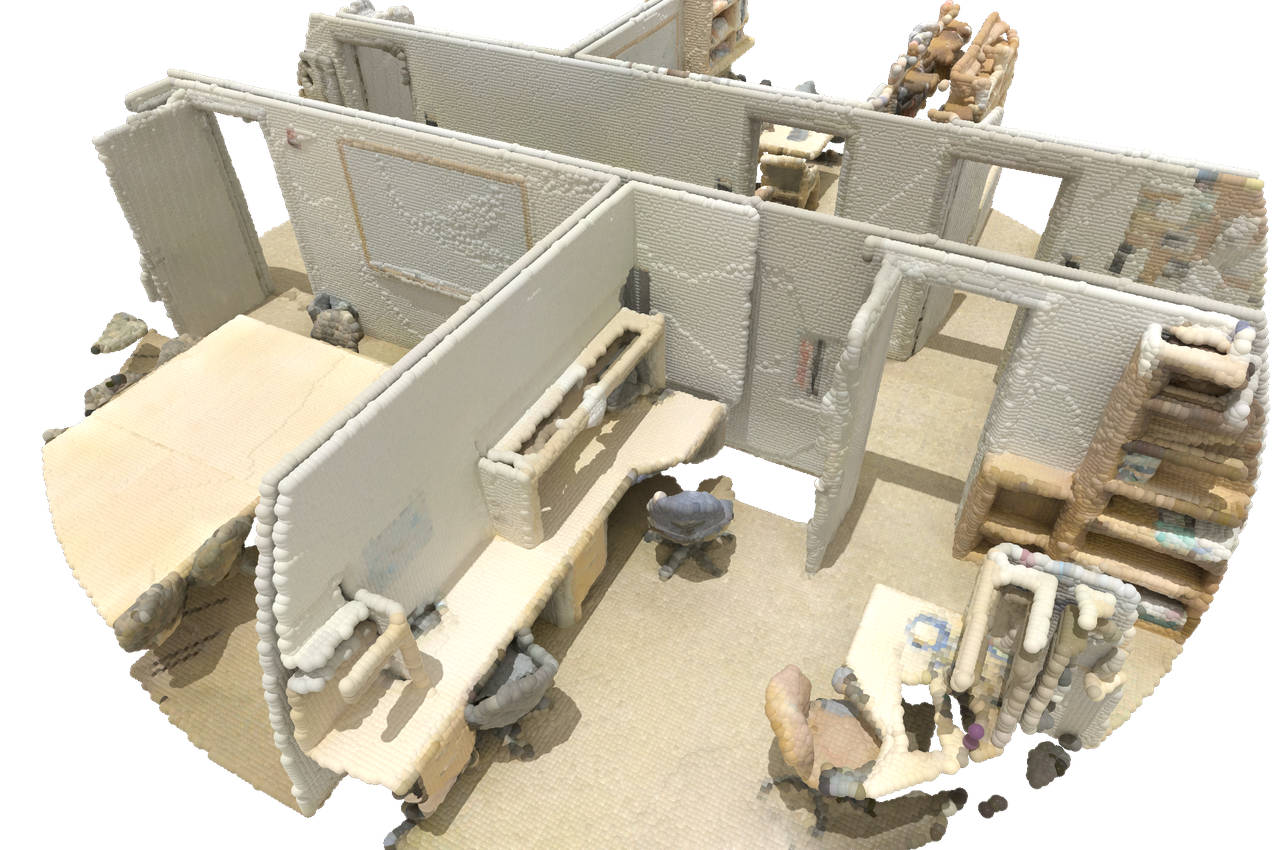}
&
\includegraphics[width=0.22\linewidth, height=0.15\linewidth]{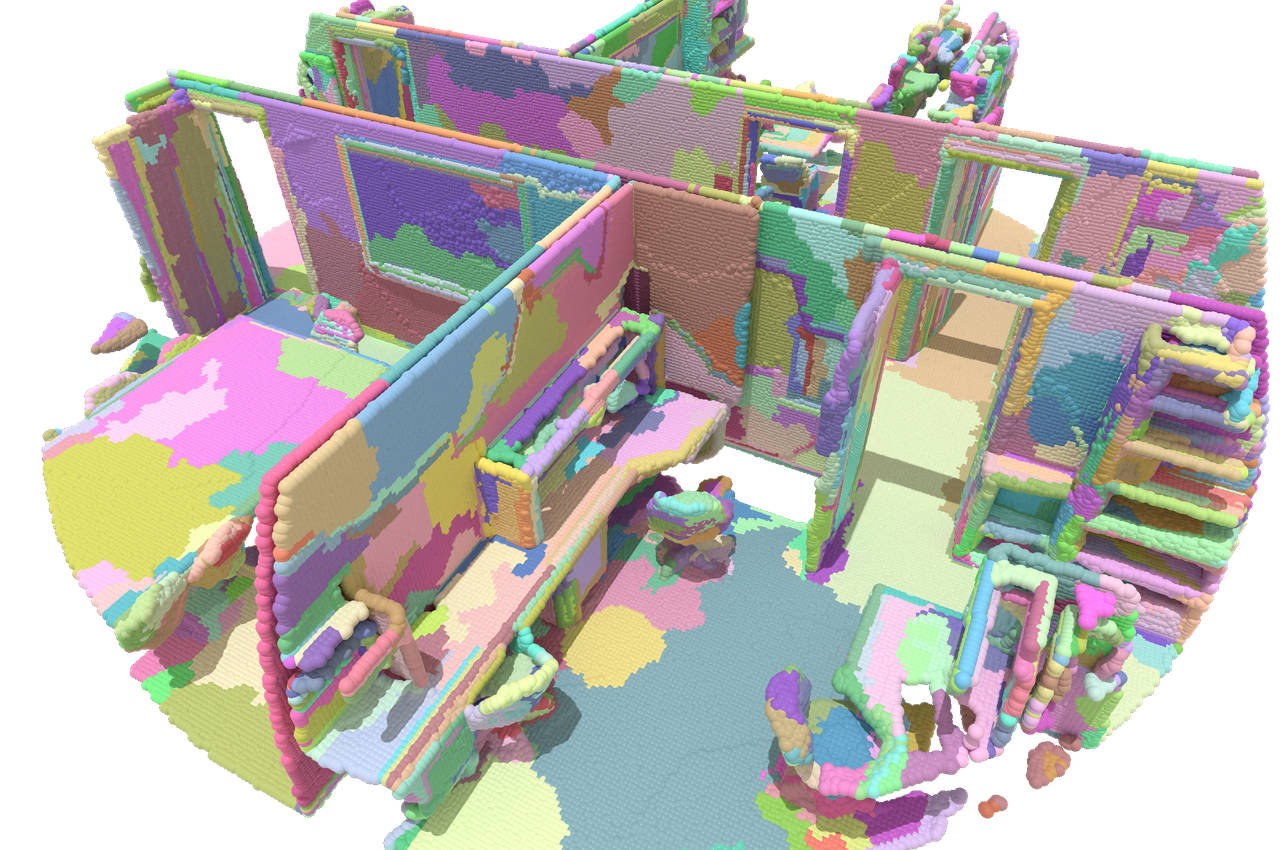}
&
\includegraphics[width=0.22\linewidth, height=0.15\linewidth]{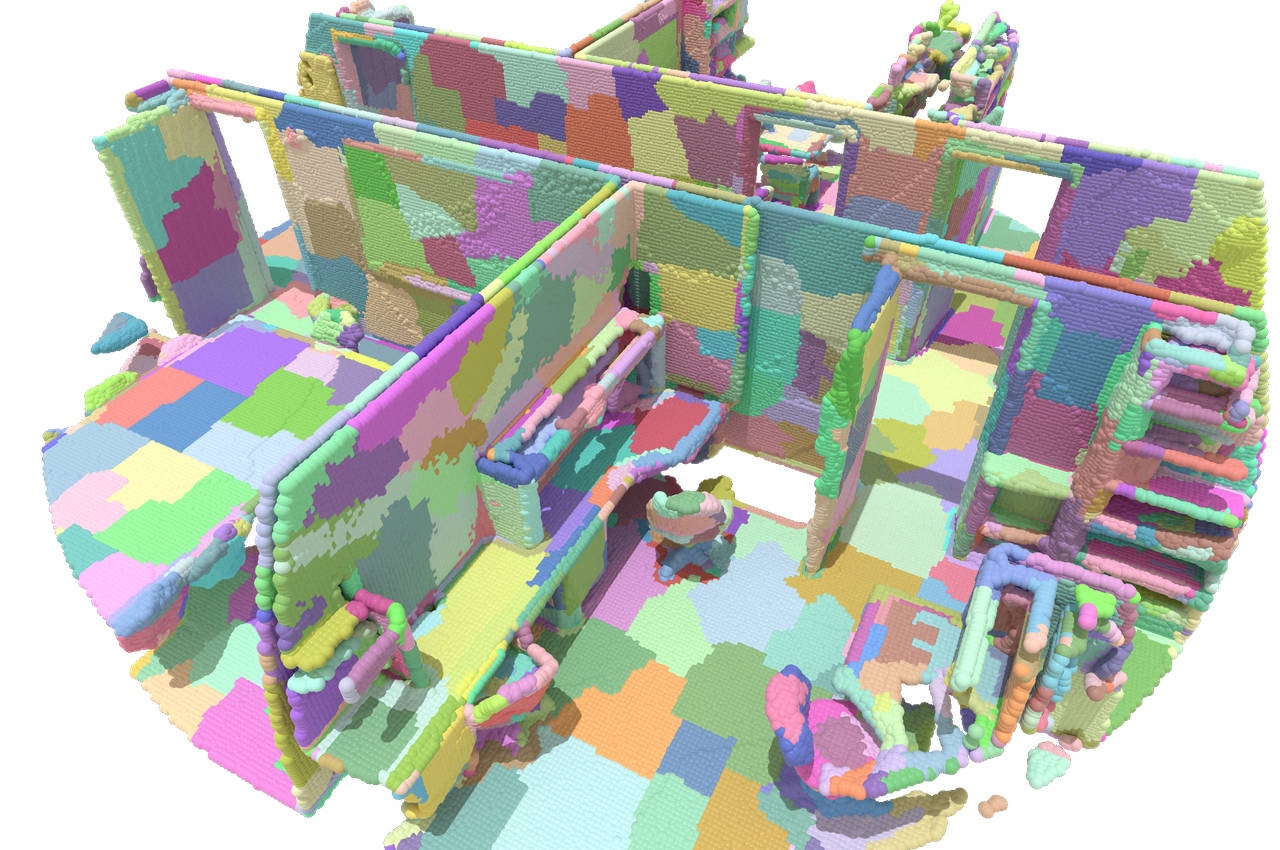}
&
\includegraphics[width=0.22\linewidth, height=0.15\linewidth]{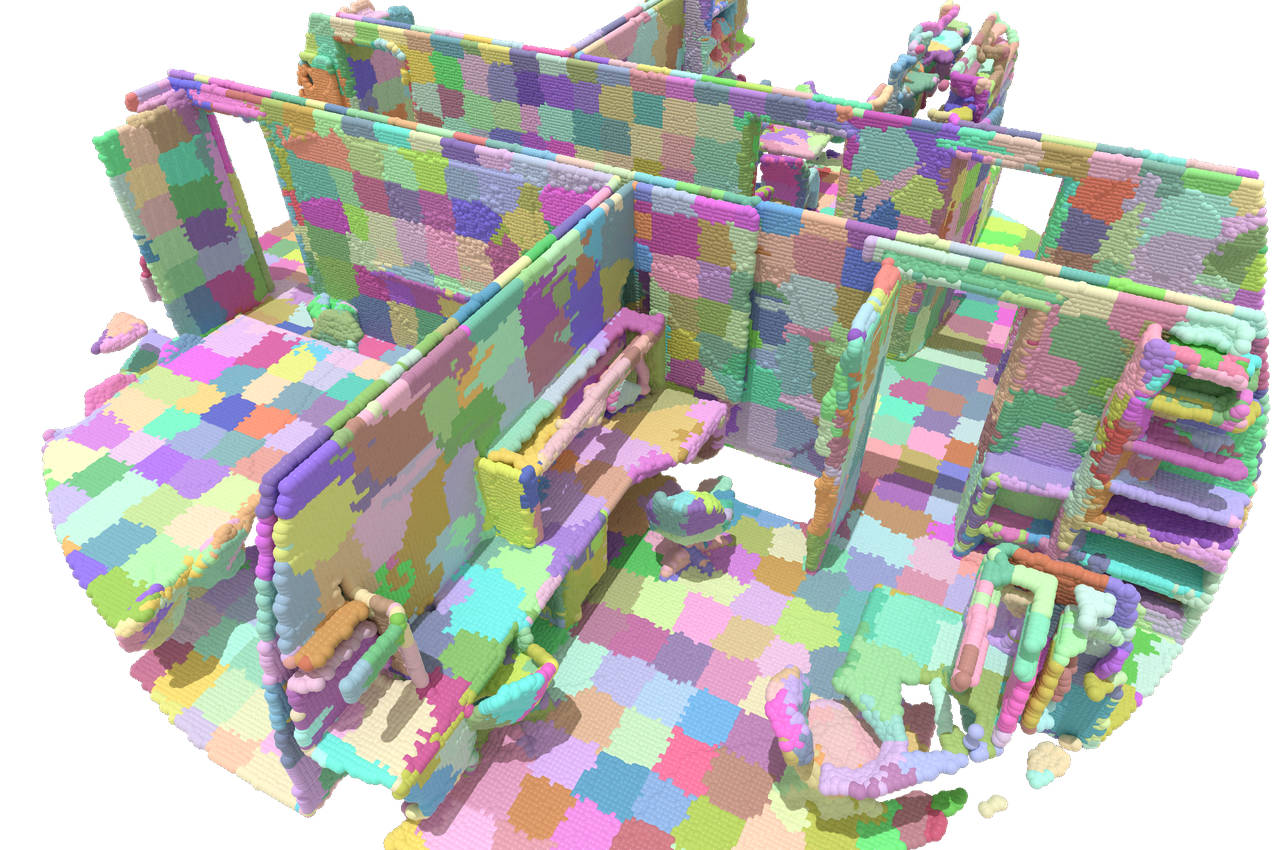}
\\
&
& 267k SP | OmIoU: 93.9
& 265k SP | OmIoU: 91.3
& 282k SP | OmIoU: 90.0
\\[2mm]
\raiseandrotate{0.6}{\small KITTI360} 
&
\includegraphics[width=0.22\linewidth, height=0.15\linewidth]{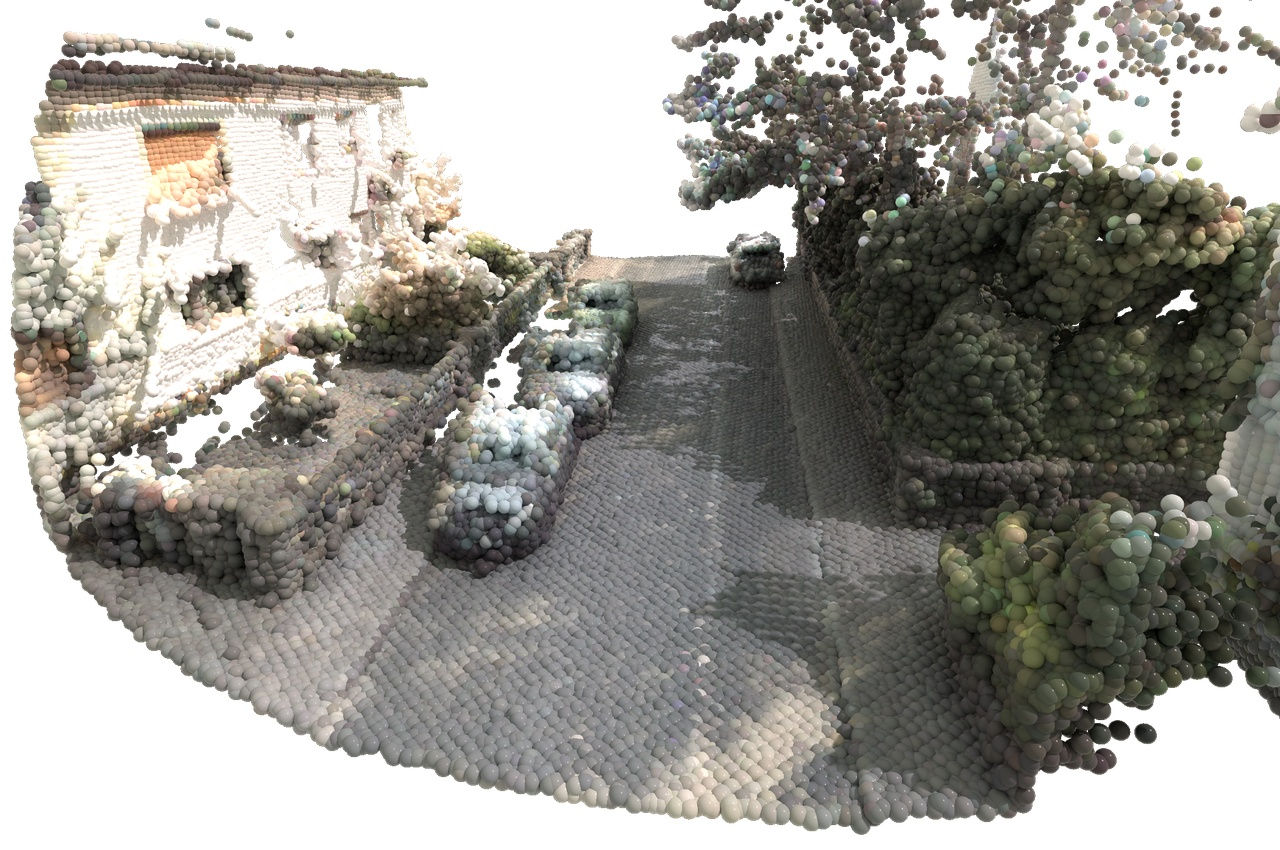}
&
\includegraphics[width=0.22\linewidth, height=0.15\linewidth]{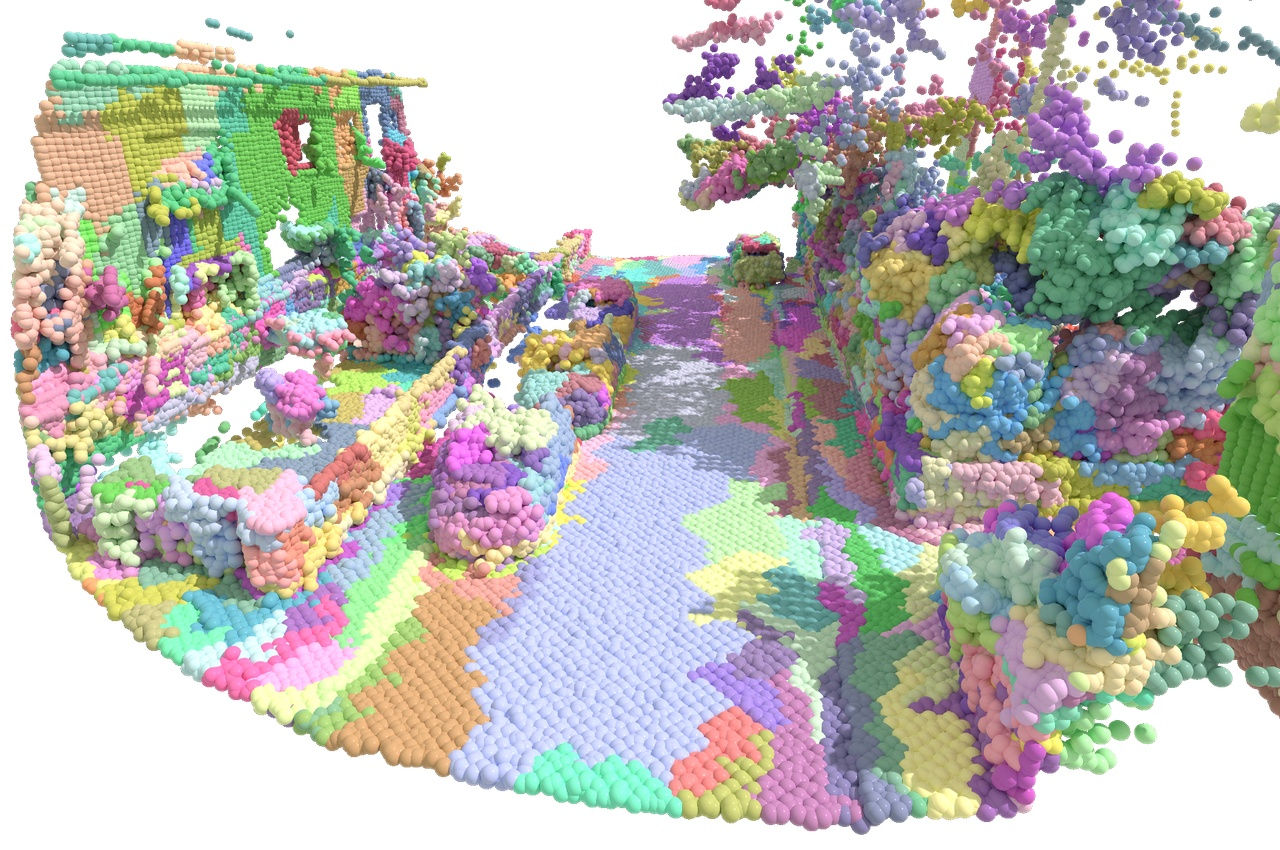}
&
\includegraphics[width=0.22\linewidth, height=0.15\linewidth]{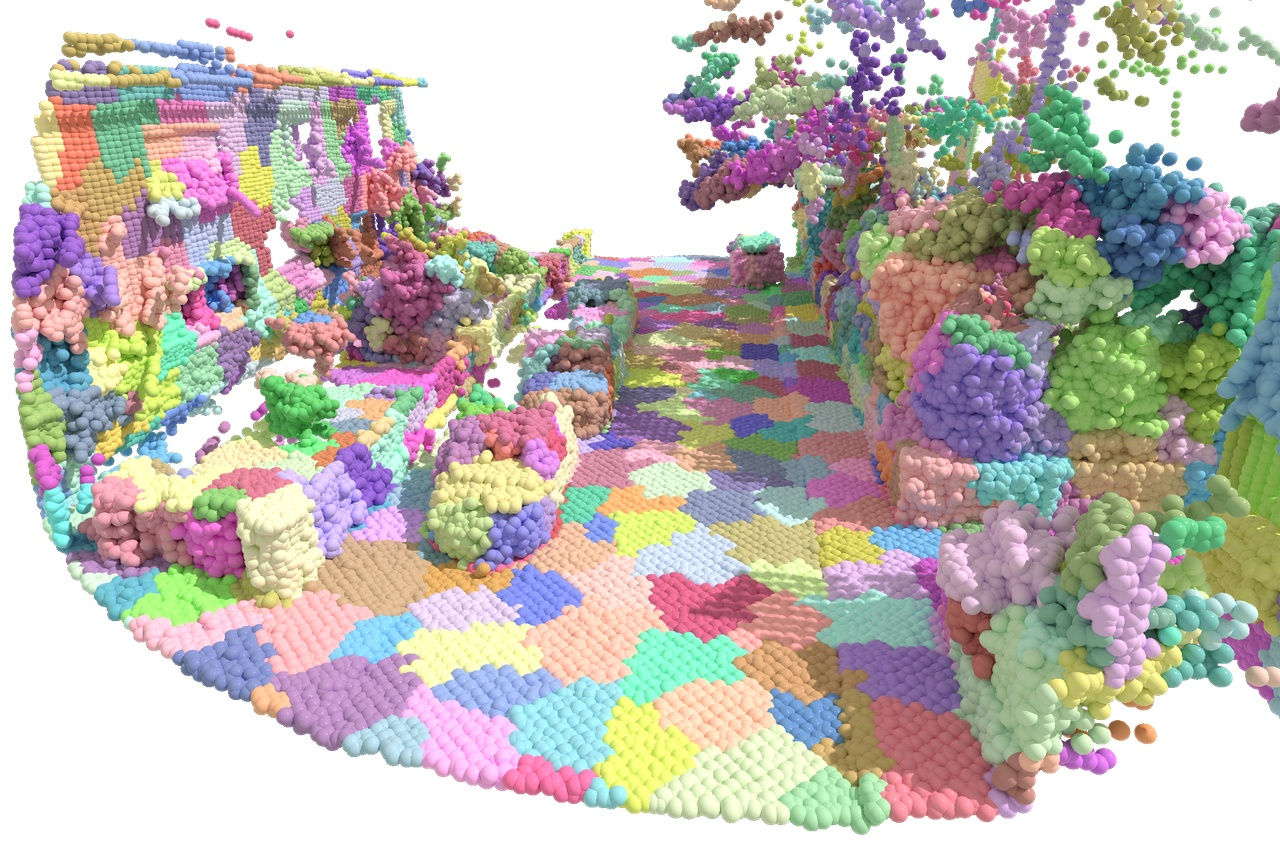}
&
\includegraphics[width=0.22\linewidth, height=0.15\linewidth]{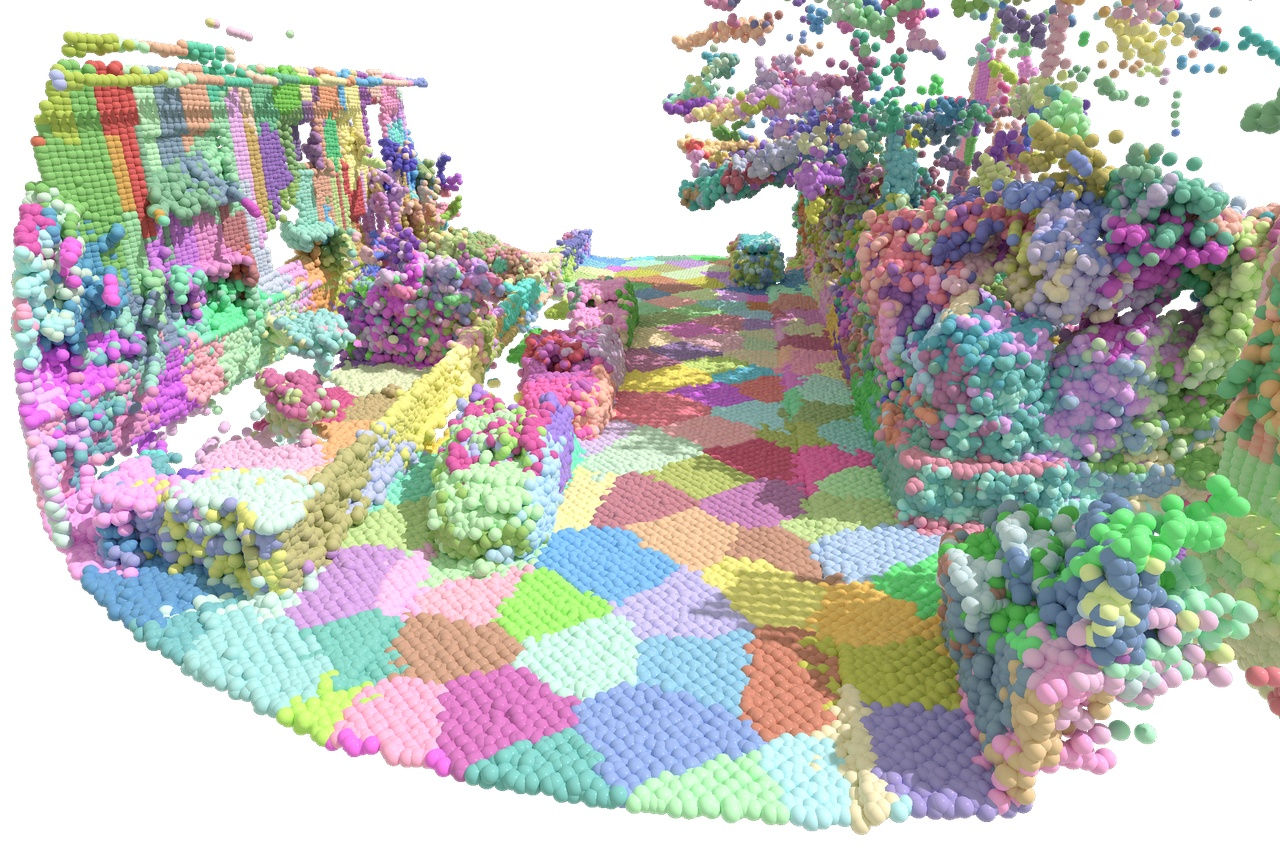}
\\
& 
& 500k SP | OmIoU: 82.8
& 506k SP | OmIoU: 83.2
& 518k SP | OmIoU: 73.6
\\[2mm]
\raiseandrotate{0.7}{\small DALES} 
&
\includegraphics[width=0.22\linewidth, height=0.15\linewidth]{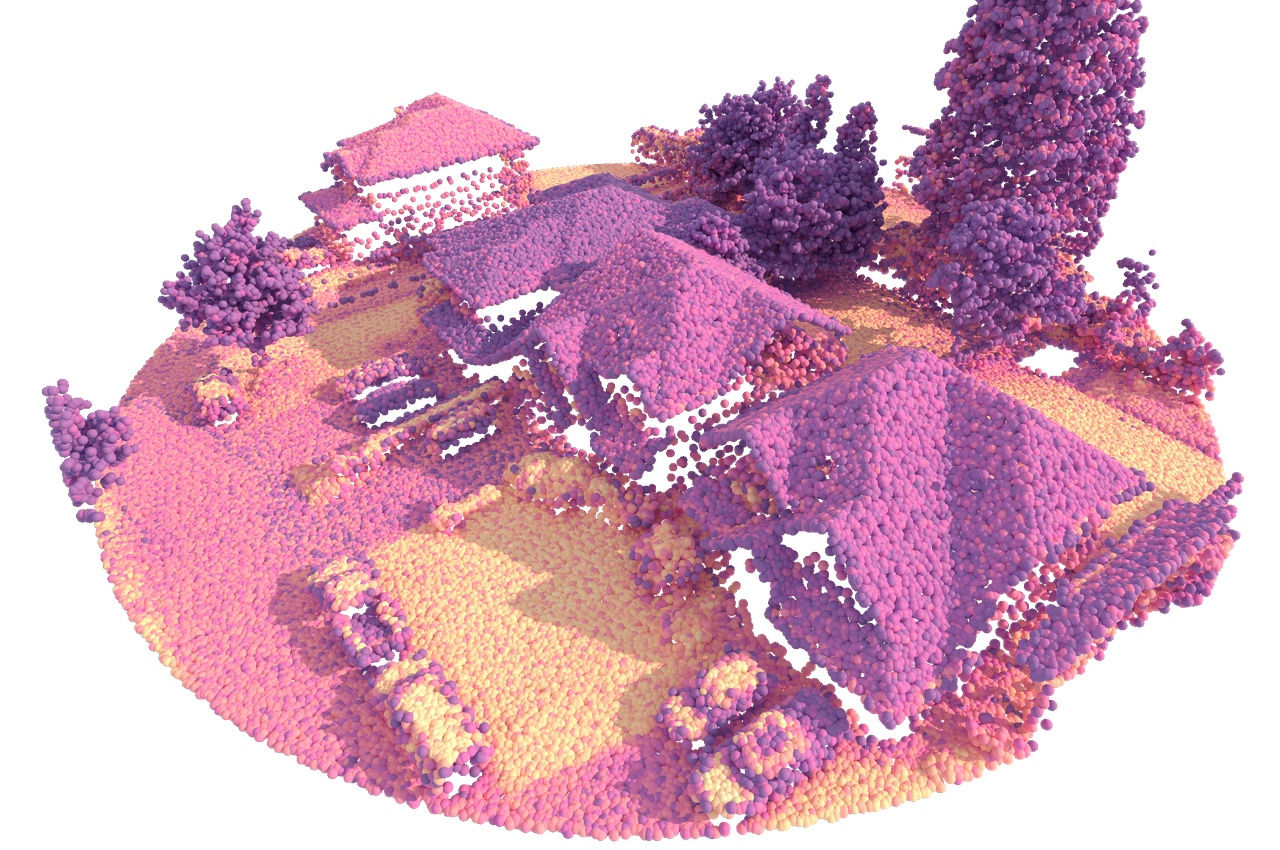}
&
\includegraphics[width=0.22\linewidth, height=0.15\linewidth]{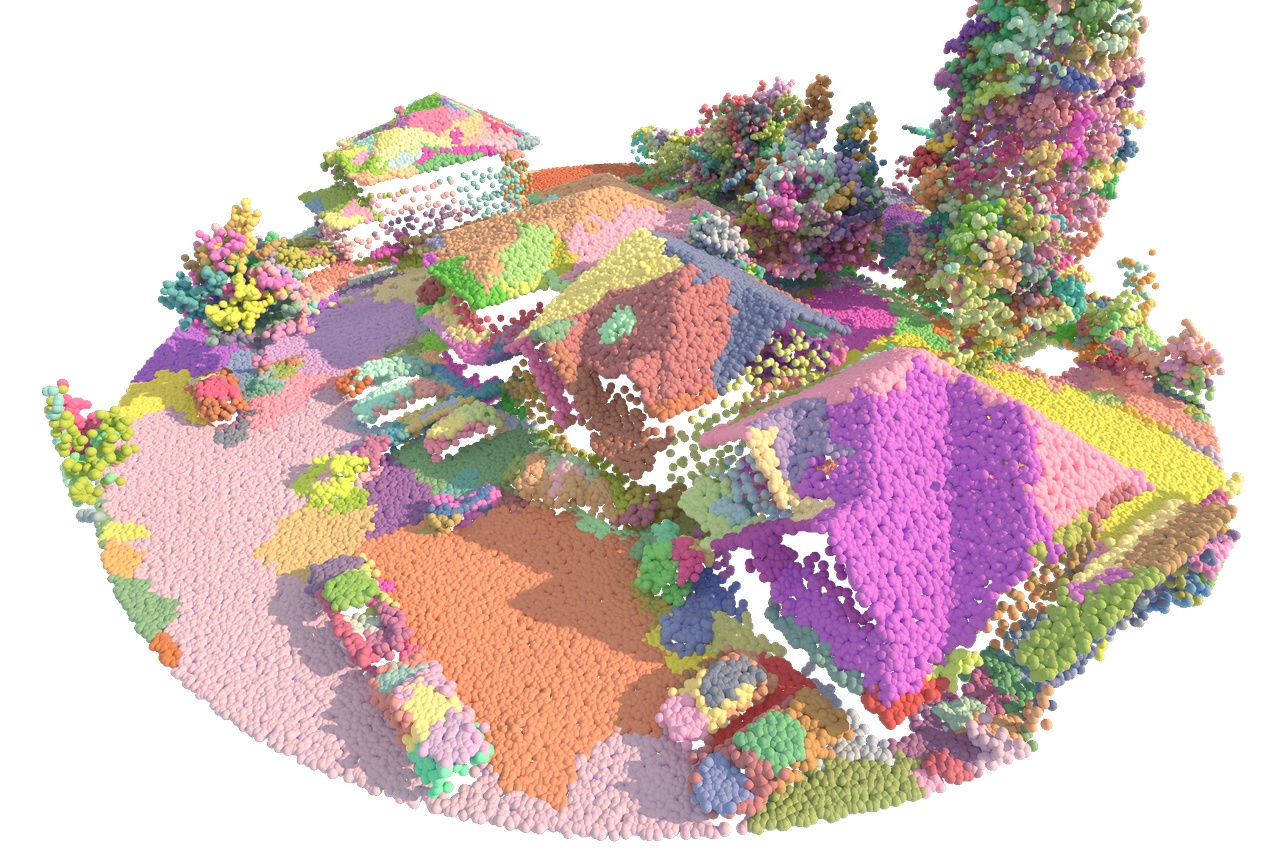}
&
\includegraphics[width=0.22\linewidth, height=0.15\linewidth]{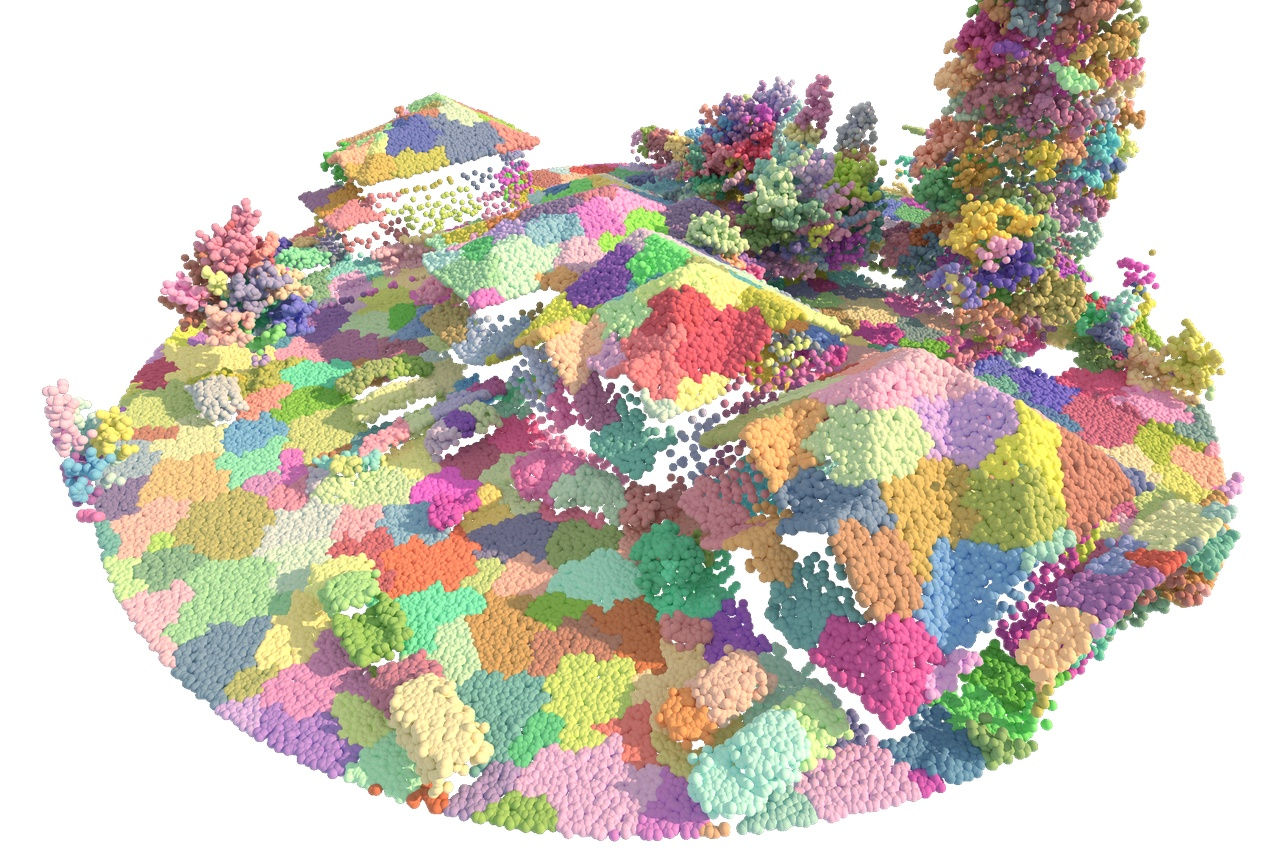}
&
\raisebox{1cm}{\makecell{
VCCS expects RGB\\
DALES has only intensity
}}
\\
& 
& 669k SP | OmIoU: 89.6
& 667k SP | OmIoU: 89.0
& 
\\
&\begin{subfigure}{.25\linewidth}
    \label{fig:quali:input}
    \caption{Input}
\end{subfigure}
&
\begin{subfigure}{.25\linewidth}
    \label{fig:quali:ezsp}
    \caption{\MODELNAME}
\end{subfigure}
&
\begin{subfigure}{.25\linewidth}
    \label{fig:quali:pcp}
    \caption{PCP}
\end{subfigure}
&
\begin{subfigure}{.25\linewidth}
    \label{fig:quali:vccs}
    \caption{VCCS}
\end{subfigure}
\end{tabular}}}

\vspace{-2mm}
\caption{
\textbf{Partition Examples.} 
Visualization of point cloud partitions across three datasets and three partitioning algorithms. 
\Cref{sec:data} shows the full dataset sizes; we also report, for each configuration, the resulting number of superpoints and the partition purity over the validation dataset (all folds for S3DIS).
}
\label{fig:quali_over}
\end{figure*}

\para{Parallel Implementation.}  
The combinatorial problem defined above can be approximated with a greedy merging strategy: at each step, adjacent superpoints $(P,Q)$ with the energy gain $\Delta(P,Q)$ are merged.
This process is inherently sequential, since each merge alters subsequent gains, making naive approaches ill-suited for GPUs.  
We therefore propose a bottom-up, GPU-parallel algorithm, illustrated in \cref{fig:merge}:  

\begin{compactitem}
    \item[0.] \textbf{Initialization.}  
    Each point is its own superpoint:  
    $\Partition = \{\{p\} \mid p \in \Cloud\}$ with adjacency $\SEdges = \Edges$.
    
    \item[1.] \textbf{Candidate Merges.}  
    We construct $\SEdges_\text{merge}$ the set of \emph{directed} edges $(P \rightarrow Q)$ for $(P,Q) \in \SEdges$ satisfying either  
    $\Delta(P,Q) > 0$ or $|P| < \sizemin$,  
    where $\sizemin$ is the minimum superpoint size.  
    If $\SEdges_\text{merge}$ is empty, return $\Partition$.  
    
    \item[2.] \textbf{Conflict Removal.}  
    To prevent conflicting merges, each superpoint may appear at most once as a source in $\SEdges_\text{merge}$.  
    For each $P$, we retain only the outgoing edge $(P \rightarrow \cdot)$ with the highest merge gain $\Delta$.  

    \item[3.] \textbf{Merging.}  
The remaining edges $\SEdges_\text{merge}$ define a directed merge graph $(\Partition,\SEdges_\text{merge})$.  
We compute its weakly connected components and update $\Partition$ with the resulting merged sets.  
This allows chain merges (\eg $(P\!\to\!R)$ and $(Q\!\to\!R)$) to be resolved in a single iteration.  

    \item[4.] \textbf{Recalibration.}  
    Update the node embeddings and merge gains for the new adjacency graph, then return to Step~1.
\end{compactitem}
Our approach makes heavy use of the highly optimized \texttt{scatter} operation~\cite{fey2019fast}, enabling efficient GPU parallelization.
Detailed pseudo-code and proofs of correctness will be released alongside an open-source GPU implementation.

\para{Hierarchical Partition.}  
The proposed algorithm can be applied recursively to produce a \emph{hierarchical} set of partitions, \ie  
$\Partition^{(1)}, \dots, \Partition^{(L)}$, where $\Partition^{(1)}$ is a partition of $\Cloud$ and  
$\Partition^{(l+1)}$ is a partition of $\Partition^{(l)}$.  
This is straightforward to implement by maintaining the adjacency graph between components at each stage.  
Such multi-scale partitioning is useful for downstream processing, as discussed in the next section.

\subsection{Semantic Segmentation}
\label{subsec:classification}

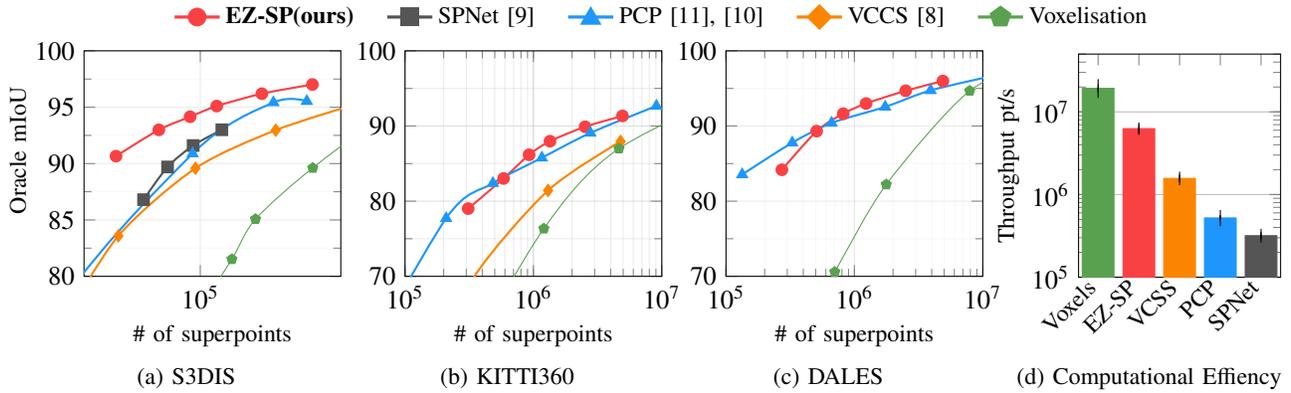
\begin{figure*}[t]
\centering
\definecolor{PAPONCOLOR}{HTML}{fb8500}
\definecolor{VOXELCOLOR}{HTML}{59A14F}

\pgfmathsetmacro{\N}{0}
\pgfmathsetmacro{\Mean}{0}
\pgfmathsetmacro{\Mtwo}{0}

\pgfkeys{/pgf/fpu=false}

\begin{tabular}{l@{}c@{}c@{}c@{}c}
\multicolumn{5}{c}{
\begin{tabular}{r@{\,}l r@{\,}l r@{\,}l r@{\,}l r@{\,}l}
\tikz[baseline=-0.5ex] {\draw[EZSPCOLOR, very thick] (0,0) -- (0.6,0); \node [circle, fill=EZSPCOLOR, minimum width=1.5mm, scale=0.75] at (0.3,0){};}
&
\small \bf \MODELNAME (ours)
&
\tikz[baseline=-0.5ex] {\draw[SPNETCOLOR, very thick] (0,0) -- (0.6,0); \node [regular polygon, regular polygon sides=4, fill=SPNETCOLOR, minimum width=1.25mm, scale=0.75] at (0.3,0){};}
&
\small SPNet \cite{hui2021superpoint}
&
\tikz[baseline=-0.5ex] {\draw[SPTCOLOR, very thick] (0,0) -- (0.6,0);  \node [regular polygon, regular polygon sides=3, fill=SPTCOLOR, minimum width=0.5mm, scale=0.5] at (0.3,-0.02){};}
&
\small PCP \cite{raguet2019parallel,robert2023efficient}
&
\tikz[baseline=-0.5ex] {\draw[PAPONCOLOR, very thick] (0,0) -- (0.6,0); \node [diamond, fill=PAPONCOLOR, minimum width=1.0mm, scale=0.75] at (0.3,0){};}
&
\small VCCS \cite{papon2013voxel}
&
\tikz[baseline=-0.5ex] {\draw[VOXELCOLOR, very thick] (0,0) -- (0.6,0); \node [regular polygon, fill=VOXELCOLOR, minimum width=1.5mm, scale=0.75] at (0.3,0){};}
&
\small Voxelisation
\end{tabular}
}\\
\raiseandrotate{1.4}{\small Oracle mIoU}
&
\begin{subfigure}{.24\linewidth}
 \begin{tikzpicture}
  \begin{axis}[
    width=0.8\textwidth,
    height=3cm,
    scale only axis,
    xmode=log,
    log basis x=10,
    xmin=2e4, xmax=7e5,
    ymin=80, ymax=100,
    xlabel={\small \# of superpoints},
    tick align=inside,
    grid=both,
    minor tick num=1,
    major grid style={opacity=0.3},
    minor grid style={opacity=0.1},
  ]
    \addplot+[mark=*, thick, smooth, EZSPCOLOR, mark options={solid, fill=EZSPCOLOR}] table [x=sp, y=omiou] {figures/data/over/ezsp_s3dis_val_hec.csv};
    
    \addplot+[mark=square*, thick, smooth, SPNETCOLOR, mark options={solid, fill=SPNETCOLOR}] table  [x=sp, y=omiou] {figures/data/over/spnet_s3dis_val_hec.csv};
    
    \addplot+[mark=triangle*, thick, smooth, SPTCOLOR, mark options={solid, fill=SPTCOLOR}] table [x=sp, y=omiou] {figures/data/over/pcp_s3dis_val_hec.csv};
    
    \addplot+[mark=diamond*, thick, smooth, PAPONCOLOR, mark options={solid, fill=PAPONCOLOR}] table [x=sp, y=omiou] {figures/data/over/vccs_s3dis.csv};
    
    \addplot+[mark=pentagon*, smooth, VOXELCOLOR, mark options={solid,fill=VOXELCOLOR}] table [x=sp, y=omiou] {figures/data/over/VoxelPartition_s3dis_val.csv};
  \end{axis}
\end{tikzpicture}
    \label{fig:over:s3dis}
    \vspace{-5mm}
    \caption{S3DIS}
\end{subfigure}
&
\begin{subfigure}{.24\linewidth}
 \begin{tikzpicture}
  \begin{axis}[
     width=0.8\textwidth,
    height=3cm,
    scale only axis,
    xmode=log,
    log basis x=10,
    xmin=1e5, xmax=1e7,
    ymin=70, ymax=100,
    xlabel={\small \# of superpoints},
    tick align=inside,
    grid=both,
    minor tick num=1,
    major grid style={opacity=0.5},
    minor grid style={opacity=0.3},
  ]
    
    \addplot+[mark=*, thick, smooth, EZSPCOLOR, mark options={solid, fill=EZSPCOLOR}] table [x=sp, y=omiou] {figures/data/over/ezsp_kitti360_val_hec.csv};
    
    \addplot+[mark=triangle*, thick, smooth, SPTCOLOR, mark options={solid, fill=SPTCOLOR}] table [x=sp, y=omiou] {figures/data/over/pcp_v2_kitti360_val_hec.csv};

    \addplot+[mark=diamond*, thick, smooth, PAPONCOLOR, mark options={solid, fill=PAPONCOLOR}] table [x=sp, y=omiou, col sep=comma] {figures/data/over/vccs_kitti360.csv};
    
    \addplot+[mark=pentagon*, smooth, VOXELCOLOR, mark options={solid,fill=VOXELCOLOR}] table [x=sp, y=omiou] {figures/data/over/VoxelPartition_kitti360_val.csv};
  \end{axis}
  
\end{tikzpicture}
    \label{fig:over:kitti}
    \vspace{-5mm}
    \caption{KITTI360} 
\end{subfigure} 
&
\begin{subfigure}{.24\linewidth}
 \begin{tikzpicture}
  \begin{axis}[
    width=0.8\textwidth,
    height=3cm,
    scale only axis,
    xmode=log,
    log basis x=10,
    xmin=1e5, xmax=1e7,
    ymin=70, ymax=100,
    xlabel={\small \# of superpoints},
    tick align=inside,
    grid=both,
    minor tick num=1,
    major grid style={opacity=0.3},
    minor grid style={opacity=0.1},
  ]
    \addplot+[mark=*, thick, smooth, EZSPCOLOR, mark options={solid, fill=EZSPCOLOR}] table [x=sp, y=omiou] {figures/data/over/ezsp_dales_val_hec.csv};
    
    \addplot+[mark=triangle*, thick, smooth, SPTCOLOR, mark options={solid, fill=SPTCOLOR}] table [x=sp, y=omiou] {figures/data/over/pcp_v2_dales_val_hec.csv};

    \addplot+[mark=pentagon*, smooth, VOXELCOLOR, mark options={solid,fill=VOXELCOLOR}] table [x=sp, y=omiou] {figures/data/over/VoxelPartition_dales_val.csv};
    
  \end{axis}
\end{tikzpicture}
    \label{fig:over:dales}
    \vspace{-5mm}
    \caption{DALES} 
\end{subfigure}
&
\begin{subfigure}{.24\linewidth}
\begin{tikzpicture}
  \begin{axis}[
    width=1\textwidth,
    height=4.55cm,
    ymode=log,
    ylabel={\small Throughput pt/s},
    ylabel style={yshift=-3mm},
    xtick={1,2,3,4,5},
    xticklabels={\small Voxels, \small \MODELNAME, \small VCSS, \small PCP, \small SPNet},
    xlabel={\vphantom{\small \# of superpoints}},
    ymajorgrids,
    bar width=12pt,
    enlarge x limits=0.12,
    ymin=1e5, ymax=5e7, 
    x tick label style={rotate=45, anchor=east, align=center},
  ]
\plotbarerror{1}{figures/data/over/VoxelPartitionPreVoxelized_s3dis_val_hec.csv}{VOXELCOLOR}

\plotbarerror{2}{figures/data/over/ezsp_s3dis_val_hec_2.csv}{EZSPCOLOR}

\plotbarerror{3}{figures/data/over/vccs_s3dis.csv}{PAPONCOLOR}

\plotbarerror{4}{figures/data/over/pcp_s3dis_val_hec_2.csv}{SPTCOLOR}

\plotbarerror{5}{figures/data/over/spnet_s3dis_val_hec.csv}{SPNETCOLOR}

  \end{axis}
\end{tikzpicture}
 \label{fig:over:throughput}
    \vspace{-0.9mm}
    \caption{Computational Effiency}
       
\end{subfigure}
\end{tabular}
\vspace{-2mm}
\caption{
\textbf{Oversegmentation Performance.} 
Oracle mIoU as a function of the number of superpoints on S3DIS, KITTI-360, and DALES. 
We also report the throughput (from raw points to superpoints) on S3DIS, with error bars indicating variance across configurations. 
\MODELNAME~achieves partition purity comparable to or better than PCP while being \emph{over \OVERSEGSPEEDUPVSPCP faster}.
}
\label{fig:over}
\end{figure*}

Once the initial point cloud $\Cloud$ is partitioned into superpoints $\Partition$,  
we can apply a superpoint-based classifier to predict their semantic labels.  
Labels are then broadcast from superpoints back to their constituent points,  
allowing the inference stage to operate entirely on the much smaller set $\Partition$  
while still producing dense predictions over $\Cloud$.  

\para{Superpoint Classification.}  
For classification, we employ the SuperPoint Transformer (SPT)~\cite{robert2023efficient} due to its strong balance between accuracy and efficiency. 
We retain the default configuration with three key modifications:  
\begin{compactitem}
    \item \textbf{Simplified Hyperparameters:} We remove all CPU-bound preprocessing and their  hyperparameters.  
    Partition coarseness is controlled solely by one parameter per partition level: the minimum superpoint size.  
    \item \textbf{Efficient GPU Pipeline:} because partitioning is fully GPU-based, vectorized operations run faster, and costly CPU--GPU data transfers are eliminated or optimized.
    \item \textbf{Hierarchical Architecture:} we preserve most of the original SPT design but extend it to three nested partition levels to leverage our hierarchical superpoints.  
\end{compactitem}

\section{Experiments}

We evaluate \MODELNAME~on three large-scale 3D segmentation benchmarks.  
We first outline the experimental setup (\cref{sec:setting}), then assess partition quality and efficiency (\cref{sec:overseg}).  
Next, we report semantic segmentation performance with a downstream superpoint classifier (\cref{sec:semseg}), followed by an ablation study (\cref{sec:ablation}).

\subsection{Experimental Setting}
\label{sec:setting}
\label{sec:data}
\para{Datasets.}
We evaluate on three datasets covering various sensing modalities and scales:  
\begin{compactitem}
\item \textbf{S3DIS}~\cite{armeni20163d}: Indoor scans of six large building floors, totaling $273$M points annotated in $13$ classes.  
Following~\cite{thomas2019kpconv}, we use the \emph{merged} version, where each floor is treated as a single point cloud.  

\item \textbf{KITTI-360}~\cite{liao2022kitti}: Mobile mapping LiDAR with $919$M points and 15 classes, spanning $300$ large-scale urban scenes, including $61$ validation scans.  

\item \textbf{DALES}~\cite{varney2020dales}: Aerial LiDAR over urban and suburban areas with $492$M points across $8$ classes, comprising $40$ scans, $12$ reserved for evaluation.  
\end{compactitem}
We subsample all point clouds on a regular grid: 3\,cm for S3DIS, 10\,cm for KITTI-360 and DALES.

\para{Implementation Details.}  
For partitioning, we fix $\tau{=}1$ in \cref{eq:softmax}, $w_{p,q}{=}1$, and build adjacency from the 8-nearest neighbors.  
The regularization in \cref{eq:gmpp} is $\lambda{=}0.02$, and the intra-edge sampling ratio in \cref{eq:loss} is $\rho_\text{intra}{=}0.1$ for S3DIS and $0.3$ for KITTI-360/DALES.   

The backbone $\phi^\text{point}$ is a sparse CNN~\cite{graham20183d} implemented with TorchSparse~\cite{tangandyang2023torchsparse}, with three layers of width $[32,32,32]$.  
Kernels are $3^3$ except in the first layer for KITTI-360/DALES ($7^3$).  
The embedding dimension is $M{=}32$, yielding models of 58k (S3DIS), 89k (KITTI-360), and 67k (DALES) parameters.
We compute three-level hierarchical partitions with minimal superpoint sizes of $[5,30,90]$ (S3DIS/KITTI-360) and $[5,15,70]$ (DALES).

For segmentation, we use a modified SPT-64 on S3DIS and DALES, and an SPT-128 on KITTI-360: we add a third hierarchical stage, reinstate the feed-forward layer in the transformer block (for DALES only), and concatenate  color, position, and elevation with the CNN feature map to form the point embeddings. 
This yields segmentation heads with $330$k (S3DIS), $870$k (KITTI-360), and $425$k (DALES) parameters.
To fight class imbalance, we trained with a focal loss \cite{lin2017focal} ($\gamma=1$ for S3DIS and $\gamma=2$ for KITTI-360/DALES).
%
All models are trained with Adam (default hyperparameters) and cosine learning rate scheduling with 50 warm-up epochs.

\subsection{Oversegmentation Results}
\label{sec:overseg}
We compare the quality of \MODELNAME's partitions against classical and learning-based oversegmentation methods.

\para{Metrics.}  
We follow the common practice of evaluating \emph{oversegmentation}, \ie partitioning a point cloud into compact regions that ideally align with semantic objects.  
Since the downstream classifier operates on superpoints, partition quality is measured by two criteria: the \emph{number of superpoints} produced and the \emph{oracle mIoU}~\cite{landrieu2019point}, defined as the mIoU obtained by assigning each superpoint its majority ground-truth label.  
This provides an upper bound on the segmentation accuracy achievable with a given partition.
We also report throughput, obtained either from official model logs or from our own re-runs. All measurements are conducted on comparable Ampere- or Ada-generation GPUs.  

\para{Baselines and Competing Methods.}
We compare against:
\begin{compactitem}
\item \textbf{Voxelization:} Uniform voxel grid grouping.
\item \textbf{VCCS}~\cite{papon2013voxel}: A classical voxel-based oversegmentation based on $k$-means.
\item \textbf{Parallel Cut Pursuit (PCP)}~\cite{raguet2019parallel}: The updated graph-cut partitioner~\cite{landrieu2017cut} used in SPT~\cite{robert2023efficient}.
\item \textbf{SPNet}~\cite{hui2021superpoint}: Learns partitions via differentiable $k$-means.
\end{compactitem}

\begin{table*}[t]
\caption{
\textbf{Efficiency and Performance.}  
End-to-end processing time on the full S3DIS dataset ($273$M points), broken down by stage.  
We also compare model size and semantic segmentation performance on S3DIS, KITTI-360, and DALES.  
}
\label{tab:bigtable}
\centering
\vspace{-2mm}
\centering\small{
\begin{tabular}{r@{ : }l r@{ : }l r@{ : }l r@{ : }l}
     \textcolor{cPreproc}{\faCog}&preprocessing& \textcolor{cPartition}{\faCut}&partition&
     \textcolor{cInference}{\faRobot}&semantic segmentation&
     \textcolor{black}{\faStopwatch}&total time 
\end{tabular}

\begin{tabular}{l@{\;\;\;}l ccccc cc c c}
    \toprule
    \multicolumn{2}{l}{\multirow{3}{*}{Model}} &
    \multicolumn{4}{c}{Inference time (in GPU-s) $\downarrow$} &
    Size $\downarrow$ &
    \multicolumn{4}{c}{Performance (mIoU) $\uparrow$} \\
    
    \cmidrule(lr){3-6}
    \cmidrule(lr){7-7}
    \cmidrule(lr){8-11}
    
    && \multirow{2}{*}[-1mm]{\textcolor{cPreproc}{\faCog}}
       & \multirow{2}{*}[-1mm]{\textcolor{cPartition}{\faCut}}
       & \multirow{2}{*}[-1mm]{\textcolor{cInference}{\faRobot}}
       & \multirow{2}{*}[-1mm]{\textcolor{black}{\faStopwatch}}
       & \multirow{2}{*}[-1mm]{\makecell{$\times10^6$\\params.}}
       & \multicolumn{2}{c}{S3DIS}
       & \multicolumn{1}{c}{K-360}
       & \multirow{1}{*}{DALES} \\
       
    \cmidrule(lr){8-9} \cmidrule(lr){10-10} \cmidrule(lr){11-11}
    
    && &&&&
       & \footnotesize{6-Fold}
       & \footnotesize{Area~5}
       & \multicolumn{1}{c}{val}
       & test \\
       
    \midrule 
    
    \multirow{6}{*}{\rotatebox{90}{point/voxel}} 
      & PointNet++ \cite{qi2017pointnetpp} 
        & \hphantom{3}125   
        & -      
        & \hphantom{K}52  
        & \hphantom{K}\negphantom{1}177 
        & \hphantom{4}3.0  
        & 56.7              
        & -      
        & -     
        & 68.3 
        \\
        
      & KPConv \cite{thomas2019kpconv} 
        & 1031 
        & -      
        & \hphantom{K}\negphantom{3}354  
        & \hphantom{K}\negphantom{13}1385
        & 14.1
        & 70.6              
        & 67.1   
        & -     
        & \bf 81.1 
        \\
        
      & MinkowskiNet \cite{choy20194d} 
        & \hphantom{3}887   
        & -      
        & \negphantom{3}\hphantom{K}302 
        & \hphantom{K}\negphantom{11}1189  
        & 37.9
        & 69.1              
        & 65.4   
        & 58.3  
        & - 
        \\
        
      & PointNeXt-XL \cite{qian2022pointnext} 
        & -                 
        & -      
        & 66k              
        & 66k            
        & 41.6
        & 74.9              
        & 71.1   
        & -     
        & - 
        \\
        
      & Strat. Trans. \cite{lai2022stratified}
        & -                 
        & -      
        & 26k   
        & 26k 
        & \hphantom{4}8.0
        & 74.9              
        & 72.0   
        & -     
        & 74.3 
        \\
        
      & PTv3 \cite{wu2024point}  
        & -                 
        & -      
        & 11k   
        & 11k  
        & 46.2
        & \bf 77.7          
        & \bf 73.4 
        & -   
        & - 
        \\
        
    \greyrule
    
    \multirow{5}{*}{\rotatebox{90}{superpoint}} 
      & SPG \cite{landrieu2018spg} 
        & 3187              
        & 2616   
        & \hphantom{K}56  
        & \hphantom{K}\negphantom{58}5859  
        & \hphantom{3}0.28
        & 62.1              
        & 58.0   
        & -     
        & 60.6 
        \\
        
      & SSP \cite{landrieu2019point}  
        & 3220
        & 2616
        & \hphantom{K}56
        & \hphantom{K}\negphantom{58}5892
        & \hphantom{1}0.29
        & 68.4              
        & 61.7   
        & -     
        & -  
        \\
        
      & SPNet \cite{hui2021superpoint} 
        & 3187
        & \hphantom{2}445
        & \hphantom{K}56
        & \hphantom{K}\negphantom{36}3688
        & \hphantom{1}0.33
        & 68.7              
        & -      
        & -     
        & - 
        \\
        
      & SPT \cite{robert2023efficient}  
        & \hphantom{1}376
        & \hphantom{1}418
        & \bf\hphantom{K}14
        & \hphantom{K}\negphantom{8}808
        & \bf\hphantom{1}0.21
        & 76.0              
        & 68.9   
        & \bf 63.5 
        & 79.6 
        \\
        
      & \bf \MODELNAME (ours)
        & \bf\hphantom{1}136  
        & \bf\hphantom{244}3
        & \bf\hphantom{K}14
        & \bf\hphantom{K}\negphantom{1}153
        & \hphantom{1}0.39
        & 76.1
        & 69.6
        & 62.0
        & 79.4
        \\
      
    \bottomrule
\end{tabular}

}
\end{table*}

\para{Results.}
\Cref{fig:over} reports the purity of the partition obtained by all methods on three benchmarks, along with their throughput.  
\MODELNAME~consistently achieves a purity higher or on par with the best oversegmentation methods for the same number of superpoints, while being a full \emph{order of magnitude faster}.
Although our greedy solver yields an objective value roughly 25\% higher than PCP when minimizing~\cref{eq:gmpp}, the resulting partitions exhibit comparable semantic purity in practice.
The purity of VCCS's and SPNet's superpoints is limited by their reliance on the rigid $k$-means algorithm. Moreover, VCCS's  CPU-based implementation limits its throughput.  
SPNet, despite being trained for partitioning, underperforms in purity and requires $\sim$6\,h of training, against fewer than ${20}$ minutes for \MODELNAME. Voxelization naturally remains the fastest partitioning algorithm, but also yields the least semantically pure partitions. 

\para{Qualitative Analysis.} We report qualitative examples of partitions in \cref{fig:quali_over}.  
The $k$-means–based method VCCS fails to adapt to local complexity and produces partitions that resemble uniform tessellations.  
PCP adapts better to variations in complexity, but still tessellates large, simple surfaces.  
In contrast, \MODELNAME~yields the most adaptive partitions: it produces large, semantically pure superpoints on simple structures such as ground, roofs, or walls, while allocating small superpoints to geometrically complex regions.  

\begin{figure*}
\centering
\newcommand{\sdisInput}{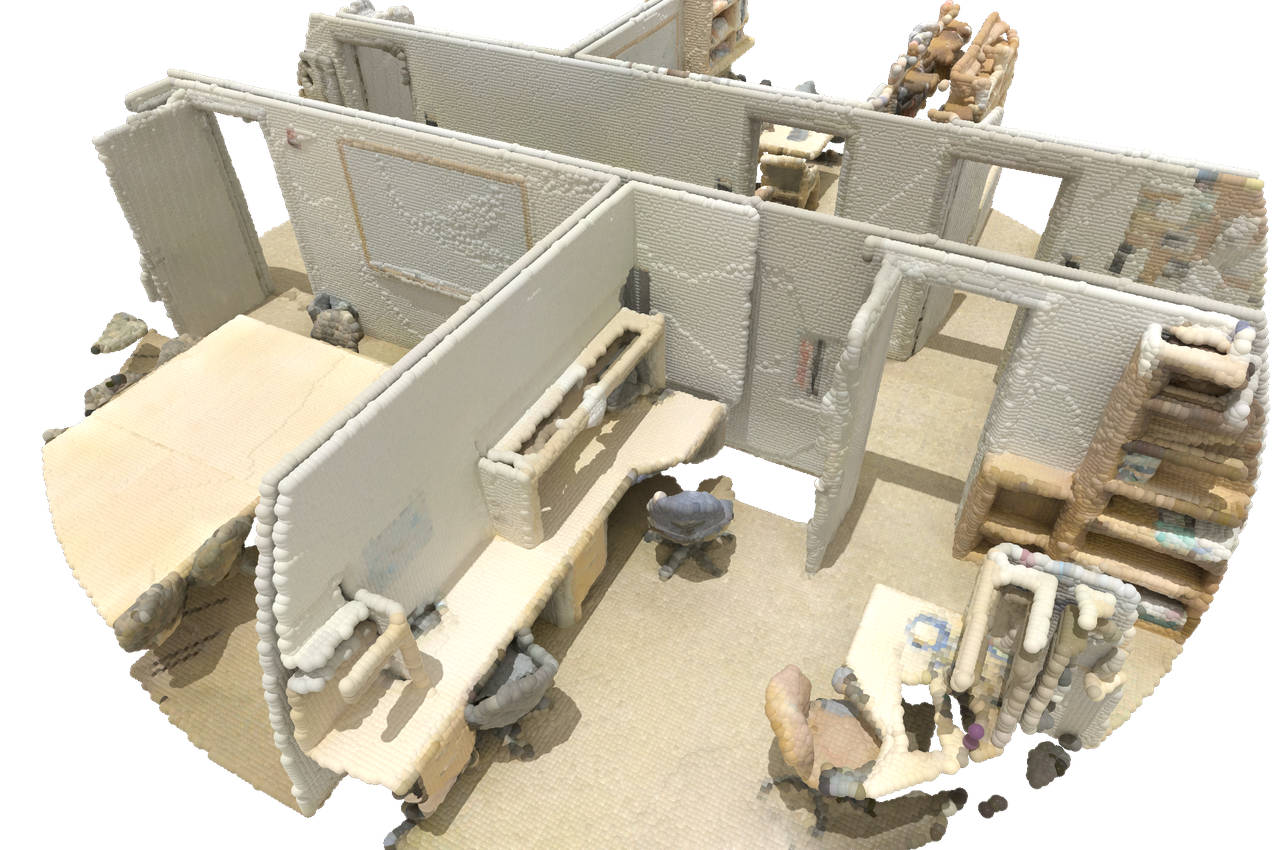}
\newcommand{\sdisGT}{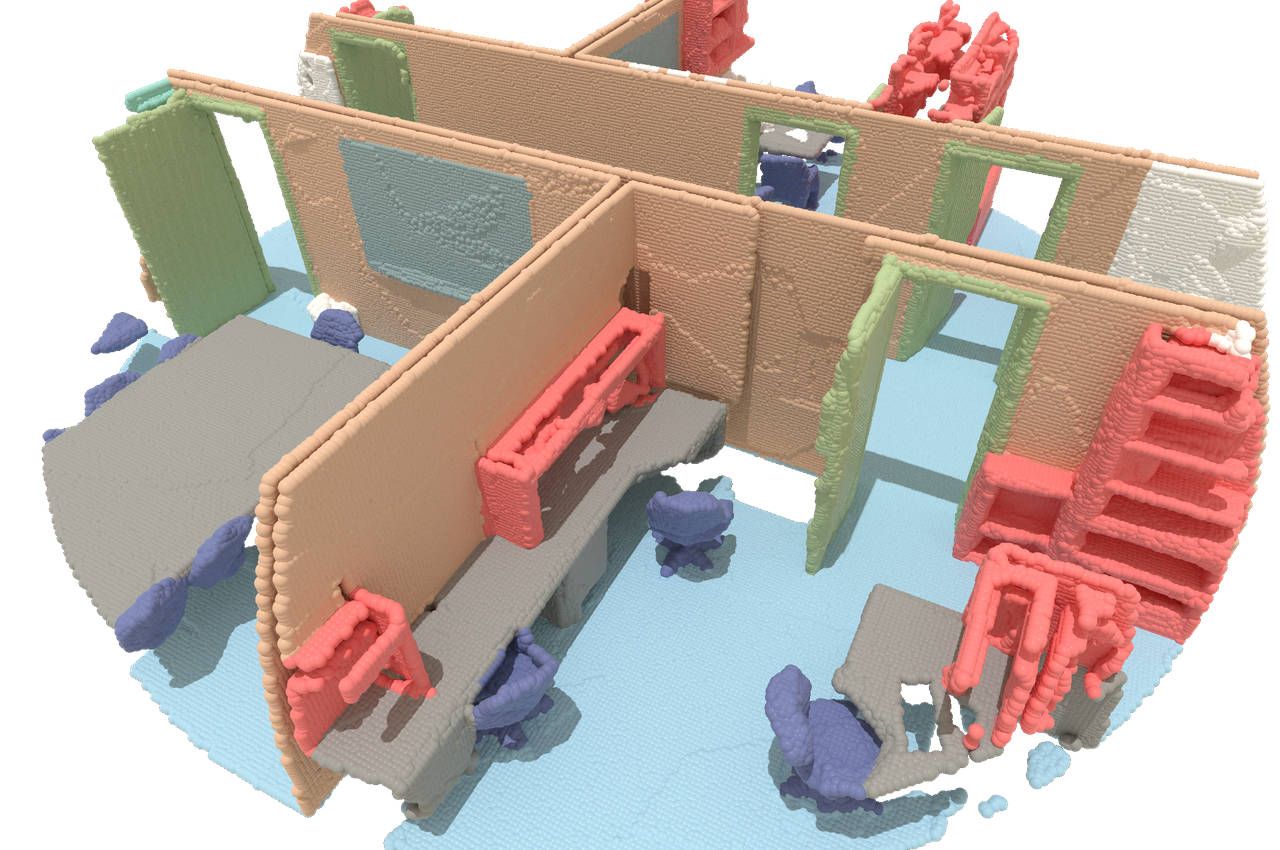}
\newcommand{\sdisPred}{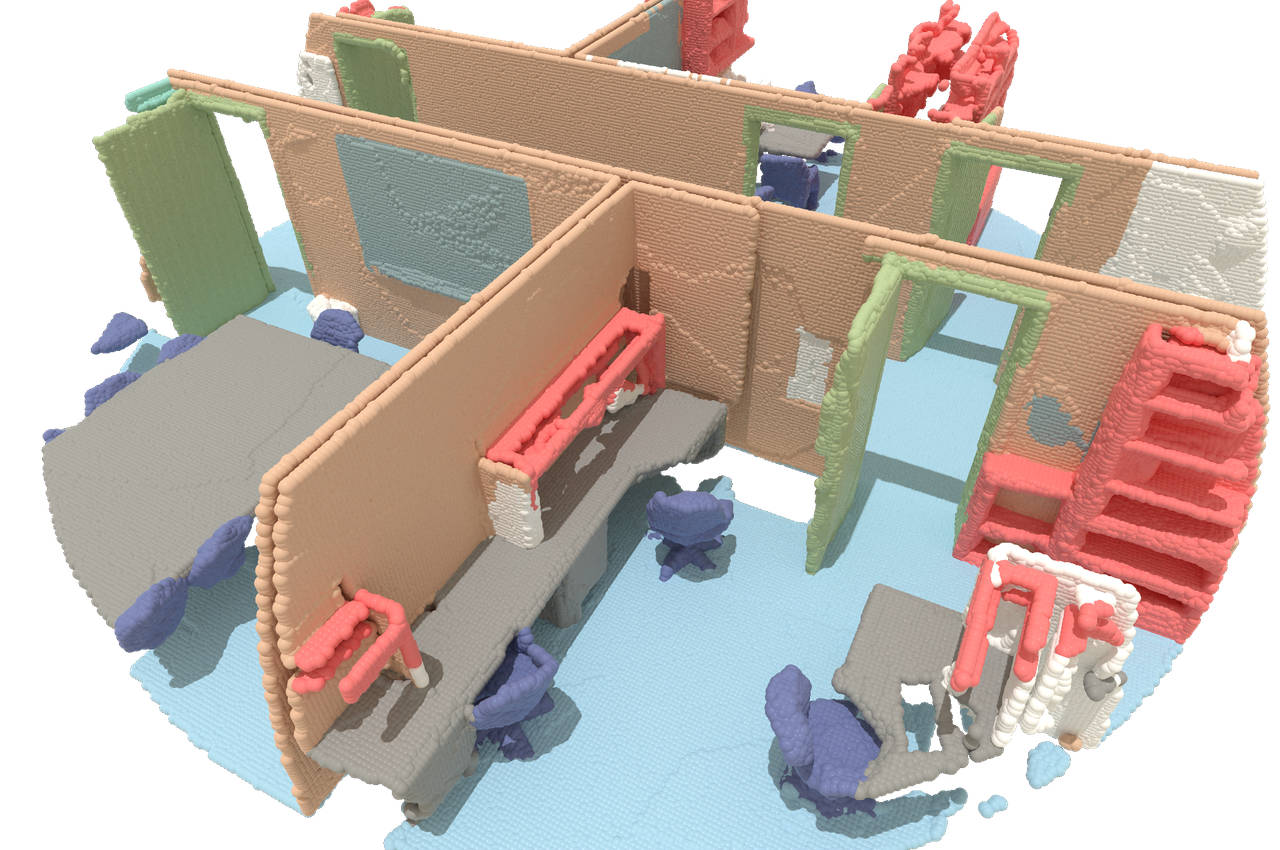}

\newcommand{\kittiInput}{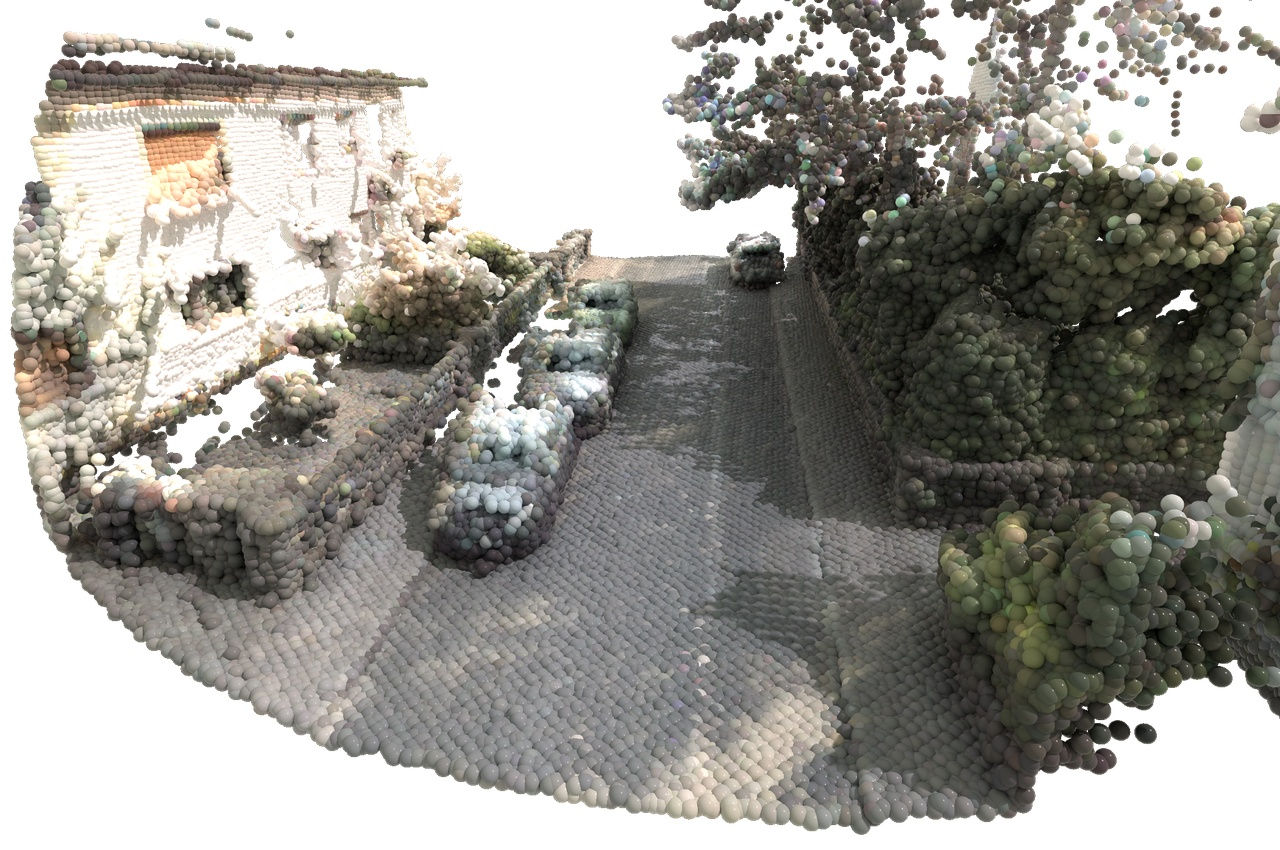}
\newcommand{\kittiGT}{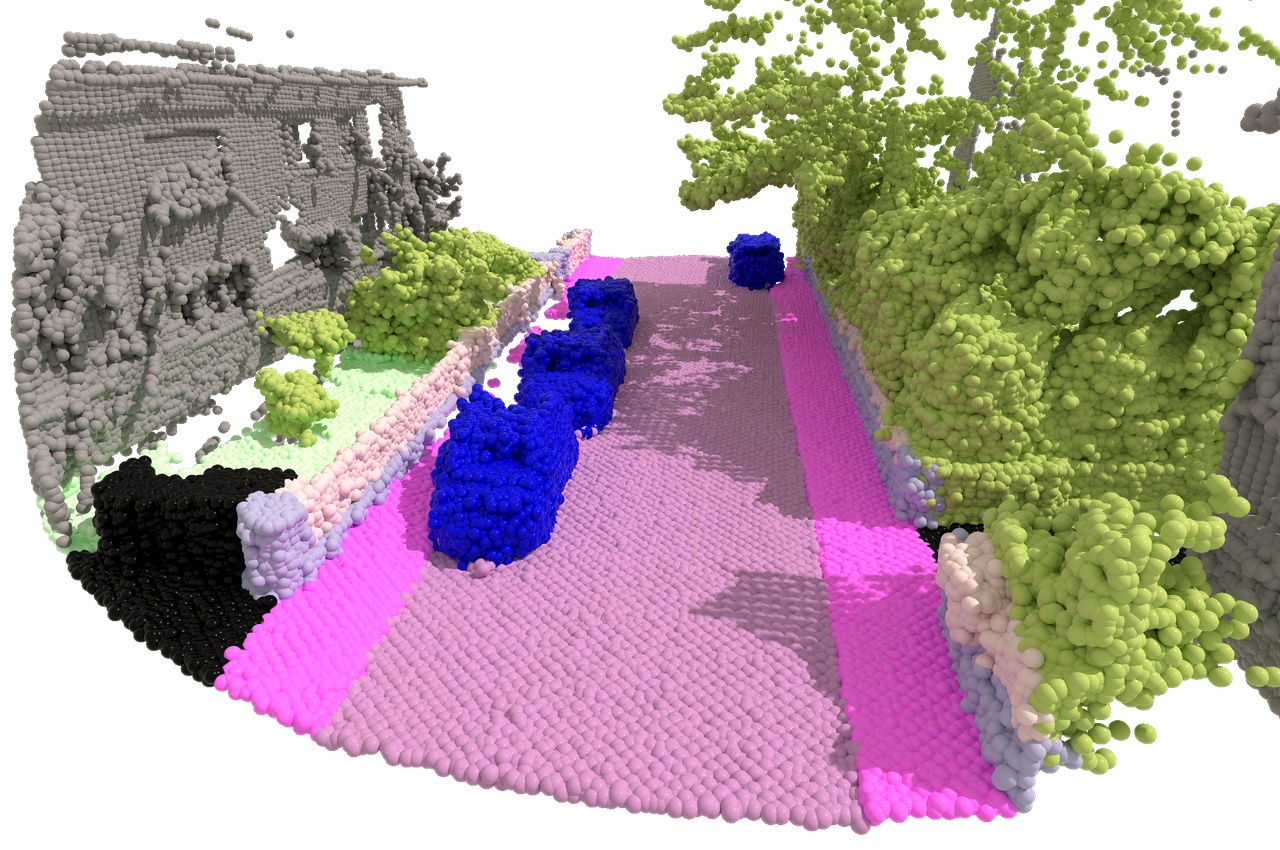}
\newcommand{\kittiPred}{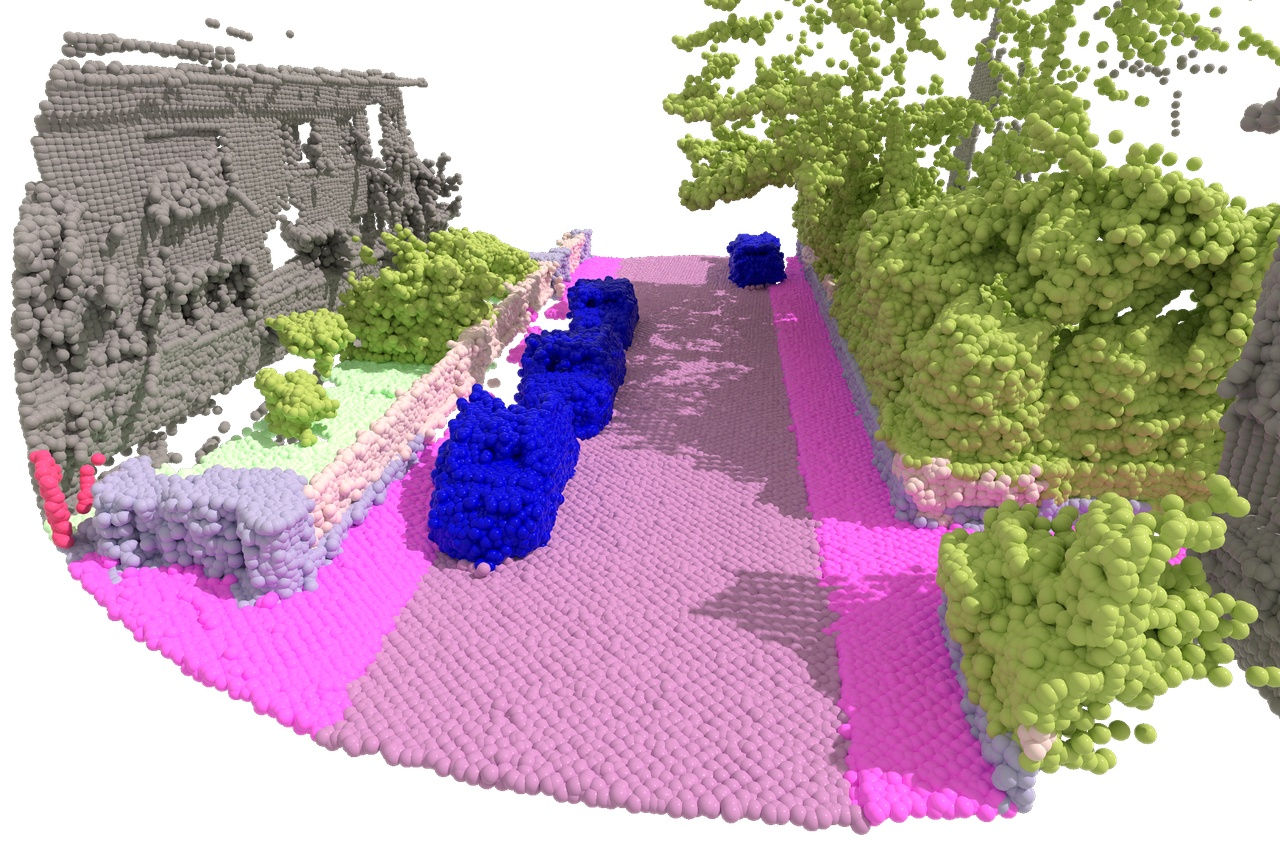}

\newcommand{\dalesInput}{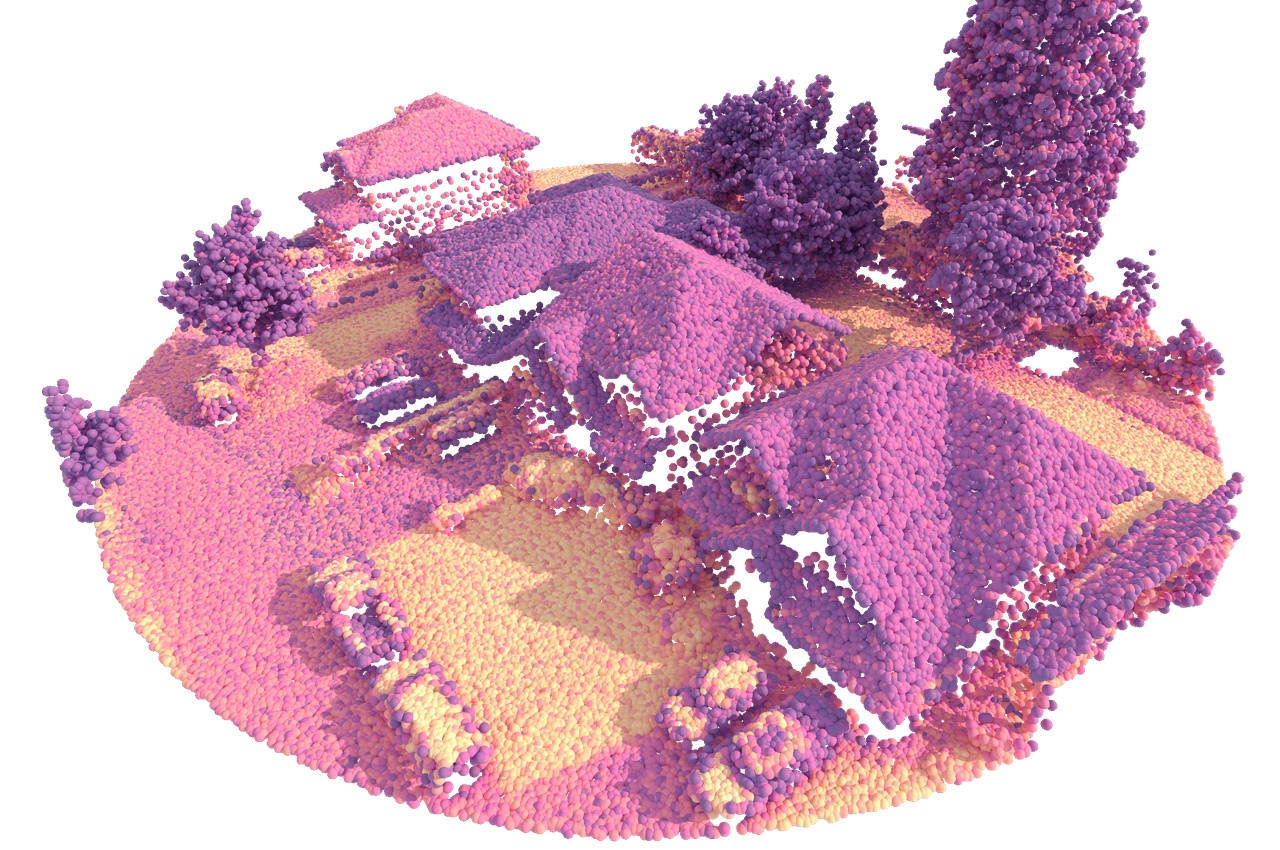}
\newcommand{\dalesGT}{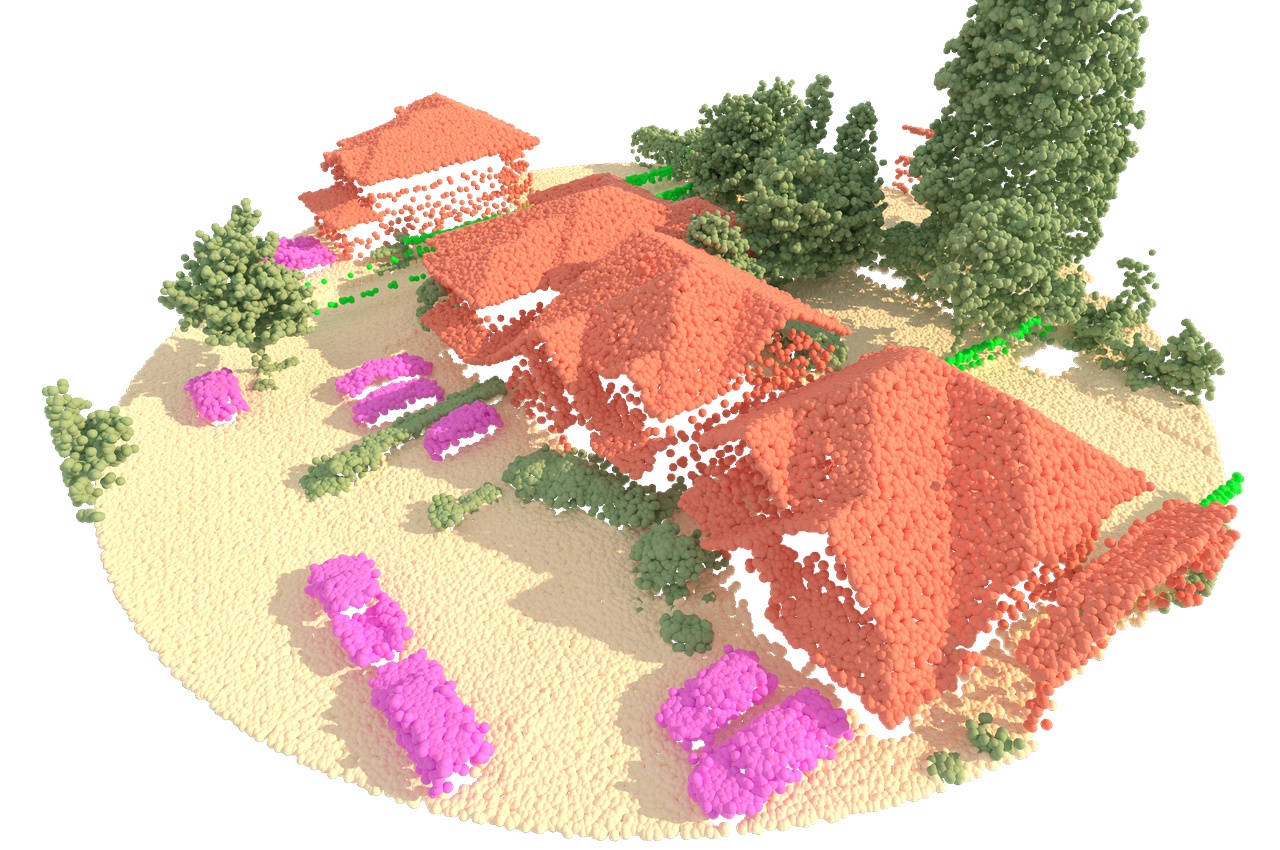}
\newcommand{\dalesPred}{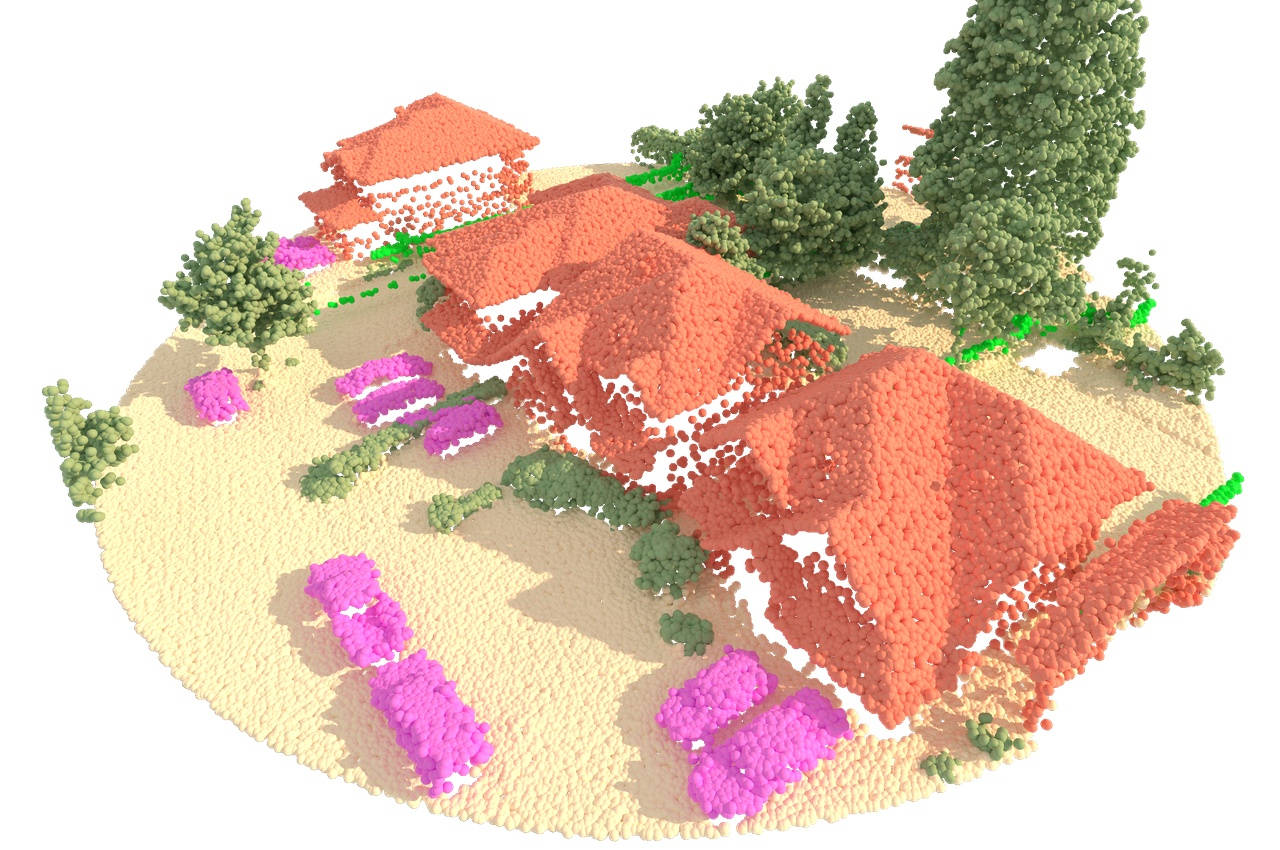}

\begin{tabular}{l@{}c@{}c@{}c@{}c}

\raiseandrotate{1.7}{\small S3DIS} 
&
\includegraphics[width=0.29\linewidth, height=0.2\linewidth]{\sdisInput}
&
\includegraphics[width=0.29\linewidth, height=0.2\linewidth]{\sdisGT}
&
\includegraphics[width=0.29\linewidth, height=0.2\linewidth]{\sdisPred}
&
\multirow{3}{*}[+3cm]
{
\begin{tabular}{@{\hspace{-2mm}}r@{\,}l}
          \definecolor{tempcolor}{rgb}{.37,0.61,0.77}
            \tikz[baseline=+0.3 ex] \fill[fill=tempcolor, scale=0.20,  draw=black] (0, 0) rectangle (1.2, 1.2); 
            & \scriptsize{floor}
            \\
            \definecolor{tempcolor}{rgb}{0.70,0.45,0.31}
            \tikz[baseline=+0.3 ex] \fill[fill=tempcolor, scale=0.20,  draw=black] (0, 0) rectangle (1.2, 1.2); 
            & \scriptsize{wall}
            \\
            \definecolor{tempcolor}{rgb}{0.31,0.63,.58}
            \tikz[baseline=+0.3 ex] \fill[fill=tempcolor, scale=0.20,  draw=black] (0, 0) rectangle (1.2, 1.2);
            & \scriptsize{column} 
            \\
            \definecolor{tempcolor}{rgb}{0.30,0.68,.32}
            \tikz[baseline=+0.3 ex] \fill[fill=tempcolor, scale=0.20,  draw=black] (0, 0) rectangle (1.2, 1.2); 
            & \scriptsize{wind.}
            \\
            \definecolor{tempcolor}{rgb}{.42,0.52,0.29}
            \tikz[baseline=+0.3 ex] \fill[fill=tempcolor, scale=0.20,  draw=black] (0, 0) rectangle (1.2, 1.2); 
            & \scriptsize{door}
            \\
            \definecolor{tempcolor}{rgb}{.16,0.19,0.39}
            \tikz[baseline=+0.3 ex] \fill[fill=tempcolor, scale=0.20,  draw=black] (0, 0) rectangle (1.2, 1.2);
            & \scriptsize{chair}
            \\
            \definecolor{tempcolor}{rgb}{.30,0.30,0.30}
            \tikz[baseline=+0.3 ex] \fill[fill=tempcolor, scale=0.20,  draw=black] (0, 0) rectangle (1.2, 1.2);
            & \scriptsize{table} 
            \\
            \definecolor{tempcolor}{rgb}{.88,0.20,0.20}
            \tikz[baseline=+0.3 ex] \fill[fill=tempcolor, scale=0.20,  draw=black] (0, 0) rectangle (1.2, 1.2); 
            & \scriptsize{bookcase}
            \\
            \definecolor{tempcolor}{rgb}{.32,0.43,.45}
            \tikz[baseline=+0.3 ex] \fill[fill=tempcolor, scale=0.20,  draw=black] (0, 0) rectangle (1.2, 1.2);
            & \scriptsize{board}
            \\
            \definecolor{tempcolor}{rgb}{0.91,0.91,.90}
            \tikz[baseline=+0.3 ex] \fill[fill=tempcolor, scale=0.20,  draw=black] (0, 0) rectangle (1.2, 1.2);
            & \scriptsize{clut.} 
            \\\midrule
             \definecolor{tempcolor}{rgb}{0.50, 0.25, 0.50}
            \tikz[baseline=+0.3 ex] \fill[fill=tempcolor, scale=0.20,  draw=black] (0, 0) rectangle (1.2, 1.2);
            & \scriptsize{road} 
            \\
            \definecolor{tempcolor}{rgb}{0.95, 0.13, 0.90}
            \tikz[baseline=+0.3 ex] \fill[fill=tempcolor, scale=0.20,  draw=black] (0, 0) rectangle (1.2, 1.2);
            & \scriptsize{sidewalk} 
            \\
            \definecolor{tempcolor}{rgb}{0.27, 0.27, 0.27}
            \tikz[baseline=+0.3 ex] \fill[fill=tempcolor, scale=0.20,  draw=black] (0, 0) rectangle (1.2, 1.2);
            & \scriptsize{building} 
            \\
            \definecolor{tempcolor}{rgb}{0.4 , 0.4 , 0.61}
            \tikz[baseline=+0.3 ex] \fill[fill=tempcolor, scale=0.20,  draw=black] (0, 0) rectangle (1.2, 1.2);
            & \scriptsize{wall} 
            \\
            \definecolor{tempcolor}{rgb}{0.74, 0.6 , 0.6 }
            \tikz[baseline=+0.3 ex] \fill[fill=tempcolor, scale=0.20,  draw=black] (0, 0) rectangle (1.2, 1.2);
            & \scriptsize{fence} 
            \\
            \definecolor{tempcolor}{rgb}{0.41, 0.55, 0.13}
            \tikz[baseline=+0.3 ex] \fill[fill=tempcolor, scale=0.20,  draw=black] (0, 0) rectangle (1.2, 1.2);
            & \scriptsize{veget.} 
            \\
            \definecolor{tempcolor}{rgb}{0.59, 0.98, 0.59}
            \tikz[baseline=+0.3 ex] \fill[fill=tempcolor, scale=0.20,  draw=black] (0, 0) rectangle (1.2, 1.2);
            & \scriptsize{terrain} 
            \\
            \definecolor{tempcolor}{rgb}{0.  , 0.  , 0.55}
            \tikz[baseline=+0.3 ex] \fill[fill=tempcolor, scale=0.20,  draw=black] (0, 0) rectangle (1.2, 1.2);
            & \scriptsize{car} 
            \\\midrule
            \definecolor{tempcolor}{rgb}{0.95, 0.84, 0.67}
            \tikz[baseline=+0.3 ex] \fill[fill=tempcolor, scale=0.20,  draw=black] (0, 0) rectangle (1.2, 1.2);
            & \scriptsize{ground} 
            \\
            \definecolor{tempcolor}{rgb}{0.27, 0.45, 0.26}
            \tikz[baseline=+0.3 ex] \fill[fill=tempcolor, scale=0.20,  draw=black] (0, 0) rectangle (1.2, 1.2); 
            & \scriptsize{veget.}
            \\
            \definecolor{tempcolor}{rgb}{0.91, 0.20, 0.94}
            \tikz[baseline=+0.3 ex] \fill[fill=tempcolor, scale=0.20,  draw=black] (0, 0) rectangle (1.2, 1.2); 
            & \scriptsize{car}
            \\
            \definecolor{tempcolor}{rgb}{0., 0.91,  0.04}
            \tikz[baseline=+0.3 ex] \fill[fill=tempcolor, scale=0.20,  draw=black] (0, 0) rectangle (1.2, 1.2); 
            & \scriptsize{fence}
            \\
            \definecolor{tempcolor}{rgb}{0.84, 0.26, 0.21}
            \tikz[baseline=+0.3 ex] \fill[fill=tempcolor, scale=0.20,  draw=black] (0, 0) rectangle (1.2, 1.2);
            & \scriptsize{building}
            \\ 
            \definecolor{tempcolor}{rgb}{0, 0, 0}
            \tikz[baseline=+0.3 ex] \fill[fill=tempcolor, scale=0.20,  draw=black] (0, 0) rectangle (1.2, 1.2);
            & \scriptsize{no label}  
            
\end{tabular}
}
\\
\raiseandrotate{1.1}{\small KITTI360} 
&
\includegraphics[width=0.29\linewidth, height=0.2\linewidth]{\kittiInput}
&
\includegraphics[width=0.29\linewidth, height=0.2\linewidth]{\kittiGT}
&
\includegraphics[width=0.29\linewidth, height=0.2\linewidth]{\kittiPred}
\\
\raiseandrotate{1.2}{\small DALES} 
&
\includegraphics[width=0.29\linewidth, height=0.2\linewidth]{\dalesInput}
&
\includegraphics[width=0.29\linewidth, height=0.2\linewidth]{\dalesGT}
&
\includegraphics[width=0.29\linewidth, height=0.2\linewidth]{\dalesPred}
\\
&
\begin{subfigure}{.25\linewidth}
    \caption{Input}
    \label{fig:sub:input}
\end{subfigure}
&
\begin{subfigure}{.25\linewidth}
    \caption{Ground Truth}
     \label{fig:sub:gt}
\end{subfigure}
&
\begin{subfigure}{.25\linewidth}
    \caption{Prediction}
     \label{fig:sub:pred}
\end{subfigure}
\end{tabular}

\vspace{-2mm}
\caption{
\textbf{Semantic Segmentation.} 
Qualitative results on three benchmarks: 
S3DIS, KITTI-360, and DALES. 
For each dataset, we show the input point cloud (RGB for S3DIS and KITTI-360, LiDAR intensity for DALES), the ground-truth labels, and the predictions of \MODELNAME.
}
\label{fig:quali_seg}
\end{figure*}

\subsection{Semantic Segmentation}
\label{sec:semseg}
\Cref{tab:bigtable} reports both accuracy and efficiency for a range of SOTA semantic segmentation models. 
All methods are trained only on the official training split of each dataset, without using any external data.

\para{Analysis.}  
Superpoint-based methods consistently deliver competitive accuracy with \emph{100--200$\times$} fewer parameters than typical point- or voxel-based networks.  
Yet their inference speed is often constrained by the CPU-bound partitioning stage, which can dominate runtime.  
\MODELNAME~removes this bottleneck, yielding much faster end-to-end inference while retaining top-tier accuracy.  
With fewer than $400$k parameters (under $2$\,MB of VRAM), it ranks among the smallest models in this comparison---orders of magnitude lighter than many voxel- or point-based alternatives.  
In terms of inference speed, \MODELNAME~is unmatched: faster even than PointNet++, while maintaining strong performance across all datasets.  
Its accuracy trails SOTA large models by less than $2$ mIoU points, comparable to SPT and well within the expected variance of training and macro-averaged evaluation metrics.

In contrast, modern point- and voxel-based architectures such as PointNeXt~\cite{qian2022pointnext}, Stratified Transformer~\cite{lai2022stratified}, and PointTransformer-v3~\cite{wu2024point} can be extremely slow at inference, in part due to heavy test-time augmentation (TTA).  
Removing TTA reduces accuracy by $-1.5$ mIoU on S3DIS Area~5 for Stratified Transformer and by $-1.6$ for PTv3, yet they still remain significantly slower than our approach (e.g., PTv3 requires $988$\,s on S3DIS without TTA).  

\para{Qualitative Analysis.}  
\Cref{fig:quali_seg} shows semantic segmentations produced by \MODELNAME~across indoor, outdoor, and aerial domains.  
Predictions remain reliable even in complex environments, with most errors arising from the ambiguous \texttt{clutter} class, and cases that are inherently difficult to partition into superpoints due to low geometric and radiometric contrasts at their interface, such as a whiteboard against a white wall.  
At the same time, \MODELNAME~accurately captures fine details such as furniture edges, vehicle boundaries, and roof structures.  
Overall, it achieves high-quality segmentation while delivering orders of magnitude faster inference than existing approaches.

\para{Breaking the Partition Bottleneck.}
\Cref{fig:bottleneck} details the breakdown of end-to-end inference time across different pipeline stages for Superpoint Transformer \cite{robert2023efficient} and our proposed \MODELNAME---excluding I/O, which largely depends on dataset format, \eg plain \texttt{.txt} vs.\ binary \texttt{.ply} files.
For SPT, the CPU-bound partition stage dominates computation time; in \MODELNAME, this cost is virtually eliminated due to our GPU-based implementation and the removal of handcrafted point features.  
Additional minor optimizations further reduce runtime in other stages.  
Beyond speed, \MODELNAME~is also easier to deploy and tune, as it avoids the complex feature engineering and hyperparameters optimization required by traditional superpoint partitioning methods.

\para{Memory Efficiency.}   
As shown in \cref{fig:spectrum}, \MODELNAME~processes full scans \emph{in a single pass, without tiling}, across scenarios ranging from real-time inference on a single Velodyne64 sweep to city-scale aerial surveys. 
A single LiDAR rotation runs on an embedded Jetson device ($<\$200$), while an entire S3DIS floor (68 rooms) fits on a consumer GPU ($<\$1000$).  
Even 1.3\,km$^2$ of aerial LiDAR from DALES can be handled at once on an NVIDIA A40 ($\sim\$3000$).  

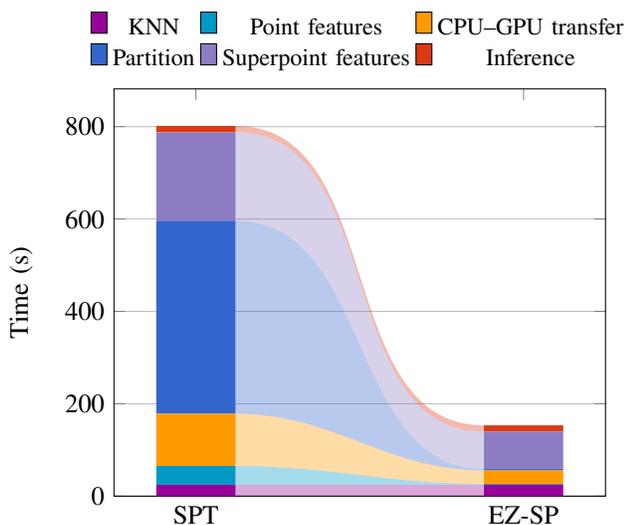
\begin{figure}
\centering
\begin{tikzpicture}
\begin{axis}[
    ybar stacked,
    bar width=30pt,
    ymin=0,
    ylabel={Time (s)},
    symbolic x coords={SPT, EZSP},
    xticklabels={SPT, EZ-SP},
    xtick=data,
    legend style={at={(0.5,1.02)}, anchor=south,legend columns=3},
    enlarge x limits=0.25,
    legend style={draw=none},
    ymajorgrids,
    height=7cm
]

\newdimen\HalfBar \HalfBar=15pt


\pgfmathsetmacro{\SPTvoxel}{1.9}
\pgfmathsetmacro{\EZSPvoxel}{1.9}

\pgfmathsetmacro{\SPTknn}{24.6}
\pgfmathsetmacro{\EZSPknn}{24.6}

\pgfmathsetmacro{\SPTpoint}{40.8}
\pgfmathsetmacro{\EZSPpoint}{1.8}

\pgfmathsetmacro{\SPTtransfer}{113.0}
\pgfmathsetmacro{\EZSPtransfer}{29.3}

\pgfmathsetmacro{\SPTpartition}{417.9}
\pgfmathsetmacro{\EZSPpartition}{3.4}

\pgfmathsetmacro{\SPTsp}{191.5}
\pgfmathsetmacro{\EZSPsp}{80.5}

\pgfmathsetmacro{\SPTinference}{13.7}
\pgfmathsetmacro{\EZSPinference}{13.7}

\pgfmathsetmacro{\SPTbaseVoxel}{0}                       
\pgfmathsetmacro{\SPTtopVoxel}{\SPTbaseVoxel+\SPTvoxel}

\pgfmathsetmacro{\SPTbaseKNN}{\SPTtopVoxel}              
\pgfmathsetmacro{\SPTtopKNN}{\SPTbaseKNN+\SPTknn}

\pgfmathsetmacro{\SPTbasePoint}{\SPTtopKNN}              
\pgfmathsetmacro{\SPTtopPoint}{\SPTbasePoint+\SPTpoint}

\pgfmathsetmacro{\SPTbaseTransfer}{\SPTtopPoint}         
\pgfmathsetmacro{\SPTtopTransfer}{\SPTbaseTransfer+\SPTtransfer}

\pgfmathsetmacro{\SPTbasePartition}{\SPTtopTransfer}     
\pgfmathsetmacro{\SPTtopPartition}{\SPTbasePartition+\SPTpartition}

\pgfmathsetmacro{\SPTbaseSP}{\SPTtopPartition}           
\pgfmathsetmacro{\SPTtopSP}{\SPTbaseSP+\SPTsp}

\pgfmathsetmacro{\SPTbaseInf}{\SPTtopSP}                 
\pgfmathsetmacro{\SPTtopInf}{\SPTbaseInf+\SPTinference}


\pgfmathsetmacro{\EZSPbaseVoxel}{0}                      
\pgfmathsetmacro{\EZSPtopVoxel}{\EZSPbaseVoxel+\EZSPvoxel}

\pgfmathsetmacro{\EZSPbaseKNN}{\EZSPtopVoxel}            
\pgfmathsetmacro{\EZSPtopKNN}{\EZSPbaseKNN+\EZSPknn}

\pgfmathsetmacro{\EZSPbasePoint}{\EZSPtopKNN}            
\pgfmathsetmacro{\EZSPtopPoint}{\EZSPbasePoint+\EZSPpoint}

\pgfmathsetmacro{\EZSPbaseTransfer}{\EZSPtopPoint}       
\pgfmathsetmacro{\EZSPtopTransfer}{\EZSPbaseTransfer+\EZSPtransfer}

\pgfmathsetmacro{\EZSPbasePartition}{\EZSPtopTransfer}   
\pgfmathsetmacro{\EZSPtopPartition}{\EZSPbasePartition+\EZSPpartition}

\pgfmathsetmacro{\EZSPbaseSP}{\EZSPtopPartition}         
\pgfmathsetmacro{\EZSPtopSP}{\EZSPbaseSP+\EZSPsp}

\pgfmathsetmacro{\EZSPbaseInf}{\EZSPtopSP}               
\pgfmathsetmacro{\EZSPtopInf}{\EZSPbaseInf+\EZSPinference}

\addplot+[fill=cKNN,draw=black, draw=none]        coordinates {(SPT,\SPTknn)        (EZSP,\EZSPknn)};
\addplot+[fill=cPoint,draw=black, draw=none]      coordinates {(SPT,\SPTpoint)      (EZSP,\EZSPpoint)};
\addplot+[fill=cTransfer,draw=black, draw=none]   coordinates {(SPT,\SPTtransfer)   (EZSP,\EZSPtransfer)};
\addplot+[fill=cPartition,draw=black, draw=none]  coordinates {(SPT,\SPTpartition)  (EZSP,\EZSPpartition)};
\addplot+[fill=cSP,draw=black, draw=none]         coordinates {(SPT,\SPTsp)         (EZSP,\EZSPsp)};
\addplot+[fill=cInference,draw=black, draw=none]  coordinates {(SPT,\SPTinference)  (EZSP,\EZSPinference)};



\path[fill=cKNN, fill opacity=0.35, draw=cKNN, draw=none]
  ([xshift=\HalfBar, yshift=-0.35pt]axis cs:SPT,\SPTbaseKNN)
  -- ([xshift=\HalfBar, yshift=-0.35pt]axis cs:SPT,\SPTtopKNN)
  to[out=0, in=180]   ([xshift=-\HalfBar, yshift=-0.35pt]axis cs:EZSP,\EZSPtopKNN)
  -- ([xshift=-\HalfBar, yshift=-0.35pt]axis cs:EZSP,\EZSPbaseKNN)
  to[out=180, in=0]   ([xshift=\HalfBar, yshift=-0.35pt]axis cs:SPT,\SPTbaseKNN)
  -- cycle;

\path[fill=cPoint, fill opacity=0.35, draw=cPoint, draw=none]
  ([xshift=\HalfBar, yshift=-0.35pt]axis cs:SPT,\SPTbasePoint)
  -- ([xshift=\HalfBar, yshift=-0.35pt]axis cs:SPT,\SPTtopPoint)
  to[out=0, in=180]   ([xshift=-\HalfBar, yshift=-0.35pt]axis cs:EZSP,\EZSPtopPoint)
  -- ([xshift=-\HalfBar, yshift=-0.35pt]axis cs:EZSP,\EZSPbasePoint)
  to[out=180, in=0]   ([xshift=\HalfBar, yshift=-0.35pt]axis cs:SPT,\SPTbasePoint)
  -- cycle;

\path[fill=cTransfer, fill opacity=0.35, draw=cTransfer, draw=none]
  ([xshift=\HalfBar, yshift=-0.35pt]axis cs:SPT,\SPTbaseTransfer)
  -- ([xshift=\HalfBar, yshift=-0.35pt]axis cs:SPT,\SPTtopTransfer)
  to[out=0, in=180]   ([xshift=-\HalfBar, yshift=-0.35pt]axis cs:EZSP,\EZSPtopTransfer)
  -- ([xshift=-\HalfBar, yshift=-0.35pt]axis cs:EZSP,\EZSPbaseTransfer)
  to[out=180, in=0]   ([xshift=\HalfBar, yshift=-0.35pt]axis cs:SPT,\SPTbaseTransfer)
  -- cycle;

\path[fill=cPartition, fill opacity=0.35, draw=cPartition, draw=none]
  ([xshift=\HalfBar, yshift=-0.35pt]axis cs:SPT,\SPTbasePartition)
  -- ([xshift=\HalfBar, yshift=-0.35pt]axis cs:SPT,\SPTtopPartition)
  to[out=0, in=180]   ([xshift=-\HalfBar, yshift=-0.35pt]axis cs:EZSP,\EZSPtopPartition)
  -- ([xshift=-\HalfBar, yshift=-0.35pt]axis cs:EZSP,\EZSPbasePartition)
  to[out=180, in=0]   ([xshift=\HalfBar, yshift=-0.35pt]axis cs:SPT,\SPTbasePartition)
  -- cycle;

\path[fill=cSP, fill opacity=0.35, draw=cSP, draw=none]
  ([xshift=\HalfBar, yshift=-0.35pt]axis cs:SPT,\SPTbaseSP)
  -- ([xshift=\HalfBar, yshift=-0.35pt]axis cs:SPT,\SPTtopSP)
  to[out=0, in=180]   ([xshift=-\HalfBar, yshift=-0.35pt]axis cs:EZSP,\EZSPtopSP)
  -- ([xshift=-\HalfBar, yshift=-0.35pt]axis cs:EZSP,\EZSPbaseSP)
  to[out=180, in=0]   ([xshift=\HalfBar, yshift=-0.35pt]axis cs:SPT,\SPTbaseSP)
  -- cycle;

\path[fill=cInference, fill opacity=0.35, draw=cInference, draw=none]
  ([xshift=\HalfBar, yshift=-0.35pt]axis cs:SPT,\SPTbaseInf)
  -- ([xshift=\HalfBar, yshift=-0.35pt]axis cs:SPT,\SPTtopInf)
  to[out=0, in=180]   ([xshift=-\HalfBar, yshift=-0.35pt]axis cs:EZSP,\EZSPtopInf)
  -- ([xshift=-\HalfBar, yshift=-0.35pt]axis cs:EZSP,\EZSPbaseInf)
  to[out=180, in=0]   ([xshift=\HalfBar, yshift=-0.35pt]axis cs:SPT,\SPTbaseInf)
  -- cycle;

\legend{\small KNN, \small Point features, \small CPU--GPU transfer, \small Partition, \small Superpoint features,  \small Inference}
\end{axis}
\end{tikzpicture}
\vspace{-2mm}
\caption{
\textbf{Computation Breakdown.}  
Breakdown of runtime comparison between SPT and \MODELNAME~for S3DIS 6-Fold. 
}
\label{fig:bottleneck}
\vspace{-0.5em}
\end{figure}


\begin{table}
\caption{
\textbf{Scalability of \MODELNAME.}  
From a single LiDAR scan to a city-scale aerial survey, entire scenes can be processed in one pass on embedded or commercial-grade GPU memory.
}
    \label{fig:spectrum}
    \centering
    \resizebox{\linewidth}{!}{
\begin{tabular}{c c c c }
\toprule
\multirow{1}{*}{Scenario} & Points & VRAM & Compatible Hardware \\\midrule
\makecell{{\bf Autonomous driving} \\single LiDAR scan} & \makecell{105k pts} & \makecell{\MODELNAME: 0.25 GB \\ MinkowskiNet: 0.52 GB} & \makecell{Jetson-Nano \\    } \\\greyrule
\makecell{{\bf Digital twin} \\building-scale} & \makecell{79M pts} & \makecell{\MODELNAME: 29 GB  \\ MinkowskiNet: 30 GB} & \makecell{2$\times$ Radeon RX  \\ 7900: 40GB} \\\greyrule
\makecell{{\bf Aerial survey} \\city-scale, 1.3\,km\textsuperscript{2}} & \makecell{16M pts} & \makecell{\MODELNAME: 45 GB \\ MinkowskiNet : OoM} & \makecell{A40 \\ 48GB} \\
\bottomrule
\end{tabular}
}
\end{table}

\subsection{Ablation Study}
\label{sec:ablation}
We assess the contribution of each component of our approach through the ablations summarized in \cref{tab:ablation}. 
For ablations A, B, and D, we retrain our SPT model from scratch.  

\begin{compactitem}
    \item \textsc{A}: \textbf{Learning to Partition.}  
    Replacing our lightweight CNN (\cref{sec:overseg}) with handcrafted geometric features~\cite{demantke2011dimensionality,guinard2017weakly} yields comparable performance. This demonstrates that our approach eliminates the need for handcrafted inputs, as the CNN learns features of similar expressivity, while reducing manual engineering.
    
    
    \item \textbf{B: Hierarchical Levels.} 
    Adding a third hierarchical level to our SPT improves accuracy with negligible impact on throughput. 
    
    \item \textsc{C}: \textbf{Optimizations.} The various implementation optimizations over the original SPT codebase (CPU-GPU transfer, GPU-bound feature computation) do not impact the performance but double our throughput.    
    
    \item \textbf{D: Partition Algorithm.} 
    Replacing our GPU partition algorithm with the original CPU-based PCP~\cite{raguet2019parallel} based on handcrafted features (as in SPT) reduces throughput substantially, while delivering slightly lower accuracy. 
    As in Robert et al.~\cite{robert2023efficient}, we find that the 3-level PCP partition does not improve performance, unlike ablation B, suggesting that
    despite comparable oracle mIoU,
    our superpoint partition’s hierarchical structure is more informative than PCP’s.
    



\end{compactitem}

\begin{table}[]
\caption{{\bf Ablation.} Performance of variants of \MODELNAME.}
\label{tab:ablation}
\centering
\vspace{-2mm}

\begin{tabular}{llcc} \toprule
&\multirow{2}{*}[0mm]{Configuration} & S3DIS Fold5 & Throughput\\
&& mIoU & $\times 10^6$ pt/s \\\midrule 
&\bf Best  & $69.6$ & $1.7$ \\ \greyrule
\textsc{A}& Handcrafted features   & $69.5$ & $1.7$ \\
\textsc{B}& 2 hierarchical levels   & $67.2$ & $1.7$ \\
\textsc{C}& No optimization  & -  & $0.8$ \\
\textsc{D}& PCP & $67.8$ & $0.3$ \\
\end{tabular}

\end{table}

\section{Conclusion}
We presented \MODELNAME, a fast and lightweight superpoint partitioning model that removes the long-standing partitioning bottleneck in superpoint-based 3D semantic segmentation.  
Our end-to-end pipeline 
can process $1.7$M points/s on a single consumer grade GPU and achieves near–state-of-the-art accuracy in indoor, terrestrial, and aerial LiDAR benchmarks.  
By drastically lowering runtime costs, \MODELNAME~opens the door to efficient analysis of massive 3D scans and real-time perception on resource-constrained and embedded platforms.  

\ARXIV{\para{Acknowledgements.} 
This work was supported by ANR project ANR-23-PEIA-0008,
and was granted access to the HPC resources of IDRIS under the allocation  AD011015519R1 and AD011015196R1 made by GENCI.
We thank Davide Allegro for inspiring discussions and valuable feedback.}{}

\vspace{-0mm}



\balance

\ARXIV{
\renewcommand*{\thesection}{A.\arabic{section}}
\setcounter{figure}{0}
\setcounter{prop}{0}
\renewcommand\thefigure{A.\arabic{figure}}
\setcounter{table}{0}
\renewcommand\thetable{A.\arabic{table}}
    \FloatBarrier
\pagebreak
\clearpage

\section*{\centering \LARGE Appendix}
\setcounter{section}{0}
\setcounter{figure}{0}
\setcounter{table}{0}
\renewcommand*{\thesection}{appendix.\the\value{section}}
\renewcommand\thefigure{A-\arabic{figure}}
\renewcommand\thesection{A-\arabic{section}}
\renewcommand\thetable{A-\arabic{table}}
\renewcommand\theequation{A-\arabic{equation}}
\newcommand\thealgorithm{A-\arabic{algorithm}}

In this document, we prove Proposition~\ref{prop:merge} (\ref{sec:proof}), provide the pseudo for our partition algorithm (\ref{sec:pseudocode}), detail how we measure the computation breakdown across all methods (\ref{sec:benchmark_speed}), and provide further implementation details (\ref{sec:implem_details_partition}).

\section{Proof of Proposition~\ref{prop:merge} }
\label{sec:proof}


\begin{prop}
Merging adjacent superpoints $(P,Q) \in \SEdges$ decreases $\Omega(\Partition)$ by the following \emph{edge merge gain:}
\begin{align}\label{eq:delta}
    \Delta(P,Q) = -\frac{|P||Q|}{|P|+|Q|}\|\Feat_P - \Feat_Q\|^2 + \regul\!\!\!\!\!\sum_{(p,q)\in(P \times Q)\cap\Edges}\!\!\!\!\!\!\!\!\!\!\!\edgeweights_{p,q}~.
\end{align}
\label{prop:merge}
\end{prop}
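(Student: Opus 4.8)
\section{Proof of Proposition~\ref{prop:merge}: proof plan}

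The plan is to compute the difference $\Omega(\Partition') - \Omega(\Partition)$ exactly, where $\Partition'$ is the partition obtained from $\Partition$ by replacing $P$ and $Q$ with the single component $P\cup Q$. First I would record that merging only changes the constant carried by the touched component: the new value on $P\cup Q$ is the size-weighted average $\Feat_{P\cup Q} = (|P|\Feat_P + |Q|\Feat_Q)/(|P|+|Q|)$, while $\Feat_R$ is unchanged for every other $R\in\Partition$. I would then split $\Omega$ into its fidelity part $\sum_{p\in\Cloud}\|\feat_p - \Feat^\Partition_p\|^2$ and its contour part $\regul\sum_{(p,q)\in\Edges}\edgeweights_{p,q}\|\Feat^\Partition_p - \Feat^\Partition_q\|_0$, noting that only the points of $P\cup Q$ can change a fidelity term and only edges of $\Edges$ with (at least) one endpoint in $P\cup Q$ can change a contour term.

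For the fidelity part, the key tool is the parallel-axis (Huygens--Steiner) identity $\sum_{p\in S}\|\feat_p - \Feat_S\|^2 = \sum_{p\in S}\|\feat_p\|^2 - |S|\,\|\Feat_S\|^2$ applied to $S=P$, $S=Q$, and $S=P\cup Q$, where $\Feat_S$ is the mean of the $\feat_p$ over $S$. The bare squared-norm sums cancel between the ``before'' and ``after'' states, leaving the fidelity increase $|P|\|\Feat_P\|^2 + |Q|\|\Feat_Q\|^2 - (|P|+|Q|)\|\Feat_{P\cup Q}\|^2$; substituting the expression for $\Feat_{P\cup Q}$ and expanding the square collapses this to $\tfrac{|P||Q|}{|P|+|Q|}\|\Feat_P - \Feat_Q\|^2$, i.e. merging always raises the data-fidelity cost by exactly the first term of $\Delta(P,Q)$ up to sign.

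For the contour part, I would argue that an edge $(p,q)\in\Edges$ changes its $\|\cdot\|_0$ contribution only when exactly one of $p,q$ lies in $P$ and the other in $Q$: edges internal to $P$, internal to $Q$, internal to a third component, or joining a component of $\{P,Q\}$ to a third component all keep the same ``endpoints in the same component / in different components'' status, and hence the same $\|\cdot\|_0$ value, after the merge. Thus the contour term drops by exactly $\regul\sum_{(p,q)\in(P\times Q)\cap\Edges}\edgeweights_{p,q}$. Summing the two contributions gives $\Omega(\Partition') - \Omega(\Partition) = \tfrac{|P||Q|}{|P|+|Q|}\|\Feat_P - \Feat_Q\|^2 - \regul\sum_{(p,q)\in(P\times Q)\cap\Edges}\edgeweights_{p,q} = -\Delta(P,Q)$, which is the claim.

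The only genuinely delicate point I anticipate is the bookkeeping of the contour term: since $\|x\|_0$ is defined by numerical equality, one is tacitly reading $\|\Feat^\Partition_p - \Feat^\Partition_q\|_0$ as registering whether $p$ and $q$ lie in distinct components rather than whether the two mean embeddings happen to coincide. Under that convention (equivalently, under generic distinctness of the $\Feat_R$, which holds for the learned embeddings in practice) the edge-by-edge accounting above is exact; I would state this convention explicitly at the start of the proof. Everything else is the routine algebra of the parallel-axis identity.
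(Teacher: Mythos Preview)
Your proposal is correct and follows essentially the same route as the paper: both split $\Omega$ into its fidelity and contour parts, use the identity $\sum_{p\in S}\|\feat_p-\Feat_S\|^2=\sum_{p\in S}\|\feat_p\|^2-|S|\|\Feat_S\|^2$ to collapse the fidelity change to $\tfrac{|P||Q|}{|P|+|Q|}\|\Feat_P-\Feat_Q\|^2$, and observe that only the $P$--$Q$ edges lose their contour contribution. Your added remark on the $\|\cdot\|_0$ convention (reading it as ``same component'' rather than literal numerical equality of means) is a nice clarification that the paper's proof leaves implicit.
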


\begin{proof}
We can write the energy of the partition $\Partition$ as:
\begin{align}
\Omega(\Partition)
&=
\sum_{U \in \Partition} 
    \sum_{p \in U} 
    \Vert \feat_p - \Feat_U \Vert^2
+
\regul
\sum_{(p,q)\in \Edges}
    \edgeweights_{p,q}
    \Vert \feat_p - \feat_q \Vert_0 \\
&=
\sum_{U \in \Partition} \left( 
\sum_{p \in U} \Vert \feat_p   \Vert^2
- \vert U \vert \Vert \Feat_U \Vert^2 
\right) 
+
\regul
\sum_{(U,V)\in \SEdges}
    \edgeweights_{U,V}~,
\end{align}

where $\edgeweights_{U,V}=\sum_{(p,q)\in(U \times V)\cap\Edges} \edgeweights_{p,q}~$.

We define $\Partition’$ the modified partition in which $P$ and $Q$ are merged:
\begin{align}\nonumber
\Omega(\Partition')
&=
\sum_{U \in \Partition'} \left( 
\sum_{p \in U} \Vert \feat_p   \Vert^2
- \vert U \vert \Vert \Feat_U \Vert^2 
\right) 
+\\
&\regul
\left( 
\sum_{(U,V)\in \SEdges}
    \edgeweights_{U,V} 
- \edgeweights_{P,Q}
\right)~.
\end{align}

We denote the new superpoint $R$ obtained by the merge of $P$ and $Q$:
\begin{align}\nonumber
    \Feat_R 
    &= \frac1{\vert P  \vert + \vert Q \vert}
    \sum_{p \in P \cup Q} \feat_p \\
    &=\frac
    {\vert P \vert \Feat_P + \vert Q \vert \Feat_Q}
    {\vert P  \vert + \vert Q \vert}~.
\end{align}

We compute $\Delta(P,Q)$ the gain in energy of this merge:
\begin{align}\nonumber
    \Delta(P,Q) 
    &= \Omega(\Partition) - \Omega(\Partition')\\\nonumber
    &=  
   \vert R \vert \Vert \Feat_R \Vert^2 - 
    \sum_{p \in P \cup Q} \Vert \feat_p  \Vert^2 \\\nonumber
    &-  \vert P \vert  \Vert \Feat_P \Vert^2 -  \vert Q \vert  \Vert \Feat_Q \Vert^2
    + \sum_{p \in P \cup Q} \Vert \feat_p  \Vert^2 \\\nonumber
    & + \regul w_{P,Q} \\
    & =\vert R \vert \Vert \Feat_R \Vert^2 -\vert P \vert \Vert \Feat_P \Vert^2 - \vert Q \vert \Vert \Feat_Q \Vert^2  + \regul w_{P,Q} \\
    & =-\frac{\vert P \vert \vert Q \vert}{\vert P \vert + \vert Q \vert} \Vert \Feat_P - \Feat_Q \Vert^2 + \regul w_{P,Q}~.
\end{align}

\end{proof}

\begin{algorithm}[h]
\DontPrintSemicolon
\SetKwFunction{WCC}{wcc\_max\_prop}
\SetKwInOut{Input}{Input\hphantom{}}
\SetKwInOut{Output}{Output\hphantom{}}

\newcommand{\AlgoVariable}[1]{\textcolor{blue!70!black}{#1}}
\newcommand{\AlgoComment}[1]{\textcolor{DraculaGreen!50!black}{\tcp{#1}}}
\newcommand{\AlgoFunction}[1]{\textcolor{DraculaPurple!70!black}{\texttt{#1}}}
\newcommand{\AlgoFunctionNative}[1]{\textcolor{DraculaPurple!80!black}{\mathtt{#1}}}
\newcommand{\AlgoKeywordNative}[1]{\textcolor{DraculaOrange!80!black}{{#1}}}

\SetKwIF{If}{ElseIf}{Else}{\AlgoKeywordNative{if}}{\AlgoKeywordNative{then}}{\AlgoKeywordNative{else if}}{\AlgoKeywordNative{else}}%

\SetKwFor{For}{\AlgoKeywordNative{for}}{\AlgoKeywordNative{do}}{}
\SetKwFor{While}{\AlgoKeywordNative{while}}{\AlgoKeywordNative{do}}{}
\SetKw{Return}{\AlgoKeywordNative{return}}

\Input{Number of nodes $N$, undirected edges $\Edges$}
\Output{Component assignment vector $\Indices \in \{0, \dots, C{-}1\}^N$}

\SetKwProg{Fn}{Function}{:}{}
\Fn{\WCC{$N$, $\Edges$}}{

    \AlgoComment{random index per node}
    $\Indices \leftarrow \AlgoFunction{randperm}(N)$ \;

    \AlgoComment{max-pool on neighbors}
    $\Indices_{\text{max}} \leftarrow \AlgoFunction{max\_propagation}(\Indices, \Edges)$ \;
    
    \If{$\Indices_{\text{max}} = \Indices$}{
        \Return $\Indices$ \;
    }
    $\Indices_{\text{consec}} \leftarrow \AlgoFunction{to\_consecutive\_ids}(\Indices_{\text{max}})$ \;

    \AlgoComment{merge nodes with same $\Indices_{\text{consec}}$}
    $\SEdges_{\text{comp}} \leftarrow \AlgoFunction{component\_graph}(\Indices_{\text{consec}}, \Edges)$ \;
    
    $N_{\text{comp}} \leftarrow \AlgoFunction{max}(\Indices_{\text{consec}}) + 1$ \;

    \AlgoComment{recursive call}
    $\Indices_{\text{rec}} \leftarrow \AlgoFunction{\WCC}(N_{\text{comp}}, \SEdges_{\text{comp}})$ \;

    \AlgoComment{distribute component labels}
    $\Indices_{\text{comp}} \leftarrow \Indices_{\text{rec}}[\Indices_{\text{consec}}]$ \;

    \Return $\Indices_{\text{comp}}$
}
\caption{Weakly Connected Components by Max Propagation}
\label{algo:wcc}
\end{algorithm}

\section{GPU-based Partition Implementation}
\label{sec:pseudocode}

We provide here the pseudocode of our parallelized greedy partition algorithm introduced in \secref{subsec:transition}. 
Our approach relies on two recursive algorithms: \algoref{algo:wcc} computing the weakly connected components of a graph and \algoref{algo:merge} merging nodes of a graph based on their energy.

We implemented both algorithms relying solely on CUDA-accelerated PyTorch operations.
Note that both algorithms are recursive, which significantly reduces complexity.


\section{Benchmarking Processing Speeds}
\label{sec:benchmark_speed}

We detail here how the S3DIS preprocessing, partition, and segmentation speeds communicated in \cref{fig:over} and \cref{tab:bigtable} were obtained.
Whenever the gathered preprocessing times also included the times for reading raw dataset files from disk, we subtracted the time for our own file reading, to focus on the actual inference speed of each method.
Whenever we measured throughput using official public code, and unless specified otherwise, we used a machine with an NVIDIA V100 GPU with 32G VRAM, a 10-core CPU, and 94G RAM.  

\para{PointNet++, KPConv, and MinkowskiNet.}
We used the Torch-Points3D~\cite{chaton2020torch} implementation and public logs~\cite{githubpointnetpp, tp3dpointnetpp, tp3dkpconv, tp3dminko} for estimating the preprocessing and inference times on S3DIS 6-fold for PointNet++~\cite{qi2017pointnetpp}, KPConv~\cite{thomas2019kpconv}, and MinkowskiNet~\cite{choy20194d}.

\para{PointNeXt.}
We used the official logs~\cite{pointnextlogs} to recover the preprocessing and inference times for PointNeXt-XL~\cite{qian2022pointnext} on S3DIS 6-fold. 
Based on the logs and the official codebase, the voxelization is done on the fly during training and inference, which explains why PointNeXt has essentially no preprocessing time in \cref{tab:bigtable}.

\begin{algorithm}[b!]
\DontPrintSemicolon
\SetKwFunction{Merge}{merge}
\SetKwInOut{Input}{Input}
\SetKwInOut{Parameters}{Parameters}
\SetKwInOut{Output}{Output}

\newcommand{\AlgoVariable}[1]{\textcolor{blue!70!black}{#1}}
\newcommand{\AlgoComment}[1]{\textcolor{DraculaGreen!50!black}{\tcp{#1}}}
\newcommand{\AlgoFunction}[1]{\textcolor{DraculaPurple!70!black}{\texttt{#1}}}
\newcommand{\AlgoFunctionNative}[1]{\textcolor{DraculaPurple!80!black}{\mathtt{#1}}}
\newcommand{\AlgoKeywordNative}[1]{\textcolor{DraculaOrange!80!black}{{#1}}}

\SetKwIF{If}{ElseIf}{Else}{\AlgoKeywordNative{if}}{\AlgoKeywordNative{then}}{\AlgoKeywordNative{else if}}{\AlgoKeywordNative{else}}%

\SetKwFor{For}{\AlgoKeywordNative{for}}{\AlgoKeywordNative{do}}{}
\SetKwFor{While}{\AlgoKeywordNative{while}}{\AlgoKeywordNative{do}}{}
\SetKw{Return}{\AlgoKeywordNative{return}}

\Input{
    Node features $\Feat$, sizes $\Size$, positions $\Coords$,\\
    undirected graph edges $\SEdges$ and \\weights $\Edgeweights$ 
}
\Parameters{
    Regularization $\regul$, min size $\sizemin$, nearest neighbor $k$
}
\Output{
    Component assignment $\Indices \in \{0, \dots, C{-}1\}^N$
}

\SetKwProg{Fn}{Function}{:}{}
\Fn{\Merge{$\Feat, \Size, \Coords, \SEdges, \Edgeweights, \regul, \sizemin, k$}}{

    $N \leftarrow |\Feat|$
    

    
    
    

    \AlgoComment{remove self-loops, duplicates, and connect isolated nodes}
    $\SEdges, \Edgeweights \leftarrow \AlgoFunction{prepare\_graph}(\SEdges, \Edgeweights, \Coords, k)$ \;
    
    \If{$|\SEdges| = 0$}{
      \Return $\Indices \leftarrow [0, 1, \dots, N{-}1]$ \;
    }
    
    \AlgoComment{merging energy, see \cref{eq:delta}}
    $\Delta \leftarrow \AlgoFunction{edge\_merge\_energy}(\Feat, \Size, \SEdges, \Edgeweights, \regul)$ \;

    
    


    \AlgoComment{select directed $(P \rightarrow Q)$ edges if $|P|<\sizemin$, or $\Delta(P \rightarrow Q)>0$, only the best $(P \rightarrow \bullet)$ with highest $\Delta$ is kept}
    $\SEdges_\text{merge} \leftarrow \AlgoFunction{keep\_valid\_merges}(\SEdges, \Delta, \sizemin)$ \;

    \AlgoComment{no valid merge}
    \If{$|\SEdges_\text{merge}| = 0$}{
      \Return $\Indices \leftarrow [0, 1, \dots, N{-}1]$ \;
    }
    
    \AlgoComment{identify connected components in the merge graph, \eg, $\{(P \rightarrow R), (Q \rightarrow R)\} \in \SEdges_\text{merge}$}
    $\Indices_\text{comp} \leftarrow \AlgoFunction{wcc\_max\_prop}(N, \SEdges_\text{merge})$ \;

    \AlgoComment{only one component left}
    \If{$\AlgoFunctionNative{max}(\Indices_\text{comp}) = 0$}{
      \Return $\Indices_\text{comp}$ \;
    }

    
    
    
    \AlgoComment{node and edge attributes of the new components}
    $\Feat_\text{comp}, \Size_\text{comp}, \Coords_\text{comp} \leftarrow \AlgoFunction{component\_attributes}(\Indices_\text{comp}, \Feat, \Size, \Coords)$ \;
    
    $\SEdges_\text{comp}, \Edgeweights_\text{comp} \leftarrow \AlgoFunction{component\_graph}(\Indices_\text{comp}, \SEdges, \Edgeweights)$ \;

    \AlgoComment{recursive call}
    $\Indices_\text{rec} \leftarrow \AlgoFunction{\Merge}($
      $\Feat_\text{comp}, \Size_\text{comp}, \Coords_\text{comp}, \SEdges_\text{comp}, \Edgeweights_\text{comp}, \regul, \sizemin, k)$ \;
    
    \Return $\Indices_\text{rec}[\Indices_\text{comp}]$ \;
}
\caption{Component Merging with Low Contour Prior}
\label{algo:merge}
\end{algorithm}

\para{Stratified Transformer.}
We used the training logs provided in the official GiHub repository~\cite{githubstratifiedtransformer} to gather the inference time for Stratified Transformer~\cite{lai2022stratified}. 
The Stratified Transformer codebase points to the PointNet++ implementation~\cite{githubpointnetpp} for preprocessing S3DIS, which explains why these models share the same preprocessing times in \cref{tab:bigtable}.
Regarding inference times, the training and inference logs~\cite{githubstratifiedtransformer} illustrate that this method uses about $\times60$ test-time augmentations to produce the results communicated in the official publication. 
While Stratified Transformer might achieve acceptable performance and higher throughput without these multiple predictions, the fact that the communicated performance was assessed using test-time augmentations does not allow disentangling this step from the whole pipeline.

\para{Point Transformer v3.}
We ran the S3DIS preprocessing script from the official Point Transformer v3~\cite{wu2024point} implementation~\cite{githubptv3} to measure preprocessing times.
For measuring inference times, we referred to the official logs~\cite{huggingfaceptv3}.
Similar to Stratified Transformer, it is worth noting that the Point Transformer v3 implementation uses about $\times 200$ ensembled predictions with test-time augmentations, running on $4$ GPUs in its final paper, which explains the very low throughput of this approach.

\para{Superpoint Graph and SSP.}
We ran the code from the official repository~\cite{githubspg} to measure preprocessing, partition, and inference times of Superpoint Graph~\cite{landrieu2018spg} and SSP~\cite{landrieu2019point}.

\para{SPNet.}
We used the official implementation~\cite{githubspnet} of SPNet~\cite{hui2021superpoint} to compute preprocessing, partition, and inference times.
Since the codebase largely relies on the SPG~\cite{githubspg} implementation, some preprocessing steps are identical.
Note that the current state of the codebase only allowed us to run the partition on GPU architectures that tolerated the project CUDA and PyTorch dependencies.
This prevented experimentation on hardware with larger VRAM.
For this reason, our results in \cref{fig:over} were limited by GPU memory.
Still, these allow seeing the trend of the SPNet oracle mIoU against \MODELNAME~as the number for superpoint grows. 

\para{Superpoint Transformer.}
We used the official implementation~\cite{githubspt} for Superpoint Transformer~\cite{robert2023efficient} to measure preprocessing, partition, and inference times.

\section{Further Implementation details}
\label{sec:implem_details_partition}
We provide further details of the implementation for detecting the semantic transition \ref{subsec:transition} and training the semantic segmentation \ref{subsec:classification}.

\para{Detecting the semantic transition.}
We employ a focal loss~\cite{lin2017focal} with $\gamma=1$ to learn the semantic boundaries in all datasets.
The learning rate is set to $10^{-4}$ for S3DIS and $5.10^{-4}$ for KITTI-360 and DALES without any scheduler.
We set the weight decay to $10^{-4}$.

\para{Semantic segmentation.}
Since the point embeddings produced by our CNN have a higher dimensionality than the handcrafted features in the official SPT~\cite{robert2023efficient} implementation, we modify the first layer of the PointNet-like MLP of SPT from $[32, 64, 128]$ to $[48, 64, 128]$ channels.
}{}

\end{document}